\documentclass{article}
\usepackage{arxiv}

\usepackage[utf8]{inputenc} 
\usepackage[T1]{fontenc}    
\usepackage{hyperref}       
\usepackage{url}            
\usepackage{booktabs}       
\usepackage{amsfonts}       
\usepackage{nicefrac}       
\usepackage{microtype}      
\usepackage{lipsum}
\usepackage{graphicx}
\graphicspath{ {./images/} }

\usepackage{times}
\usepackage{soul}
\usepackage{url}

\usepackage[small]{caption}
\usepackage{graphicx}
\usepackage{amsmath}
\usepackage{amsthm}
\usepackage{booktabs}
\usepackage{algorithm}
\usepackage{algorithmic}
\usepackage[switch]{lineno}

\usepackage{bm}
\usepackage{amssymb}

\usepackage{threeparttable, array, float} 
\usepackage{color, colortbl}
\usepackage[capitalize]{cleveref}
\usepackage{nicefrac} 
\usepackage{xcolor}
\usepackage{graphicx}
\usepackage{float}
\usepackage{newfloat}
\usepackage{subcaption}
\usepackage{mathtools}
\usepackage{indentfirst}

\definecolor{dark-red}{rgb}{0.4, 0.15, 0.15}
\definecolor{dark-blue}{rgb}{0.15, 0.15, 0.4}
\definecolor{medium-red}{rgb}{0.5, 0, 0}
\definecolor{medium-blue}{rgb}{0, 0, 0.5}
\definecolor{light-red}{rgb}{0.7, 0, 0}
\definecolor{light-blue}{rgb}{0, 0, 0.7}

\definecolor{red}{HTML}{E51400} 
\definecolor{blue}{HTML}{0050EF} 
\definecolor{green}{HTML}{008A00} 
\definecolor{purple}{HTML}{AA00FF} 
\definecolor{orange}{HTML}{FF7F00}
\definecolor{gray}{HTML}{848482}
\definecolor{Gray}{gray}{0.85}
\definecolor{LightGray}{gray}{0.96}

\newtheorem{theorem}{Theorem}[section]

\newtheorem{assumption}{Assumption}[section]
\newtheorem{remark}{Remark}[section]
\newtheorem{lemma}{Lemma}[section]

\title{Scaling Federated Linear Contextual Bandits via Sketching}

\author{
Hantao Yang \\
  University of Science and Technology of China\\
  \texttt{yanghantao@mail.ustc.edu.cn} \\
   \And
Hong Xie\\
  University of Science and Technology of China\\
  \texttt{xiehong2018@foxmail.com} \\
  \And
Xutong Liu\\
  University of Washington Tacoma\\
  \texttt{liuxt2371@gmail.com} \\
  \And
Defu Lian\\
  University of Science and Technology of China\\
  \texttt{liandefu@ustc.edu.cn} \\
}

\begin{document}
\maketitle
\begin{abstract}
In federated contextual linear bandits, high data dimensionality incurs prohibitive computation and communication costs: local agents perform $O(d^3)$-time determinant computation and upload $O(d^2)$ parameters, making existing algorithms unscalable, where $d$ is the dimension of data. To relieve these scaling bottlenecks, this paper proposes Federated Sketch Contextual Linear Bandits (FSCLB). On the computation side, FSCLB uses SVD to indirectly obtain the determinant required for communication, eliminating the prohibitive cost of direct determinant calculation and cutting complexity from $O(d^3)$ to $O(l^2d)$ per round, where $l< d$ is the sketch size. 
On the communication side, FSCLB introduces a double-sketch strategy that reduces both upload and download costs 
from $O(d^2)$ to $O(ld)$. Naively involving sketch update into federated contextual linear bandits can destroy the local increment and invalidate the asynchronous communication condition; FSCLB solves this by replacing the covariance matrix with the sketch matrix when deciding whether to communicate. Theoretically, FSCLB achieves a regret bound of $\widetilde{O} ((\sqrt{d}+\sqrt{M\varepsilon_l})\sqrt{lT})$, where $\varepsilon_l$ is the upper bounded by the spectral tail of the covariance matrix; when $l$ exceeds the rank of the covariance matrix, the bound simplifies to $\widetilde{O}(\sqrt{ldT})$, matching the optimal no-sketch regret. Experiments on both synthetic and real-world datasets show that FSCLB significantly reduces computational and communication costs by over 90 \% while sacrificing only a negligible amount of cumulative reward. 
\end{abstract}


\section{Introduction}
The stochastic multi-armed bandit (MAB) problem is a classic strategy for solving the exploration–exploitation problem and is widely used in recommendation systems \cite{10.5555/944919.944941,Banditalgorithms}. Contextual linear bandits extend MAB by associating each arm with a context vector; the expected reward is linear in this context, governed by an unknown parameter. Strong regret bounds were established in \cite{OFUL}, and the linear bandit model has since been widely deployed in online recommendation \cite{10.1145/1772690.1772758}, web advertising \cite{Tang2013AutomaticAF}, and economics \cite{10.1145/3583681}.

Driven by data scattered across myriad clients and edge devices, federated learning (FL) \cite{pmlr-v54-mcmahan17a,MLSYS2020_1f5fe839} has emerged as the dominant paradigm for distributed machine learning, enabling joint training of a global model without sharing raw data. This privacy-preserving framework has naturally been extended to the bandit setting, giving rise to federated multi-armed bandits (FMAB) \cite{LI2024110663,10.5555/3722577.3722714}. FMAB has inspired federated contextual linear bandits, enabling a swarm of agents to cooperatively solve a global linear bandit while maintaining performance guarantees \cite{FedLinUCB}. 

Local computation and communication overhead directly determine the overall efficiency of federated learning. For the local computation, the ever-growing scale of data makes high-dimensional vectors a major source of prohibitive computation and communication costs, ultimately degrading the algorithmic efficiency. High-dimensional objects can quickly incur prohibitive local costs: a single 10 000 × 10 000-pixel medical image, when unfolded into 32-bit floats, already occupies 400 MB \cite{mathematicalcomputation}. In many real-world applications, the vectors that must be processed can easily reach thousands of dimensions \cite{zhang2025trafficsimulationsmulticitycalibration,KUMARAGE2023102804,OSORIO201918}. In previous work on federated contextual linear bandits, agents usually have to compute a matrix determinant each round to decide whether to communicate with the server; this determinant evaluation incurs $O(d^3)$ complexity, which is prohibitive when the dimension $d$ is large. On the other hands, FL may suffering from
the costly communication. In federated settings, communication efficiency must be considered during on-device training, as clients are often connected to the central server via slow connections ($\approx$ 1Mbps) \cite{FetchSGD}. The high cost of communication has motivated a recent interest in improving communication-efficient protocols for distributed and federated systems \cite{NEURIPS2024_0c433d14}. 

To tackle the above computation and communication overheads, sketching is proposed as an effective dimensionality-reduction technique that has been widely applied in areas such as federated learning \cite{FetchSGD} and low-rank approximation \cite{DBLP:journals/corr/TroppYUC16}. Sketching reduces dimensionality by compressing a large matrix into a compact sketch. \cite{SOFUL} combined the Frequent-Directions \cite{FD} sketch with contextual linear bandits, reducing the computational complexity of contextual linear bandits in high dimensions. 

While sketching mitigates large-scale computation and communication in FL, involving it into FMAB raises new challenges. Classic federated contextual linear bandits rely on covariance matrix growth to trigger uploads; a sketched matrix, however, truncates increments and can mask this growth, rendering the original trigger infeasible.  
Moreover, sketching only cuts the upload cost; if the server naively averages the received sketches for policy computation, the download cost remains $O(d^2)$. Finally, because every sketch is only an approximation, the aggregator must curb error accumulation to retain global accuracy.

In this paper, we proposed federated sketch contextual linear bandits (FSCLB) to reduce the computation cost and communication by involving sketch method. The main contributions of this paper are summarized as follows: 
\\
\textbf{Algorithmic framework.} 

1) \textit{Reducing computation with SVD.} For the computation cost, our algorithm lets each local agent compute the determinant indirectly: it merges the local sketch and server sketch, performs SVD on the resulting small matrix, and obtains the determinant from the singular values. This replaces the determinant calculation with an SVD that fits naturally into the sketch-update pipeline. After introducing SCFD, the SVD is executed only on the $O(l\times d)$ sketch matrix, reducing the original $O(d^3)$ to $O(l^2d)$ with sketch size $l<d$. 

2) \textit{Reducing communication with double sketch:} 
\begin{itemize}
\item  For the upload communication cost, local agents and the server exchange only the compact sketch matrix and the accumulated singular values instead of the full covariance matrix, cutting the per-round payload from $O(d^2)$ to $O(ld)$ ($l<d$ is the sketch size). 
\item For the download communication cost, we propose a double-sketch strategy to limit the communication cost during the download. If the server simply merges all the information, it still has to transmit a $d\times d$ matrix during the download phase. Therefore, after the local side has already sketched the covariance matrix, we perform an additional sketching step on the server once the individual sketch matrices are merged, and the server then carries out all computations using this double-sketched matrix. This keeps the download message aligned with the upload sketch matrix while guaranteeing that the server’s returned sketch matrix and the local sketch matrix share the same lower dimension, ensuring low communication overhead in both the upload and download.
\end{itemize}

3) \textit{Regret guarantee with SCFD.} In our algorithm, we adopt Spectral Compensation Frequent Directions (SCFD)\cite{SCFD} as the sketch-update strategy. SCFD not only accommodates our proposed dual-sketch strategy but also yields a tighter regret bound than the classical FD approach. On the other hand, SCFD is monotonically non-decreasing, which matches the asynchronous-communication rule that uploads occur only after sufficient local growth, ensuring asynchronous communication remains valid.
\\
\textbf{Theoretical analysis.} We show that FSCLB incurs $O(l^2d)$ per-round computation and transmits only $O(ld)$ scalars whenever communication occurs. We prove its regret is bounded by $\widetilde{O}((\sqrt{d}+\sqrt{M\varepsilon_l})\sqrt{lT})$, where $\varepsilon_l$ is upper bounded by the spectral tail of the covariance matrix. When the sketch size exceeds the rank of the original covariance matrix, $\varepsilon_l=0$ and the bound collapses to $O(\sqrt{ldT})$, matching the guarantee of FedLinUCB.
\\
\textbf{Empirical evaluation.} We conduct experiments on both synthetic and real-world datasets, and the results show that, compared with the no-sketch baseline, the FSCLB algorithm sacrifices only a marginal amount of reward while significantly reducing both computational and communication costs.

\begin{table*}[!ht]
 \caption{Summary of the main results}
		\label{tab:compare}	
	\centering
	\resizebox{1\columnwidth}{!}{
	\centering
	\begin{threeparttable}
	\begin{tabular}{|ccccc|}
 \hline
\textbf{Previously work} &\textbf{Asynchronism} &\textbf{Regret} & \textbf{Per-round computation cost} & \textbf{Total communication cost}\\
  \hline    
       \textbf{Async-LinUCB} \cite{AsynchronousUpperConfidenceBound} & Partially &$\widetilde{O}(d\sqrt{T})$ & $O(d^3)$  & $O(d^3M^2\log T)$\\ 
       \textbf{FedLinUCB} \cite{FedLinUCB} & Fully&$\widetilde{O}(d\sqrt{T})$ & $O(d^3)$  & $O(d^3M^2\log T)$\\
    \hline
       \rowcolor{Gray}
   \textbf{FSCLB} (Ours) &Fully &	$\widetilde{O}((\sqrt{d}+\sqrt{M\varepsilon_l})\sqrt{lT})$ & $O(l^2d)$ & $O\left(ld^2M^2\log\left((1+\varepsilon_l)T\right)\right)$\\
	\hline
	\end{tabular}
	  \begin{tablenotes}[para, online,flushleft]
Regarding the comparison of asynchronism, “Partially” denotes partial asynchrony that still relies on certain synchronization strategies to complete updates, whereas “Fully” indicates complete asynchrony whose update and communication procedures require no synchronization whatsoever. $l$ is the sketch size with $l<d$. $\varepsilon_l$ is upper bounded by the spectral tail of the covariance
matrix. According to \cite{SOFUL}, if the rank of the selected-arm sequence smaller than the sketch size, then  $\varepsilon_l=0$ and our algorithm achieves a regret of order $\widetilde{O}(\sqrt{ldT})$. 
	\end{tablenotes}
			\end{threeparttable}
	}
\end{table*}

\section{Related Work}
\noindent\textbf{Federated bandits.} In large-scale bandit tasks, a larger number of agents must collaborate to achieve more accurate and faster learning. To mitigate potential privacy leakage in multi-agent collaborative learning, federated bandits have attracted growing attention \cite{LI2024110663,10.5555/3722577.3722714}. FL enables multiple agents to perform asynchronous collaborative learning: a central server aggregates the data from all agents, performs the necessary computations, and returns the updated results to every agents \cite{DBLP:journals/corr/KonecnyMYRSB16,Sahu2018FederatedOI,9311906,10.5555/3692070.3692621}. The optimization goal of FMAB is to minimize the sum of regrets across all agents \cite{6763073,10.5555/3666122.3669392}. Fed2-UCB \cite{Shi_Shen_2021} and UCB-TCOM \cite{DBLP:conf/iclr/WangYCLHTL23} are two representative homogeneous FMAB algorithms; their work establishes the classic bandit framework under federated learning. As a further extension, \cite{MutiagentMutiarm} combines combinatorial multi-armed bandits with federated learning and provides a general multi-agent framework. \cite{BitLevel} proposes a novel federated multi-armed bandit algorithm tailored to fully distributed networks; thid method simultaneously minimizes both individual and group regret by communicating with immediate neighbors. Our algorithm also operates under communication constraints, but in contrast to \cite{BitLevel}, we enforce the stricter regime where no pair of agents may exchange messages at all.
\\
\textbf{Federated contextual linear bandits.} In FMAB, the contextual linear bandit \cite{OFUL} is a more general and challenging direction. Here, multiple agents jointly solve the linear contextual bandit problem. \cite{pmlr-v48-korda16} proposed the DCB algorithm for P2P networks. \cite{Wang2020Distributed} considered both P2P and star communication patterns and achieved an $\widetilde{O}(d\sqrt{T})$ regret under synchronous communication. In the asynchronous setting, \cite{AsynchronousUpperConfidenceBound} gave the first asynchronous algorithm for star-shaped federated contextual linear bandits, attaining the same regret order. \cite{FedLinUCB} further generalizes asynchronous communication, enabling truly asynchronous federated contextual linear bandits and establishing an equally tight regret bound under standard assumptions. Extensions to cluster bandits \cite{pmlr-v180-liu22a,ourAAAI} broaden their practical scope.
\\
\textbf{Sketch bandits.} Sketching is a classic dimension-reduction technique \cite{10.1145/3055399.3055431,pmlr-v80-andoni18a,BeyondJohnson}. Introducing sketching techniques into linear bandits can effectively reduce the computational cost of matrix inversion \cite{Yu_Lyu_King_2017,SOFUL,wen2024matrixsketchingbanditscurrent}. \cite{FD} proposed the Frequent Directions sketch (FD), which enjoys strong theoretical guarantees in streaming settings and is well-suited for online scenarios. \cite{SOFUL} integrates FD with the classical OFUL algorithm in linear contextual bandits, yielding the SOFUL algorithm that reduces the per-round update cost. \cite{SCFD} proposes a variant of the FD method and applies it to linear bandits, achieving a lower regret bound compared to SOFUL. In distributed or federated learning, sketching is used to reduce the dimensional of matrix multiplications and aggregation operations, thereby enabling more communication-efficient transmission protocols \cite{10.5555/3618408.3619750,NEURIPS2024_0c433d14}. 

In this paper, we incorporate sketching techniques into fully asynchronous federated contextual linear bandits to reduce both computational and communication costs; a comparison with prior works is provided in Table \ref{tab:compare}.

\section{Preliminaries}
\subsection{Notation and Definition}
In our paper, every d-dimensional vector is represented as a column vector. For matrix $\bm{A}$ and $\bm{B}$, $[\bm{A};\bm{B}]$ denotes their column-wise concatenation. $[\bm{A}]_k$ denotes the matrix formed by taking the first $k$ rows of matrix $\bm{A}$. For matrix $\bm{A}\in \mathbb{R}^{d_1\times d_2}$, the computed SVD of $\bm{A}$ is defined as $\bm{A} = \bm{U}\bm{\Sigma}\bm{V}^{\top}$, where $\bm{U}$ is $d_1\times d_2$ matrix, $\bm{\Sigma}$ is $d_2\times d_2$ diagonal matrix, $\bm{V}$ is $d_2\times d_2$ matrix. We use $SVD(\bm{A})$ to denote the singular value decomposition of matrix $\bm{A}$ and obtain the corresponding $\bm{U}$, $\bm{V}^{\top}$ and $\bm{\Sigma}$. Let $\sigma_i(\bm{A})$ be the $i$-th largest singular value of $\bm{A}$. For theoretical bounds, the notation $\widetilde{O}(\cdot)$ omits logarithmic factors.
\subsection{Problem Setting}
\noindent\textbf{Federated contextual bandits.} At each round $t\in [T]$, an arbitrary agent $m_t \in [M]$ is active for participation. Then, this agent receives a decision set $D_t \subset \mathbb{R}^d$. The agent selects an appropriate action $\bm{x}_t\in D_t$ based on historical information and receives the corresponding reward $r_t$. We assume that the reward $r_t$ satisfies $r_t=\left \langle \bm{x}_t,\bm{\theta}^* \right \rangle+\eta_t$ for all $t\in [T]$, where $\eta_t$ is conditionally independent of $\bm{x}_t$ given $x_{1:t-1},m_{1:t-1},r_{1:t-1}$. According to \cite{FedLinUCB}, we make the following assumption:
\begin{assumption}
The noise $\eta_t$ is R-sub-Gaussian conditioning on $x_{1:t-1},m_{1:t-1},r_{1:t-1}$.
\[
\mathbb{E}[e^{\lambda \eta_t}|x_{1:t-1},m_{1:t-1},r_{1:t-1}]\leq exp(R^2\lambda^2/2).
\]
\end{assumption}
\begin{assumption}
$\left \|\bm{\theta}^*  \right \|_2\leq S$, $\left \|\bm{x}  \right \|_2\leq L$ for all action $\bm{x}_t\in D_t$, for all $t\in [T]$.
\end{assumption}
The goal of the agents is to collaboratively minimize the cumulative regret defined as
\begin{align}
Regret(T)\nonumber
&:=\sum_{t=1}^{T}\left ( \max_{\bm{x}\in D_t}\left \langle\bm{x},\bm{\theta}^*\right \rangle-\left \langle\bm{x}_t,\bm{\theta}^*\right \rangle \right ).
\end{align}
where $\bm{x}_t^*=argmax_{\bm{x}\in D_t}\left \langle\bm{x},\bm{\theta}^*\right \rangle$ is the optimal arm.

We use $\bm{X}_t=[\bm{x}_1,...,\bm{x}_t]^{\top}$ to define the matrix of all actions selected up to round $t$ by an arbitrary algorithm. 
\\
\textbf{Communication model.} We adopt the communication protocol with $M$ local user agents and one central server, where each user can communicate with the server by uploading and downloading data. Local users do not communicate with each other directly. Moreover, we adopt the \textit{asynchronous communication} paradigm: (1) there is no mandatory synchronization, (2) the communication between an agent and the server operates independently of other agents, without triggering additional communication. The \textit{switching cost} in online learning and bandits is defined as  the number of times the user updates its policy of selecting an action from the decision set, which equals to the number of communications in our model \cite{FedLinUCB}. In addition to the number of communication rounds, the amount of data transferred between user and server in each round also contributes to communication overhead. Here we define the \textit{communication volume} as how much data is exchanged between user and server during one communication round. For example, in FedLinUCB, when a communication round occurs the agent must upload a $d\times d$ local correlation matrix to the server and download a $d\times d$ matrix (the inverse of the regularized correlation matrix) plus a $1\times d$ vector of estimated values; thus, the per-round communication volume of FedLinUCB is $d^2+d=O(d^2)$. We define our communication cost $\text{Com}(T)$ as the combination of communication volume and switching cost.
\begin{align}
\nonumber
\text{Com}(T)
&=\text{switching cost} \times \text{communication volume}.
\end{align}
\textbf{Sketch.} Maintaining the full matrix $\bm{X}_T$ incurs substantial computational and communication overhead. Let $l<d$ is the sketch size. The goal of sketching is to keep an approximate matrix $\bm{S}_T\in \mathbb{R}^{l\times d}$ in place of $\bm{X}_T$ for policy updates, where $\bm{S}_T$ satisfies
\[
\bm{S}_T^{\top}\bm{S}_T \approx \bm{X}_T^{\top}\bm{X}_T.
\]

Frequent Directions (FD) \cite{FD} is a classical sketch algorithm that truncates the smallest singular value after performing an SVD of the covariance matrix at every update. In each round $t$, FD updates as follows:
\begin{align}
\nonumber &[\bm{U},\bm{\Sigma},\bm{V}^{\top}]{=}SVD\left([\bm{S}_{t-1}^{\top};\bm{x}_t]^{\top}\right),\\
\nonumber &\delta_t{=}\sigma_l^2(\bm{\Sigma}), \bm{S}_{t}{=}\sqrt{\bm{\Sigma}^2-\delta_t\bm{I}}\bm{V}^{\top}.
\end{align}

The sketch matrix serves as an approximate representation of the original covariance matrix, some information is lost during the sketching process (e.g., the truncation of singular values), resulting in a discrepancy between the sketch and the original covariance matrix. We quantify this discrepancy using the spectral error defined in \cite{SOFUL}:
\[
\varepsilon_l=\min_{k=0,...,l-1}\frac{\lambda_{d-k}+\lambda_{d-k+1}+...+\lambda_{d}}{\lambda(l-k)}.
\]

With this definition, $\varepsilon_l$ is upper bounded by the spectral tail of the covariance matrix under $l$ sketch size. According to the Proposition 1 in \cite{SOFUL} and Theorem 3.1 in \cite{FD}, we known that the approximation matrix is bounded by:
\[
\left \|\bm{S}_T^{\top}\bm{S}_T - \bm{X}_T^{\top}\bm{X}_T  \right \|_2 \leq \lambda \varepsilon_l.
\]
\section{Algorithm}
In this section, we present the  Federated Sketch Contextual Linear Bandits (FSCLB) algorithm and detail the key implementation techniques. For clarity, the fundamental notation used in the algorithms is summarized in Table \ref{tab: notation}.
\begin{table}
    \centering
    \begin{tabular}{ll}
        \hline
        Notation& Definition \\
        \hline
        $\bm{\widetilde{V}}_{m,t}$& the matrix that user $m$ downloads from \\
              &the server at round $t$ to update its policy\\

        $\bm{S}_{m,t}^{loc},\widetilde{\bm{S}}_t^{ser}$& sketch matrix in user $m$, server at time $t$ \\

        $\bm{S}_{m,t}$& sketch matrix correspond to $\bm{\widetilde{V}}_{m,t}$ \\

		$\rho_{m,t}^{loc}$& the sum of all singular values truncated during\\
&sketch updates at user $m$ up to round $t$ \\

		$\rho_t^{ser}$& the sum of all $\rho_{m,t}^{loc}$ uploaded by every user to\\
&the server up to round $t$ \\
        \hline
    \end{tabular}
    \caption{Notations}
    \label{tab: notation}
\end{table}
\begin{algorithm}[!ht] 
    \caption{Federated Sketch Contextual Linear Bandits} 
	\label{alg:FSCLB}
    \textbf{Input:} $\delta,\alpha {>}0$, sketch size $l<d$.\\ 
    \textbf{Parameter:} For all $m{\in} [M]$: $\bm{\widetilde{V}}_{m,1}{=}\bm{\widetilde{V}}_{0}^{ser}{\leftarrow}\lambda \bm{I}_{d\times d}$, $\bm{S}_{m,1}{=}\bm{S}_{m,0}^{loc}{=}\widetilde{\bm{S}}_0^{ser}{\leftarrow}\bm{0}_{l\times d}$, $\rho_{m,0}^{loc}{=}\widetilde{\rho}_{0}^{ser}{=}\rho_{0}^{ser}{\leftarrow}0$, $\bm{\hat{\theta}}_{m,1}{\leftarrow}\bm{0}$, $\Delta_{m,1}{=}\Delta_{0}^{ser}{\leftarrow}0$, $\bm{b}_{0}^{ser}{=}\bm{b}_{m,0}^{loc}{\leftarrow}\bm{0}$, $\bm{\widetilde{\Sigma}}_{0}^{ser}{\leftarrow}\bm{0}$. \\
    \textbf{Output:} 
    \begin{algorithmic}[1]
    \FOR{round $t=1,...,T$}
	\STATE Activate $m_t$ and receive $D_t$
	\STATE $\bm{x}_t{\leftarrow} argmax_{\bm{x}\in D_t}\left\{\left \langle \bm{\hat{\theta}}_{m_t,t},\bm{x}\right \rangle{+}\beta_{m_t,t}(\delta)||\bm{x}||_{\bm{\widetilde{V}}_{m_t,t}^{-1}}\right \}$
	\STATE $m_t$ play $\bm{x}_t$ and receive $r_t$
	\STATE $(\bm{X}_{m_t,t}^{loc})^{\top}\bm{X}_{m_t,t}^{loc}{\leftarrow}(\bm{X}_{m_t,t-1}^{loc})^{\top}\bm{X}_{m_t,t-1}^{loc}{+}\bm{x}_t\bm{x}_t^{\top}$
	\STATE $\bm{S}_{m_t,t}^{loc}{\leftarrow}\left[(\bm{S}_{m_t,t-1}^{loc})^{\top};\bm{x}_t\right]^{\top}$, $\bm{b}_{m_t,t}^{loc}{\leftarrow}\bm{b}_{m_t,t-1}^{loc}{+}r_t\bm{x}_t$
	\STATE $[\bm{U}_1,\bm{\Sigma}_1,\bm{V}_1^{\top}] {\leftarrow} SVD(\bm{S}_{m_t,t}^{loc})$
	\STATE $\delta_{m_t,t}^{FL}{\leftarrow}\sigma_l^2(\bm{\Sigma}_1)$, $\rho_{m_t,t}^{loc}{\leftarrow}\rho_{m_t,t-1}^{loc}{+}\delta_{m_t,t}^{FL}$
	\STATE $\hat{\bm{\Sigma}}_1{\leftarrow}\sqrt{\max\{\bm{\Sigma}_1^2{-}\delta_{m_t,t}^{FL}\bm{I},0\}}$
    \STATE $\bm{S}_{m_t,t}^{loc}{\leftarrow}[\hat{\bm{\Sigma}}_1]_{l\times d}\bm{V}_1^{\top}\in \mathbb{R}^{l\times d}$ 
	\IF{$l< 0.4d$}
	\STATE $[\bm{U}_2,\bm{\Sigma}_2,\bm{V}_2^{\top}] {\leftarrow} SVD\left(\left[(\bm{S}_{m_t,t})^{\top};(\bm{S}_{m_t,t}^{loc})^{\top}\right]^{\top}\right)$
	\STATE $e_{t}^{(1)}{\leftarrow}(\lambda {+}\rho_{m_t,t}^{loc}{+}\Delta_{m_t,t})^{d-2l}$
    \STATE $e_{t}^{(2)}{\leftarrow}\prod_{i=1}^{2l}\left (\sigma_{i}^2(\bm{\Sigma}_2){+}\lambda{+}\rho_{m_t,t}^{loc}{+}\Delta_{m_t,t}\right)$
    \STATE $det_{m_t,t}^{(1)} {\leftarrow}e_{t}^{(1)}\cdot e_{t}^{(2)}$
	\ELSE
	\STATE $det_{m_t,t}^{(1)}{\leftarrow}det\left(\bm{\widetilde{V}}_{m_t,t}{+}(\bm{S}_{m_t,t}^{loc})^{\top}\bm{S}_{m_t,t}^{loc}{+}\rho_{m_t,t}^{loc}\bm{I}\right)$ 
	\ENDIF
	\IF{$det_{m_t,t}^{(1)}{>}(1{+}\alpha)det(\bm{\widetilde{V}}_{m_t,t})$}
	\STATE $m_t$ upload $\bm{S}_{m_t,t}^{loc},\rho_{m_t,t}^{loc}$ to server // upload phase
	\STATE Run Algorithm \ref{alg:communication}, and $m_t$ download the output
	\STATE $m_t$ compute \\$\bm{\widetilde{V}}_{m_t,t+1}^{-1}{\leftarrow}\frac{1}{\lambda{+}\Delta_{m_t,t+1}}\left (\bm{I}{-}\bm{S}_{m_t,t+1}^{\top}\bm{H}_{m_t,t+1}\bm{S}_{m_t,t+1} \right )$
	\STATE $m_t$ compute $\beta_{m_t,t+1}(\delta)$ as equation \ref{eq:confidence bound}
	\ELSE
	\STATE $\bm{\widetilde{V}}_{m_t,t+1}{\leftarrow}\bm{\widetilde{V}}_{m_t,t}$, $\bm{\widetilde{V}}_{t}^{ser}{\leftarrow}\bm{\widetilde{V}}_{t-1}^{ser}$, $\bm{\hat{\theta}}_{m_t,t+1}{\leftarrow}\bm{\hat{\theta}}_{m_t,t}$, \\$\bm{S}_{m_t,t+1}{\leftarrow}\bm{S}_{m_t,t}$, $\widetilde{\bm{S}}_{t}^{ser}{\leftarrow}\widetilde{\bm{S}}_{t-1}^{ser}$, $\bm{b}_{t}^{ser}{\leftarrow}\bm{b}_{t-1}^{ser}$,
    \\$\Delta_{m_t,t+1}{\leftarrow}\Delta_{m_t,t}$, $\Delta_t^{ser}{\leftarrow}\Delta_{t-1}^{ser}$, $\beta_{m_t,t+1}{\leftarrow}\beta_{m_t,t}$,
	\\$\rho_t^{ser}{\leftarrow}\rho_{t-1}^{ser}$, $\widetilde{\rho}_t^{ser}{\leftarrow}\widetilde{\rho}_{t-1}^{ser}$, 
	\ENDIF
	\FOR{all user $m\ne m_t$}
	\STATE $\bm{S}_{m,t}^{loc}{\leftarrow}\bm{S}_{m,t-1}^{loc}$, $\bm{b}_{m,t}^{loc}{\leftarrow}\bm{b}_{m,t-1}^{loc}$, $\rho_{m,t}^{loc}{\leftarrow}\rho_{m,t-1}^{loc}$, 
    \\$\bm{\widetilde{V}}_{m,t+1}^{-1}{\leftarrow}\bm{\widetilde{V}}_{m,t}^{-1}$, $\bm{S}_{m,t+1}{\leftarrow}\bm{S}_{m,t}$, $\bm{\hat{\theta}}_{m,t+1}{\leftarrow}\bm{\hat{\theta}}_{m,t}$, \\$\beta_{m,t+1}{\leftarrow}\beta_{m,t}$, $\Delta_{m,t+1}{\leftarrow}\Delta_{m,t}$, $\bm{H}_{m,t+1}{\leftarrow}\bm{H}_{m,t}$
	\ENDFOR
    \ENDFOR 
    \end{algorithmic}
    \end{algorithm}
\begin{algorithm}[!ht] 
    \caption{Operations Performed on Server} 
	\label{alg:communication}
    \textbf{Input}: $m_t$ upload $\bm{S}_{m_t,t}^{loc},\rho_{m_t,t}^{loc}$ // upload phase\\
    \textbf{Output}:
    \begin{algorithmic}[1] 
	\STATE $\rho_t^{ser}\leftarrow\rho_{t-1}^{ser}+\rho_{m_t,t}^{loc}$
	\STATE $(\bm{X}_{t}^{ser})^{\top}\bm{X}_{t}^{ser}\leftarrow(\bm{X}_{t-1}^{ser})^{\top}\bm{X}_{t-1}^{ser}+(\bm{X}_{m_t,t}^{loc})^{\top}\bm{X}_{m_t,t}^{loc}$ // This line is only for proof
	\STATE compute SVD: \\$[\bm{U},\bm{\Sigma},\bm{V}^{\top}]\leftarrow SVD\left(\left[(\widetilde{\bm{S}}_{t-1}^{ser})^{\top};(\bm{S}_{m_t,t}^{loc})^{\top}\right]^{\top}\right)$
	\STATE $\delta_t^{ser}\leftarrow\sigma_l^2(\bm{\Sigma})$, $\widetilde{\rho}_t^{ser}\leftarrow\widetilde{\rho}_{t-1}^{ser}+\delta_t^{ser}$
	\STATE $\hat{\bm{\Sigma}}\leftarrow\sqrt{\max\{\bm{\Sigma}^2-\delta_t^{ser}\bm{I},0\}}$
	\STATE $\widetilde{\bm{S}}_t^{ser}\leftarrow[\hat{\bm{\Sigma}}]_{l\times d}\bm{V}^{\top}\in \mathbb{R}^{l\times d}$
	\STATE $\Delta_t^{ser}\leftarrow\widetilde{\rho}_t^{ser}+\rho_t^{ser}$, $\bm{b}_{t}^{ser}\leftarrow\bm{b}_{t-1}^{ser}+\bm{b}_{m_t,t}^{loc}$
	\STATE $\bm{H}^{ser}_t\leftarrow(\hat{\bm{\Sigma}}^2+(\Delta_t^{ser}+\lambda)\bm{I})^{-1}$, $\bm{H}_{m_t,t+1}\leftarrow\bm{H}^{ser}_t$
	\STATE $\bm{\widetilde{V}}_{t}^{ser}\leftarrow(\lambda+\Delta_t^{ser})\bm{I}+(\widetilde{\bm{S}}_t^{ser})^{\top}\widetilde{\bm{S}}_t^{ser}$, $\bm{\widetilde{V}}_{m_t,t+1}\leftarrow\bm{\widetilde{V}}_{t}^{ser}$
	\STATE $\bm{S}_{m_t,t+1}\leftarrow\widetilde{\bm{S}}_t^{ser}$, $\bm{\hat{\theta}}_{m_t,t+1}\leftarrow(\bm{\widetilde{V}}_{t}^{ser})^{-1}\bm{b}_{t}^{ser}$
	\STATE $\Delta_{m_t,t+1}\leftarrow\Delta_t^{ser}$
	\STATE $\bm{S}_{m_t,t}^{loc}\leftarrow\bm{0}$, $\bm{b}_{m_t,t}^{loc}\leftarrow\bm{0}$, $\rho_{m_t,t}^{loc}\leftarrow 0$
	\STATE Compute and return $\bm{S}_{m_t,t+1},\bm{\hat{\theta}}_{m_t,t+1},det(\bm{\widetilde{V}}_{m_t,t+1})$, $\bm{H}_{m_t,t+1}, \Delta_{m_t,t+1}$ to $m_t$ // download phase
    \end{algorithmic}
    \end{algorithm}
\subsection{Determinant Calculation and Data Upload}
In FedLinUCB \cite{FedLinUCB}, the local user need to compute the determinant in each round to decide whether communication is required; traditional determinant computation incurs $O(d^3)$ complexity. In FSCLB, we compute the determinant of the corresponding matrix via its singular values. Note that the singular values of this covariance matrix are exactly the squares of the singular values of the matrix $\bm{X}_t$ formed by the sequence of arms selected by the algorithm. Since the determinant of a matrix equals the product of all its singular values, we can therefore compute the required determinant from the singular values of $\bm{X}_t$. However, directly computing these singular values is prohibitively expensive, so we employ a sketching method to obtain a matrix $\bm{S}_t$ whose singular values are easy to compute and which approximates $\bm{X}_t$; the determinant is then obtained by squaring these singular values, adding $\lambda$, and taking the product.

In FSCLB, each user needs two determinants: 1) the determinant of the matrix formed by summing the $\widetilde{\bm{V}}_{m,t}$ downloaded from the server and the user’s locally accumulated covariance matrix, and 2) the determinant of $\widetilde{\bm{V}}_{m,t}$. Since $\widetilde{\bm{V}}_{m,t}$ remains unchanged when no communication occurs, its determinant can be obtained from the server during the last communication round and then stored locally; hence no need to compute it. For the determinant in item 1), each user keeps a sketch matrix $(\bm{S}_{m,t}^{loc})^{\top}\bm{S}_{m,t}^{loc}$ that approximates the local covariance matrix; so the user only needs to maintain the $\bm{S}_{m,t}$ downloaded from the server. Moreover, the singular values of $\bm{S}_{m,t}^{\top}\bm{S}_{m,t}+(\bm{S}_{m,t}^{loc})^{\top}\bm{S}_{m,t}^{loc}$ are the squares of the singular values of the concatenated matrix $\left[\bm{S}_{m,t}^{\top};(\bm{S}_{m,t}^{loc})^{\top}\right]^{\top}$, which has size at most $2l\times d$.  
The SVD of $\left[\bm{S}_{m,t}^{\top};(\bm{S}_{m,t}^{loc})^{\top}\right]^{\top}$ costs $O(l^2d)$; squaring the resulting singular values and adding the regularization terms yields the determinant by a sequential product. When $l\ll d$ this is cheaper than the $O(d^3)$ cost of computing the determinant directly. We implement this procedure in Lines 12–15 of Alg \ref{alg:FSCLB}. We use $det(\bm{A})$ to denote the determinant of matrix $\bm{A}$, and we use $det_{m_t,t}^{(1)}$ as a shorthand for the determinant $det\left(\bm{\widetilde{V}}_{m_t,t} {+}(\bm{S}_{m_t,t}^{loc})^{\top}\bm{S}_{m_t,t}^{loc} {+} \rho_{m_t,t}^{loc}\bm{I}\right)$.
\begin{remark}
Although computing the determinant via singular values is asymptotically cheaper than the traditional determinant algorithm, the required sketch updates introduce extra cost. Our algorithm delivers clear gains when $l\ll d$; if the sketch size is close to the original dimension, the computational savings become marginal. We therefore insert conditional checks in the algorithm (line 11 in Algorithm \ref{alg:FSCLB}) to ensure that the cumulative cost of the extra SVD computations never exceeds the cost of the direct determinant calculation. Analysis shows that, when $l<0.4d$, the combined time complexity of all these additional steps is provably lower than that of computing the determinant directly.
\end{remark}

\subsection{SCFD}
We define the local covariance matrix $(\bm{X}_{m,t}^{loc})^{\top}\bm{X}_{m,t}^{loc}=\sum_{i=1,m_i=m}^{t}\bm{x}_i\bm{x}_i^{\top}$ and maintain its sketch via SCFD \cite{SCFD}. Unlike conventional FD, SCFD returns the truncated singular values to the sketch, yielding a more accurate estimate. In FSCLB, we maintain $\rho^{loc}_{m,t}$ to record the sum of singular values truncated in local; after each sketch update, we add $\rho^{loc}_{m,t}\bm{I}$ back into the sketch matrix $(\bm{S}_{m,t}^{loc})^{\top}\bm{S}_{m,t}^{loc}$ to approximate the local covariance matrix, and we use this sketch matrix to evaluate the determinant criterion. Specifically, SCFD updates produce the approximation:
\[
(\bm{S}_{m,t}^{loc})^{\top}\bm{S}_{m,t}^{loc}+\rho^{loc}_{m,t}\bm{I}\approx (\bm{X}_{m,t}^{loc})^{\top}\bm{X}_{m,t}^{loc}.
\]

We adopt SCFD instead of classical FD not only to tighten regret but also to keep the communication trigger valid. In FedLinUCB \cite{FedLinUCB}, an upload is triggered whenever the determinant of the cumulative covariance matrix has grown sufficiently. This demands a monotonically non-decreasing determinant, whereas the sketch produced by classical FD updates not only fails to guarantee monotonicity but may also truncate the local growth. Property 1 in \cite{SCFD} shows that SCFD updates not only satisfies the monotonicity requirement but also preserves the local growth information, satisfying the asynchronous communication rule and yielding the theoretical guarantee we need.

\subsection{Double Sketch on Server}
Standard SCFD updates the sketch matrix by concatenating a single vector; in contrast, on the server side of our model the sketches from different users are merged, so the server performs the sketch update via matrix concatenation—an operation that differs from the single-agent case.

We extend the SCFD to a multi-sketch merging update: on the server, after concatenating the incoming sketch matrices we apply a second sketching step, effectively performing a double-sketch (line 3-6 in Algorithm \ref{alg:communication}). This guarantees that the matrix returned to the users remains $l\times d$. A plain concatenation could increase the rank and destroy the desired computational and communication savings; the second sketch keeps the merged matrix within the required sketch size. 

After the double-sketch merge on the server, additional singular values are truncated. Following SCFD, we accumulate these extra truncated values into a scalar $\widetilde{\rho}_t^{ser}$ and add $\widetilde{\rho}_t^{ser}\bm{I}$ back to the merged sketch matrix in the same way. The singular values that must be restored to the sketch matrix on the server consist of two parts: 1) $\rho_t^{ser}$, the sum of all local truncation values uploaded by the users, and 2) $\widetilde{\rho}_t^{ser}$, the truncation values accumulated during the second sketch applied to the merged matrix. The server forms $\Delta_t^{ser}=\rho_t^{ser}+\widetilde{\rho}_t^{ser}$ and use $\Delta_t^{ser}$ to approximates the original covariance matrix:
\[
(\widetilde{\bm{S}}_t^{ser})^{\top}\widetilde{\bm{S}}_t^{ser}+\Delta_t^{ser}\bm{I} \approx (\bm{X}_{t}^{ser})^{\top}\bm{X}_{t}^{ser}.
\]
where $(\bm{X}_{t}^{ser})^{\top}\bm{X}_{t}^{ser}$ denotes the aggregate of all users' covariance matrices without sketching.

By performing sketch updates via SCFD and combining them with the sketch-merging procedure proposed above, we can construct the confidence interval for each user as:
\begin{align}
\label{eq:confidence bound}
\nonumber
\beta_{m,t}(\delta)&{=}\widetilde{M}\left ( R\sqrt{d\log{\left (\frac{1+TL^2/(\widetilde{\alpha}\lambda)}{\delta}\right )}}+\sqrt{\lambda}S \right )\\
&+\left(\sqrt{\lambda}+\sqrt{\frac{\Delta_{m,t}}{\lambda}}\right)S.
\end{align}
where $\widetilde{M}=\sqrt{1+M\alpha}+M\sqrt{2\alpha}$.
\begin{lemma}
With probability at least $1-\delta$, for any $0< t\leq T$ and $m\in [M]$, we have
\[
\left \|\hat{\bm{\theta}}_{m,t}-\bm{\theta}^*  \right \|_{\bm{\widetilde{V}}_{m,t}}\leq \beta_{m,t}(\delta).
\]
\end{lemma}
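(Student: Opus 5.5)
We decompose the estimation error around the matrix the server actually inverts. Fix a user $m$ and round $t$, and let $t'$ be the last round at which $m$ communicated, so $\widetilde{\bm V}_{m,t}=\widetilde{\bm V}^{ser}_{t'}=(\lambda+\Delta^{ser}_{t'})\bm I+(\widetilde{\bm S}^{ser}_{t'})^\top\widetilde{\bm S}^{ser}_{t'}$ and $\hat{\bm\theta}_{m,t}=\widetilde{\bm V}_{m,t}^{-1}\bm b^{ser}_{t'}$. Write $\bm G_{t'}=\lambda\bm I+(\bm X^{ser}_{t'})^\top\bm X^{ser}_{t'}$ for the unsketched uploaded Gram matrix and $\bm b^{ser}_{t'}=(\bm X^{ser}_{t'})^\top(\bm X^{ser}_{t'}\bm\theta^*+\bm\eta)$. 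Then
\[
\hat{\bm\theta}_{m,t}-\bm\theta^*=\widetilde{\bm V}_{m,t}^{-1}(\bm X^{ser}_{t'})^\top\bm\eta+\widetilde{\bm V}_{m,t}^{-1}\bigl(\bm G_{t'}-\lambda\bm I-\widetilde{\bm V}_{m,t}\bigr)\bm\theta^* ,
\]
and by the triangle inequality in $\|\cdot\|_{\widetilde{\bm V}_{m,t}}$ we bound a noise term $\|(\bm X^{ser}_{t'})^\top\bm\eta\|_{\widetilde{\bm V}_{m,t}^{-1}}$ and a bias term $\|(\widetilde{\bm V}_{m,t}-\bm G_{t'})\bm\theta^*\|_{\widetilde{\bm V}_{m,t}^{-1}}+\lambda\|\bm\theta^*\|_{\widetilde{\bm V}_{m,t}^{-1}}$.

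\textbf{Bias term.} First we establish the two-sided spectral relation $\bm G_{t'}\preceq \widetilde{\bm V}_{m,t}\preceq \bm G_{t'}+\Delta^{ser}_{t'}\bm I$ (up to the $\lambda$ bookkeeping). We use the SCFD property (Property 1 of \cite{SCFD}): each local update satisfies $\bm S^\top\bm S+\rho\bm I\succeq$ the true local Gram matrix, with the excess at most $\rho$ in spectral norm. We then prove the same for the server merge step. Concatenating $\widetilde{\bm S}^{ser}_{t-1}$ with $\bm S^{loc}_{m_t,t}$ and truncating by $\delta^{ser}_t$ removes at most $\delta^{ser}_t\bm I$ in the PSD order, and that amount is added back through $\widetilde\rho^{ser}_t$. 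An induction over communication rounds then gives the sandwich with total slack $\Delta^{ser}_{t'}=\rho^{ser}_{t'}+\widetilde\rho^{ser}_{t'}$. It follows that $\|\widetilde{\bm V}_{m,t}-\bm G_{t'}\|_2\le\Delta_{m,t}$, and using $\widetilde{\bm V}_{m,t}\succeq\lambda\bm I$ the bias is at most $(\Delta_{m,t}/\sqrt{\lambda})S+\sqrt\lambda S$. The stated form $\sqrt{\Delta_{m,t}/\lambda}$ needs a sharper step: write $\widetilde{\bm V}_{m,t}-\bm G_{t'}=\bm E$ with $0\preceq\bm E\preceq\Delta\bm I$ and $\bm E\preceq\widetilde{\bm V}_{m,t}$. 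Then $\|\bm E\bm\theta^*\|_{\widetilde{\bm V}^{-1}}\le\|\bm E^{1/2}\bm\theta^*\|\le\sqrt{\Delta}S$, which is even better than the target and in particular implies it when $\lambda\le 1$, or after absorbing constants.

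\textbf{Noise term.} We compare with the FedLinUCB argument \cite{FedLinUCB}. Since $\widetilde{\bm V}_{m,t}\succeq\bm G_{t'}$, we have $\|(\bm X^{ser}_{t'})^\top\bm\eta\|_{\widetilde{\bm V}_{m,t}^{-1}}\le\|(\bm X^{ser}_{t'})^\top\bm\eta\|_{\bm G_{t'}^{-1}}$. The remaining difficulty is that the data uploaded by time $t'$ is not a prefix of the global sequence; other users hold unuploaded local data, so the self-normalized bound of \cite{OFUL} does not apply directly. We follow FedLinUCB: bound this by the full-prefix quantity $\|\bm X_{t'}^\top\bm\eta_{1:t'}\|_{(\lambda\bm I+\bm X_{t'}^\top\bm X_{t'})^{-1}}$ plus the unuploaded local pieces. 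For each other user, the communication trigger has not fired, which (via the SCFD monotonicity and domination $\bm S^\top\bm S+\rho\bm I\succeq$ local Gram) gives $\det(\widetilde{\bm V}+\text{local Gram})\le(1+\alpha)\det\widetilde{\bm V}$. Hence the local Gram matrices are at most $\alpha$-fraction perturbations in the sense $\lambda\bm I+\bm X_{t'}^\top\bm X_{t'}\preceq(1+M\alpha)\bm G_{t'}$, and each local noise sum is controlled by $\sqrt{2\alpha}$ times a self-normalized norm. Summing over $M$ users produces the factor $\widetilde M=\sqrt{1+M\alpha}+M\sqrt{2\alpha}$. The self-normalized martingale bound with a union bound, with the $\widetilde\alpha$ in the log coming from the determinant ratio, gives $R\sqrt{d\log((1+TL^2/(\widetilde\alpha\lambda))/\delta)}+\sqrt\lambda S$.

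\textbf{Main obstacle.} The main obstacle is transferring the determinant-trigger argument when the trigger uses the sketch surrogate rather than the true local Gram matrix. I would rely on the one-sided domination $\bm S^\top\bm S+\rho\bm I\succeq\bm X^\top\bm X$, so that "no trigger" on the surrogate implies "no trigger" on the true matrix, and on the server sandwich above, which requires carefully verifying the merge step of Algorithm \ref{alg:communication}.
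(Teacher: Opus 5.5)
Your overall structure is the paper's: a bias/noise split (its terms (A) and (B)), with the SCFD sandwich $\bm{G}_{t'}\preceq\widetilde{\bm{V}}_{m,t}\preceq\bm{G}_{t'}+\Delta_{m,t}\bm{I}$ for the bias and a FedLinUCB-style comparison for the noise. There is, however, a genuine gap in the noise term. After you relax $\|(\bm{X}^{ser}_{t'})^{\top}\bm{\eta}\|_{\widetilde{\bm{V}}_{m,t}^{-1}}$ to the $\bm{G}_{t'}^{-1}$-norm, you need $\lambda\bm{I}+\bm{X}_{t'}^{\top}\bm{X}_{t'}\preceq(1+M\alpha)\bm{G}_{t'}$, and similarly $\alpha\lambda\bm{I}+\bm{\Sigma}^{loc}_{m',t'}\preceq 2\alpha\bm{G}_{t'}$.

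The untriggered condition does not give these. Even with the domination $\bm{S}^{\top}\bm{S}+\rho\bm{I}\succeq$ local Gram, it only yields $\bm{\Sigma}^{loc}_{m',t'}\preceq\alpha\widetilde{\bm{V}}_{m',s}\preceq\alpha\widetilde{\bm{V}}_{m,t}$, where $s<t'$ is the last round $m'$ was active. The trigger is measured against the sketched matrix, which carries the extra $\Delta\bm{I}$. Since $\widetilde{\bm{V}}_{m,t}\succeq\bm{G}_{t'}$ rather than $\preceq$, this bound does not transfer to $\bm{G}_{t'}$. Concretely, along a direction orthogonal to every uploaded arm, $\bm{G}_{t'}$ has eigenvalue $\lambda$ while $\widetilde{\bm{V}}$ has $\lambda+\Delta$. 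Another user can therefore accumulate local mass up to about $\alpha(\lambda+\Delta)$ there without triggering, which breaks your relation once $\Delta$ exceeds roughly $(M-1)\lambda$. Routing through $\bm{G}_{t'}$ would leave a $\Delta/\lambda$-dependent factor on the noise term.

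The repair is simply not to relax. Write $(\bm{X}^{ser}_{t'})^{\top}\bm{\eta}=\bm{y}^{all}_{t'}-\sum_{m'}\bm{y}^{loc}_{m',t'}$ and compare directly with $\widetilde{\bm{V}}_{m,t}$, using $\bm{\Sigma}^{all}_{t'}\preceq\bm{G}_{t'}+M\alpha\widetilde{\bm{V}}_{m,t}\preceq(1+M\alpha)\widetilde{\bm{V}}_{m,t}$ and $\alpha\lambda\bm{I}+\bm{\Sigma}^{loc}_{m',t'}\preceq 2\alpha\widetilde{\bm{V}}_{m,t}$. This gives exactly the factors $\sqrt{1+M\alpha}$ and $\sqrt{2\alpha}$, hence $\widetilde{M}$. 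It is what the paper does in term (B) via its covariance-comparison lemma.

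On the bias term, your $\bm{E}^{1/2}$ step is correct. It is in fact sounder than the paper's term (A), which reaches $\sqrt{\Delta/\lambda}$ only by passing from $\|\cdot\|_2\le\Delta+\lambda$ to $(\sqrt{\Delta}+\lambda)S/\sqrt{\lambda}$, a step that does not follow. But your bound $(\sqrt{\lambda}+\sqrt{\Delta_{m,t}})S$ is not ``even better than the target'' in general. It implies $\sqrt{\Delta_{m,t}/\lambda}\,S$ only when $\lambda\le 1$, and constants cannot be absorbed into an explicit $\beta_{m,t}(\delta)$.

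For $\lambda>1$ the stated form is actually unattainable. Suppose every played arm is orthogonal to a unit vector $\bm{u}$ and $\bm{\theta}^*=S\bm{u}$. The sketch rows lie in the span of the played arms, so $\bm{u}$ is an eigenvector of $\widetilde{\bm{V}}_{m,t}$ with eigenvalue $\lambda+\Delta_{m,t}$, and $\bm{u}^{\top}\hat{\bm{\theta}}_{m,t}=0$. Hence the error is at least $\sqrt{\lambda+\Delta_{m,t}}\,S$. This exceeds $\beta_{m,t}(\delta)$ once $\Delta_{m,t}$ is large: the remaining terms of $\beta_{m,t}(\delta)$ are only logarithmic in $T$, while $\Delta_{m,t}$ can grow linearly when the arms span more than $l$ directions. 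So state your conclusion with $\sqrt{\Delta_{m,t}}$ in place of $\sqrt{\Delta_{m,t}/\lambda}$ (or assume $\lambda\le 1$), rather than claiming the target as written.
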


\subsection{Computation and Communication}
The computational cost of FSCLB splits into local and server parts. As Section 5 shows, the total number of communication rounds is logarithmic, and the server only computes when communication occurs. Therefore, the per-round cost is dominated by local computation. In local user, the dominant computational cost comes from the SVD. As shown in \cite{FD}, SVD costs $O(l^2d)$ in each round. 

Regarding communication cost, both the determinant and the matrix used to compute the estimate can be reconstructed from their corresponding $l\times d$ sketch matrices. Moreover, following \cite{SCFD}, $\widetilde{\bm{V}}_{m,t}^{-1}$ can be obtained from the diagonal matrix $\bm{H}_{m,t}$ generated during the server-side sketch update together with the sketch $\bm{S}_{m,t}$ (line 22 in Alg \ref{alg:FSCLB}). To compute $\beta_{m,t}(\delta)$, the local user only needs to know the value of $\Delta_{m_t,t}$. Because the matrix $\bm{H}$ obtained on the server is diagonal, only its $l$ entries are sent; sending the sketch itself requires $l\times d$ elements. Hence, local user uploads and downloads $O(ld)$ communication volume, which is smaller than the previous $O(d^2)$ communication volume.

\begin{remark}
Note that in the communication-check step of FSCLB (line 19 in Algorithm \ref{alg:FSCLB}), $det_{m_t,t}^{(1)}$ must be compared with $det(\bm{\widetilde{V}}_{m_t,t})$; in fact, $det(\bm{\widetilde{V}}_{m_t,t})$ does not need to be computed locally. During the download phase (line 13 in Algorithm \ref{alg:communication}), the server can send the pre-computed $det(\bm{\widetilde{V}}_{m_t,t})$ to the local agent, and since the matrix $\bm{\widetilde{V}}_{m_t,t}$ remains unchanged until the next communication round, $det(\bm{\widetilde{V}}_{m_t,t})$ stays constant and do not need to be re-calculated.
\end{remark}

\section{Regret Analysis}
\begin{theorem}
With probability
at least $1-\delta$, the expected cumulative regret of Algoritm \ref{alg:FSCLB} is upper bounded by
\begin{align}
Regret(T)
\nonumber
&\leq 2dSLM\ln(1+TL^2/\lambda)\\
&+2\beta\sqrt{2(1+M\alpha)}\cdot \widetilde{f}(l,T).
\end{align}
where
\[
\widetilde{f}(l,T)=\sqrt{2lT\ln\left (1+\frac{TL^2}{l\lambda} \right )+2dT\ln\left (1+\varepsilon_l \right )},
\]
\[
\beta=\widetilde{O}\left((1+M\sqrt{\alpha}+\sqrt{(1/\alpha+M)\varepsilon_l})\sqrt{d}\right).
\]

Additionally, the total communication times is bounded by 
\[
2d(M+1/\alpha)\log\left( (1+\varepsilon_l)\left(1+\frac{ TL^2}{\lambda d}\right)\right).
\]

The total computation cost is bounded by 
\[
O\left(\min\{2l,d\}^2\left(dT+(M+1/\alpha)d^2\log\left((1+\varepsilon_l)T\right)\right)\right).
\]

The total communication cost is bounded by 
\[
O\left(ld^2(M+1/\alpha)\log\left((1+\varepsilon_l)T\right)\right).
\]
\end{theorem}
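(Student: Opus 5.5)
We follow the FedLinUCB analysis and replace every covariance-determinant argument by its sketched counterpart. We use the confidence lemma (Lemma~4.1) and the SCFD property that the sketched matrix is monotone and sandwiches the true one. The ingredients are these. First, the SCFD/FD guarantee stated in the Preliminaries, $\|\bm{S}^{\top}\bm{S}-\bm{X}^{\top}\bm{X}\|_2\le\lambda\varepsilon_l$, together with the compensation terms $\rho$ and $\Delta$. Second, an extension of that guarantee to the server merge. Third, the elliptical potential lemma applied to a sketched matrix of effective rank $l$. Fourth, a determinant-counting argument for the number of communications.

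\textbf{Step 1 (sandwich for the merged sketch).} Let $\bm{G}_t=\lambda\bm{I}+\sum_{s\le t}\bm{x}_s\bm{x}_s^{\top}$ be the full Gram matrix over all agents. We first show that SCFD compensation keeps the local sketches and the server double sketch inside a spectral band. On the server, each merge performs one SCFD step on a stacked matrix, and the truncated mass $\delta_t^{ser}$ is added back into $\widetilde\rho^{ser}$. We would verify the following two facts:
\[
(\widetilde{\bm{S}}_t^{ser})^{\top}\widetilde{\bm{S}}_t^{ser}+\Delta_t^{ser}\bm{I}\succeq (\bm{X}_t^{ser})^{\top}\bm{X}_t^{ser}\quad\text{(SCFD overcompensates),}
\]
and $\Delta_t^{ser}\le (M+1)\lambda\varepsilon_l$ up to constants. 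The second fact holds because the local truncations and the second-level truncations are each controlled by the Liberty/SOFUL bound, and there are at most $M$ local streams plus one server stream. The second-level truncations can be charged to the same spectral tail by the mergeability of FD. This gives $\bm{\widetilde V}\preceq \bm{G}+c M\lambda\varepsilon_l\bm{I}$, so $\det \bm{\widetilde V}\le(1+\varepsilon_l)^{O(d)}\det\bm{G}$. This is where the factors $\ln(1+\varepsilon_l)$ and $\sqrt{M\varepsilon_l}$ in $\beta$ come from: substituting $\Delta_{m,t}\lesssim M\lambda\varepsilon_l$ into equation \eqref{eq:confidence bound} and using $\widetilde M=O(1+M\sqrt\alpha)$ produces the stated $\beta$.

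\textbf{Step 2 (regret).} Split the rounds into two classes, following FedLinUCB.
\begin{itemize}
\item Rounds in epochs where the determinant of the global sketched matrix doubled. These number at most $O(dM\ln(1+TL^2/\lambda))$ bad rounds per agent block. Each costs at most $2SL$, giving the term $2dSLM\ln(1+TL^2/\lambda)$.
\item Good rounds. On these, the communication rule of line 19, together with monotonicity of SCFD, gives $\bm{\widetilde V}_{m_t,t}\succeq \frac{1}{1+M\alpha}\bm{W}_t$, where $\bm{W}_t$ is the all-data sketched matrix. Hence the optimism argument with Lemma~4.1 yields an instantaneous regret of at most $2\beta\sqrt{1+M\alpha}\,\|\bm{x}_t\|_{\bm{W}_{t}^{-1}}$.
\end{itemize}
Summing with Cauchy--Schwarz, it remains to bound $\sum_t\min(1,\|\bm{x}_t\|^2_{\bm{W}_t^{-1}})\le 2\ln\det\bm{W}_T/\det\bm{W}_0$. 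Writing $\bm{W}_T=(\lambda+\Delta)\bm{I}+\bm{S}^{\top}\bm{S}$ with $\bm{S}$ of rank at most $l$, the log-determinant splits into two parts. The first is $l\ln(1+TL^2/(l\lambda))$, from the $l$ nontrivial eigenvalues via AM--GM. The second is $d\ln(1+\Delta/\lambda)\le d\ln(1+\varepsilon_l)$, from the scalar shift. Together they give $\widetilde f(l,T)$.

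\textbf{Step 3 (communication and computation).} Each upload by agent $m$ certifies that the sketched determinant grew by a factor $1+\alpha$ relative to that agent's last download. We use the FedLinUCB pigeonhole argument. Partition time into the epochs in which $\det\bm{W}$ doubles; their number is $\log_2(\det\bm{W}_T/\det\lambda\bm{I})\le d\log((1+\varepsilon_l)(1+TL^2/(\lambda d)))$. Within an epoch, each agent communicates at most about $1+1/(\alpha M)$ times. This gives the bound $2d(M+1/\alpha)\log(\cdot)$. The cost bounds then follow by multiplication:
\begin{itemize}
\item Per-round local work is $O(\min\{2l,d\}^2d)$.
\item Server work per communication is the SVD plus forming $\bm{\widetilde V}^{ser}$, at most $O(\min\{2l,d\}^2d)$ times $d$.
\item Each communication has volume $O(ld)$. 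Multiplying by the count gives $O(ld\cdot d(M+1/\alpha)\log((1+\varepsilon_l)T))$.
\end{itemize}

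\textbf{Main obstacle.} The delicate step is Step 1 on the server. The double sketch merges stacked sketches rather than single vectors, so we must show that the cumulative server truncation $\widetilde\rho^{ser}$ is still bounded by the spectral tail, and that uploading $\rho^{loc}$ and then resetting it does not double count. We would first try to view the whole server process as FD applied to the concatenated stream of all rows of the uploaded sketches. The FD error bound for that stream then applies. We would then relate the tail of that stream's Gram matrix to the tail of $\bm{X}^{\top}\bm{X}$, using the local bounds via Weyl's inequality.
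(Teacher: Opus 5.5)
Your Steps~1 and~3 follow the paper's outline. Your plan for $\widetilde\rho^{ser}$ (treat the server as FD on the stream of uploaded rows, then compare tails via Weyl) is essentially the paper's Lemma~\ref{double sketch error}. There are, however, two gaps.

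\textbf{Step 2: the rank-$l$ split fails.} The gap is where you write $\bm W_T=(\lambda+\Delta)\bm I+\bm S^{\top}\bm S$ with $\operatorname{rank}(\bm S)\le l$. Consider the matrix for which the trigger on line~19 actually gives $\widetilde{\bm V}_{m_t,t}\succeq\frac{1}{1+M\alpha}\bm W_t$, and which dominates $\bm W_{t-1}+\bm x_t\bm x_t^{\top}$ as the potential lemma requires. That matrix is $\bm W_t=\widetilde{\bm V}_t^{ser}+\sum_{m'}\big((\bm S^{loc}_{m',t})^{\top}\bm S^{loc}_{m',t}+\rho^{loc}_{m',t}\bm I\big)$. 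Its non-scalar part is the server sketch plus up to $M$ outstanding local sketches, so it has rank up to $(M+1)l$. AM--GM then gives $(M+1)l\ln(\cdot)$, a $\sqrt{M}$ loss against $\widetilde f(l,T)$. The server's double sketch alone is rank-$l$-plus-scalar, but it is frozen between communications, so the potential lemma cannot be run on it.

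The missing idea, which is the paper's route, is to pass to the exact Gram matrix $\bm\Sigma^{all}_t=\lambda\bm I+\sum_{s\le t}\bm x_s\bm x_s^{\top}$:
\begin{itemize}
\item SCFD overcompensation, locally and on the server (Lemmas~\ref{FL SCFD inequality} and~\ref{sketch covariance comparison}), gives $\|\bm x_t\|_{\widetilde{\bm V}_{m_t,t}^{-1}}\le\sqrt{1+M\alpha}\,\|\bm x_t\|_{(\bm\Sigma^{all}_t)^{-1}}$.
\item The potential lemma is applied to $\bm\Sigma^{all}$, which has exact rank-one increments.
\item Only at the end is $\ln\det\bm\Sigma^{all}_T$ bounded, through a \emph{hypothetical} single-stream FD sketch of all of $\bm x_1,\dots,\bm x_T$. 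It satisfies $\bm X_T^{\top}\bm X_T\preceq\bar{\bm S}^{\top}\bar{\bm S}+\bar\rho_T\bm I$ with $\operatorname{rank}(\bar{\bm S})\le l$ and $\bar\rho_T\le\lambda\varepsilon_l$.
\end{itemize}
That is the source of $l\ln(1+TL^2/(l\lambda))+d\ln(1+\varepsilon_l)$. The $\varepsilon_l$ term comes from $\bar\rho_T$, not from the algorithm's $\Delta$; your own $\Delta\lesssim M\lambda\varepsilon_l$ would give $d\ln(1+cM\varepsilon_l)$ instead. For the same reason, the doubling epochs behind $2dSLM\ln(1+TL^2/\lambda)$ must be defined via $\det\bm\Sigma^{all}_t$, or their count picks up $\Delta$.

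\textbf{Step 1: the bound on $\Delta$ is unsupported.} Your justification of $\Delta^{ser}_t\lesssim(M+1)\lambda\varepsilon_l$ assumes $M$ local streams. But Algorithm~\ref{alg:communication} (line~12) resets $\bm S^{loc}$ and $\rho^{loc}$ after every upload. So $\rho^{ser}_t$ accumulates the shrinkage of $B_c$ separate FD runs, not $M$ streams. Bounding run by run gives only $\rho^{ser}_T\le B_c\lambda\varepsilon_l$, which is what the paper proves. Combined with $B_c=\widetilde O(d(M+1/\alpha))$ from the communication count, this is exactly where the $\sqrt{(1/\alpha+M)\varepsilon_l}\,\sqrt d$ term in $\beta$ comes from. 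The paper breaks the resulting $\Delta$--$B_c$ circularity in the determinant count with the crude $B_c\le T$.

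Your bound is in fact true, even in the stronger form $\rho^{ser}_T\le\lambda\varepsilon_l$, but it needs an argument you do not give. Define $\mathrm{tail}_k(\bm G)=\sum_{i>k}\lambda_i(\bm G)=\min_{P}\operatorname{tr}((\bm I-P)\bm G)$ over rank-$k$ projections $P$. This quantity is superadditive over PSD summands. Each run $e$ has total shrinkage at most $\mathrm{tail}_k(\bm G_e)/(l-k)$ for every $k<l$. Since the runs partition the data, summing at fixed $k$ and then minimizing over $k$ bounds the total by $\lambda\varepsilon_l$.
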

For the upper bound of $\beta_{m,t}$, we can show that
\begin{lemma}
For all $m\in[M]$ and all $t\in [T]$, it holds that
\[
\beta_{m,t}(\delta)\leq \beta.
\]
\end{lemma}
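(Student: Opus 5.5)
The plan is to bound the only data-dependent term in \eqref{eq:confidence bound}, namely $\sqrt{\Delta_{m,t}/\lambda}$. The first term is deterministic: it has the form $\widetilde{M}\cdot\widetilde{O}(\sqrt{d})$ with $\widetilde{M}=\sqrt{1+M\alpha}+M\sqrt{2\alpha}=O(1+M\sqrt{\alpha})$, which is already absorbed into $\widetilde{O}((1+M\sqrt{\alpha})\sqrt{d})$. The term $\sqrt{\lambda}S$ is a constant, so it is absorbed too.

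It therefore suffices to show
\[
\Delta_{m,t}=\widetilde{O}\bigl((1/\alpha+M)\,d\,\lambda\,\varepsilon_l\bigr),
\]
uniformly in $m$ and $t$. Since $\sqrt{a+b}\le\sqrt a+\sqrt b$, this gives the required $\sqrt{(1/\alpha+M)\varepsilon_l}\sqrt{d}$ contribution.

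\textbf{Reduction to server quantities.} By Algorithm \ref{alg:communication}, each $\Delta_{m,t}$ equals some $\Delta_{s}^{ser}=\rho_s^{ser}+\widetilde{\rho}_s^{ser}$ with $s\le t$. Both accumulators are nondecreasing, so it is enough to bound $\rho_T^{ser}$ and $\widetilde{\rho}_T^{ser}$.

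\textbf{Bounding $\rho_T^{ser}$.} This is a sum, over all communication events, of the local accumulated truncations $\rho^{loc}$ of the uploading agent, and each $\rho^{loc}$ is reset after upload. Each local segment is a single-stream SCFD run. By the FD/SCFD guarantee quoted in the preliminaries (Prop.~1 of \cite{SOFUL}, Thm.~3.1 of \cite{FD}), its accumulated truncation is at most $\lambda\varepsilon_l$; more precisely, the sum of the shrinkages $\delta$ is bounded by the spectral tail, which defines $\varepsilon_l$ up to the factor $\lambda$.

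\textbf{Bounding $\widetilde{\rho}_T^{ser}$.} The server sketch is an FD-type merge of the uploaded sketches. FD is mergeable, so the total server shrinkage is bounded by the tail mass of the merged stream, which is again $\lambda\varepsilon_l$ interpreted for the global covariance.

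\textbf{Combining.} Multiplying the per-upload bound by the number of communications from the Theorem, $2d(M+1/\alpha)\log((1+\varepsilon_l)(1+TL^2/(\lambda d)))$, gives
\[
\Delta_{m,t}\le \widetilde{O}\bigl(d(M+1/\alpha)\lambda\varepsilon_l\bigr).
\]

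\textbf{Main obstacle.} The difficulty is justifying that each upload contributes at most $O(\lambda\varepsilon_l)$ and not something growing with the segment length. It is also unclear whether the communication count can be used here without circularity, since that count itself depends on $\varepsilon_l$ only logarithmically.

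My first attempt would be a direct argument. By the FD trace argument, the sum of shrinkages in any SCFD run on rows $\bm{X}$ satisfies $\sum\delta\le \min_k \|\bm{X}-\bm{X}_k\|_F^2/(l-k)$, and this is $\le\lambda\varepsilon_l$ by the definition of $\varepsilon_l$. A local segment's rows form a subset of all rows, and the tail of a sub-Gram matrix is dominated by the tail of the full Gram matrix. This yields the per-upload bound.

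If this monotonicity fails, I would fall back on bounding $\Delta$ crudely through the communication count as above. I would accept that the count bound is stated independently in the Theorem, which is proved from the determinant trigger and does not need $\beta$, so no circularity arises.
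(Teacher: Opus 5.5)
Your proposal is correct and follows essentially the same route as the paper: it too bounds $\Delta_{m,t}\le\rho_T^{ser}+\widetilde{\rho}_T^{ser}\le (B_c+1)\lambda\varepsilon_l$. Both steps use the FD/SOFUL tail bound together with Weyl monotonicity: once per local upload segment, and once for the server's second sketch, whose input Gram matrix is dominated by $\bm{X}_T^{\top}\bm{X}_T$. The paper then substitutes the communication-count bound to obtain $\sqrt{\Delta_{m,t}/\lambda}=\widetilde{O}\bigl(\sqrt{(M+1/\alpha)d\varepsilon_l}\bigr)$, exactly as you do. Your circularity concern resolves the way you suggest, since the count bound needs $\Delta_T^{ser}$ only inside a logarithm, where the trivial estimate $B_c\le T$ suffices.
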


According to \cite{SOFUL}, if we set $l\geq rank(\bm{X}_T^{\top}\bm{X}_T)$, then $\varepsilon_l=0$. Furthermore, if we set $\alpha=1/M^2$ additionally, we can get $Regret(T)=\widetilde{O}(\sqrt{ldT})$. The detailed proof is provided in the Appendix.

\section{Experiment}
We compare our algorithm against two baseline: 1) FedLinUCB \cite{FedLinUCB}, 2) Random, in each round, every active user selects arm randomly. We ran 20 independent trials of 20 000 rounds.

\subsection{Simulation Dataset}
We first validate our conclusions on simulated data. In simulation datasets, we set the dimension to $d = 50$ and $100$, and the corresponding sketch size $l$ to 20 and 40. Each round contains 10 arms, $M = 10$ users, and we set $\alpha = 1$. We compare the algorithms in terms of total reward, total computation cost, and overall communication cost $Com(T)$. 1) Reward: the cumulative difference between the reward of the arm selected in each round and the reward that would have been obtained by the optimal arm. 2) Computation: the running time (in seconds) required to run the algorithm. 3) Communication cost $Com(T)$ (as defined in Section 3.2): the cumulative number of scalar elements uploaded and downloaded across all communication rounds (e.g., entries of matrices and vectors). The results are reported in Figures \ref{fig:reward},\ref{fig:time} and \ref{fig:communication}.

Experiments confirm the effectiveness of FSCLB. Compared with FedLinUCB, the total reward of FSCLB is slightly lower; this is because the SCFD module approximates the original covariance matrix with a sketched one, inevitably incurring some information loss. The benefit is a marked reduction in computation and communication costs. As can be seen, both running time and communication complexity are clearly reduced relative to the conventional approach, while the accompanying drop in reward is negligible.

\begin{figure}[!ht]
\centering
\begin{subfigure}{.23\textwidth}
  \centering
  \includegraphics[width=\linewidth,height=1\textwidth]{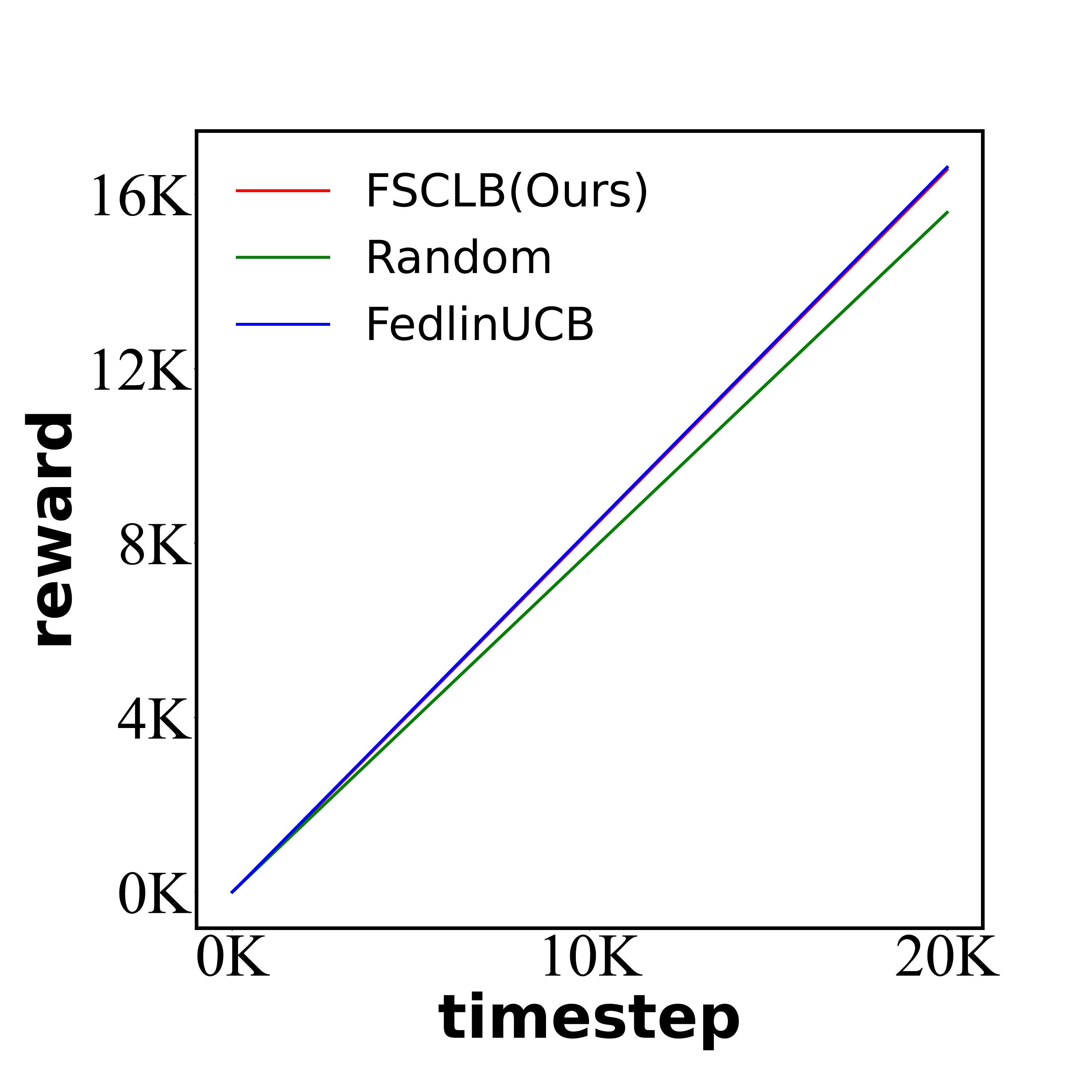}  
  \caption{50 dimension}
\end{subfigure}
\begin{subfigure}{.23\textwidth}
  \centering
  \includegraphics[width=\linewidth,height=1\textwidth]{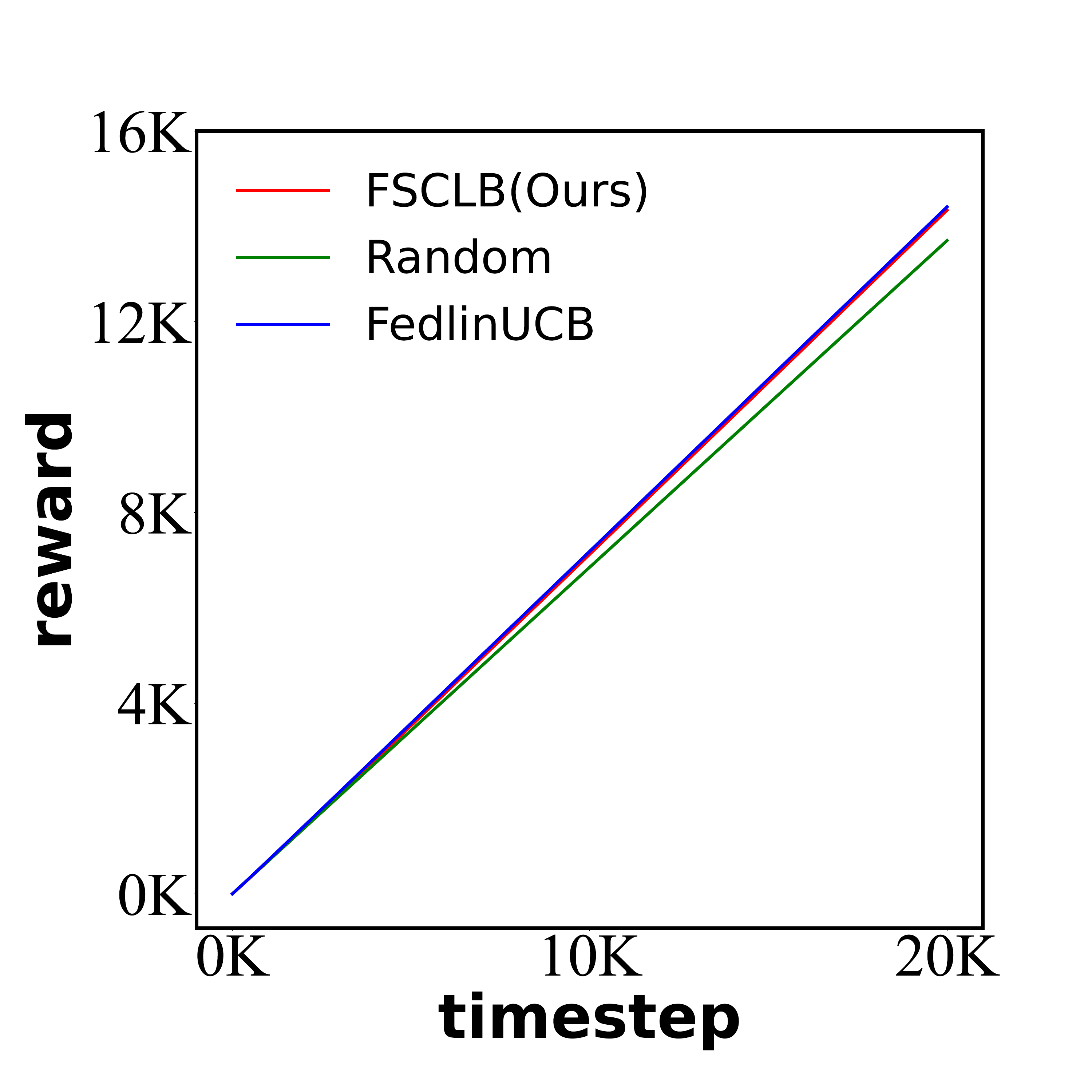}  
  \caption{100 dimension}
\end{subfigure}
\caption{Reward in simulation dataset}
\label{fig:reward}
\end{figure}
\begin{figure}[!ht]
\centering
\begin{subfigure}{.23\textwidth}
  \centering
  \includegraphics[width=\linewidth,height=1\textwidth]{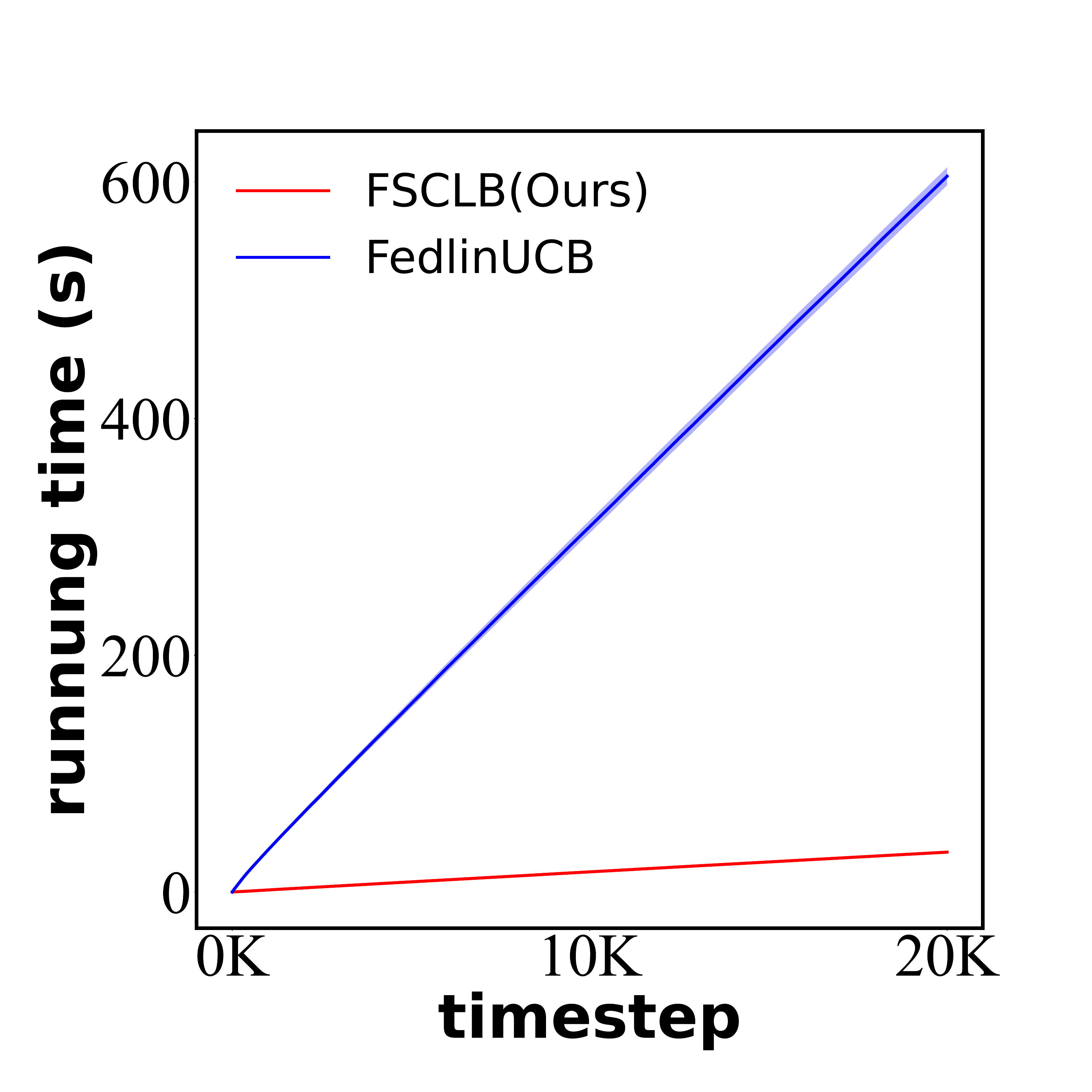}  
  \caption{50 dimension}
  \end{subfigure}
\begin{subfigure}{.23\textwidth}
  \flushleft
\includegraphics[width=\linewidth,height=1\textwidth]{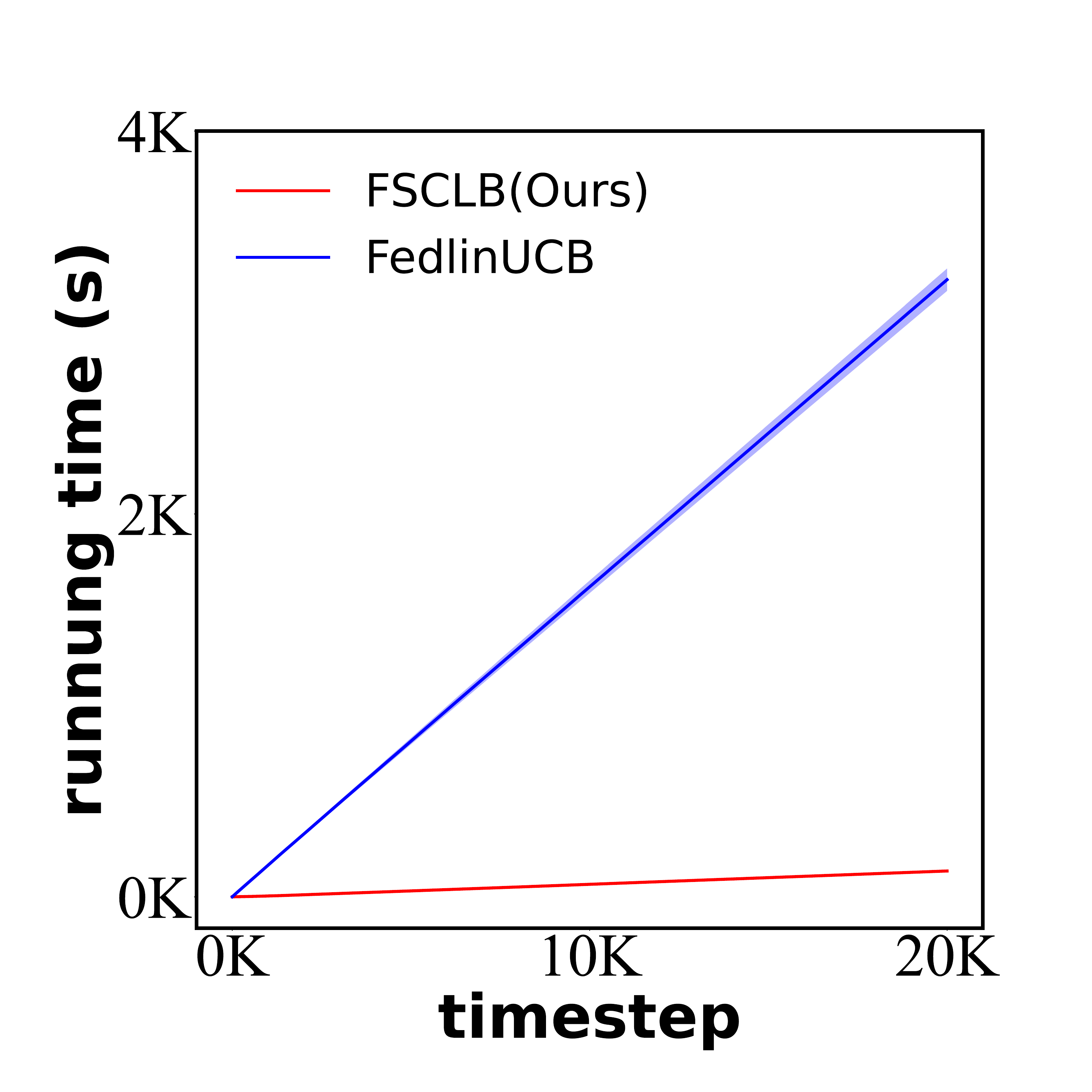}  
  \caption{100 dimension}
\end{subfigure}
\caption{Computation cost in simulation dataset}
\label{fig:time}
\end{figure}
\begin{figure}[!ht]
\centering
\begin{subfigure}{.23\textwidth}
  \centering
  \includegraphics[width=\linewidth,height=1\textwidth]{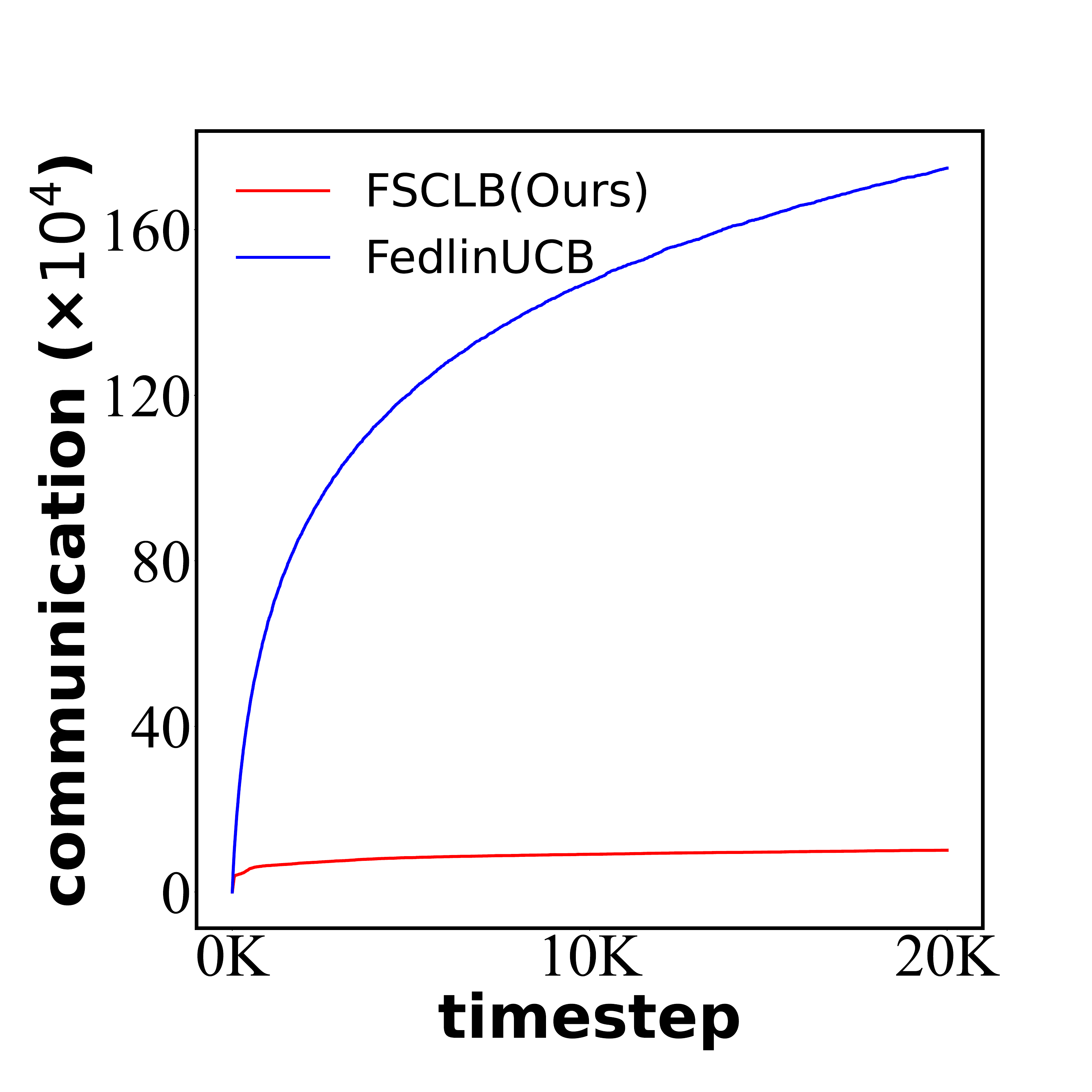}  
  \caption{50 dimension}
\end{subfigure}
\begin{subfigure}{.23\textwidth}
  \centering
  \includegraphics[width=\linewidth,height=1\textwidth]{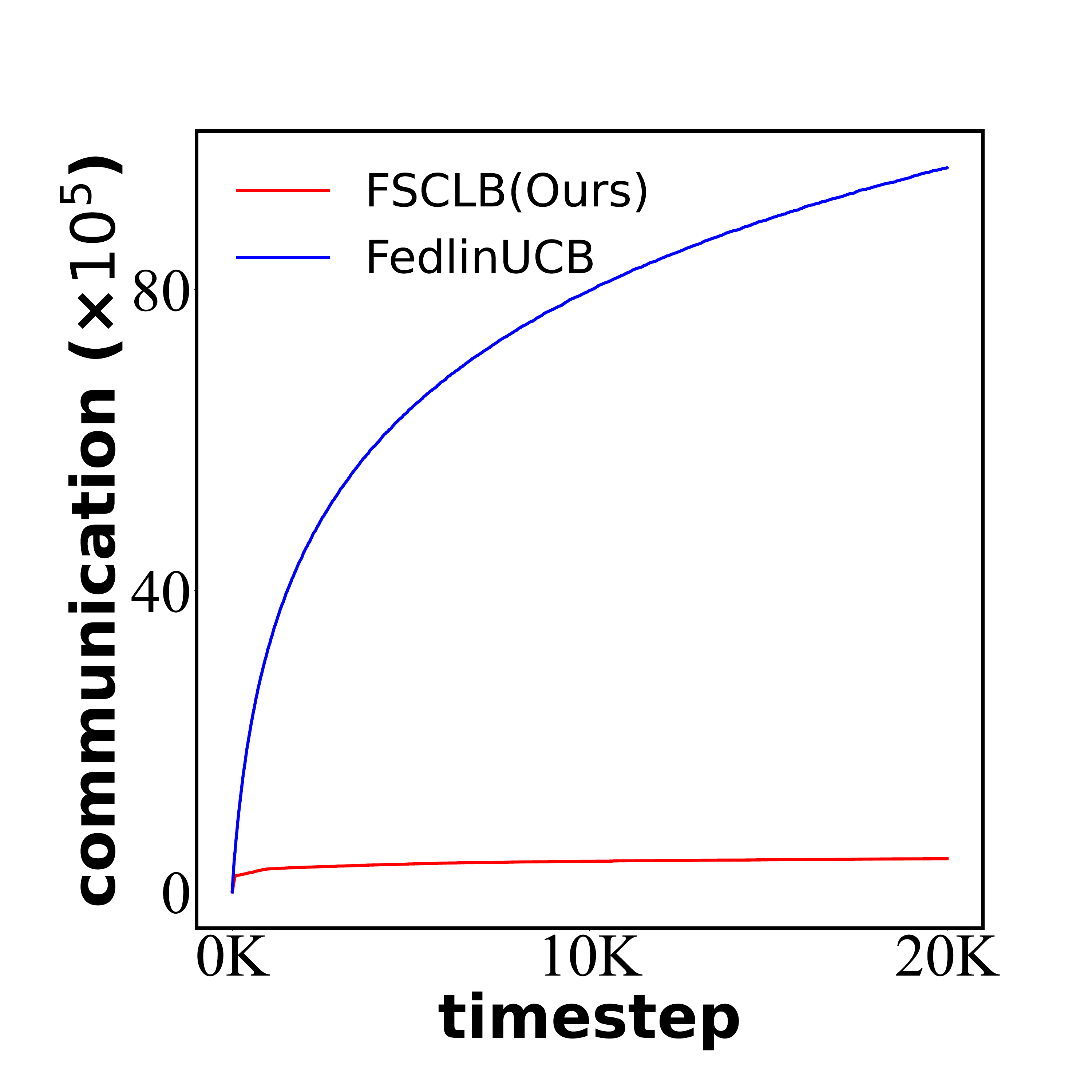}  
  \caption{100 dimension}
\end{subfigure}
\caption{Communication in simulation dataset}
\label{fig:communication}
\end{figure}

\subsection{Real Dataset}
We selected two high-dimensional public datasets from the OpenML repository: SatImage with $d=37$ (ID 23) and MFeat with $d=48$ (ID 182). Our feedback setting follows \cite{SOFUL}, and set $l=12,20$ respectively; the experimental results are shown in Fig \ref{fig:real_reward},\ref{fig:real_time} and \ref{fig:real_communication}.

On real-world setting, rewards are binary: 1 if the optimal arm is pulled and 0 otherwise. Under this feedback, the Random strategy deteriorates sharply, whereas FSCLB still matches the performance of FedLinUCB. These results show that, even in the setting where arm-selection is highly sensitive, FSCLB can maintain strong reward performance with lower computation and communication costs.

\begin{figure}[!ht]
\centering
\begin{subfigure}{.23\textwidth}
  \centering
  \includegraphics[width=\linewidth,height=1\textwidth]{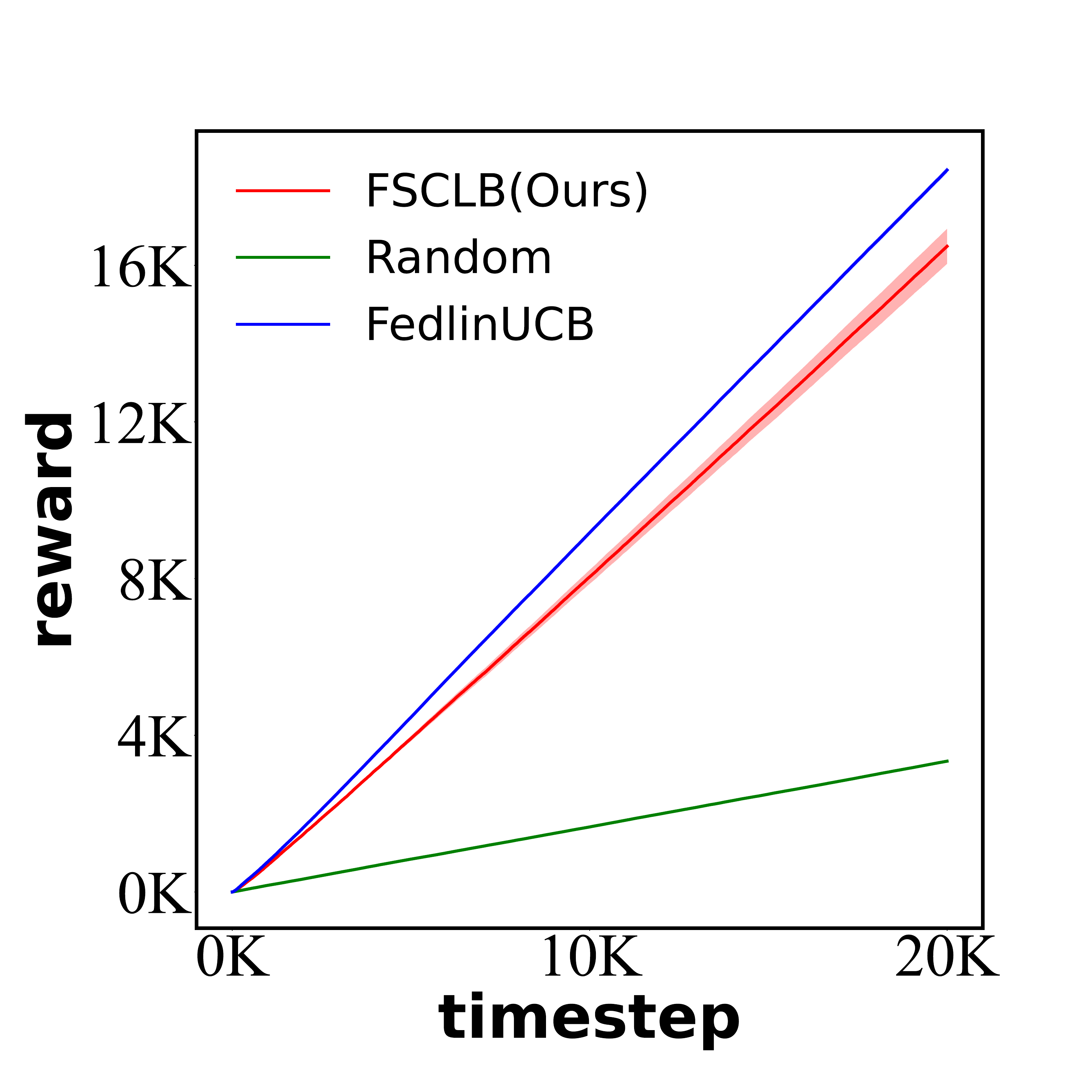}  
  \caption{satimage}
\end{subfigure}
\begin{subfigure}{.23\textwidth}
  \centering
  \includegraphics[width=\linewidth,height=1\textwidth]{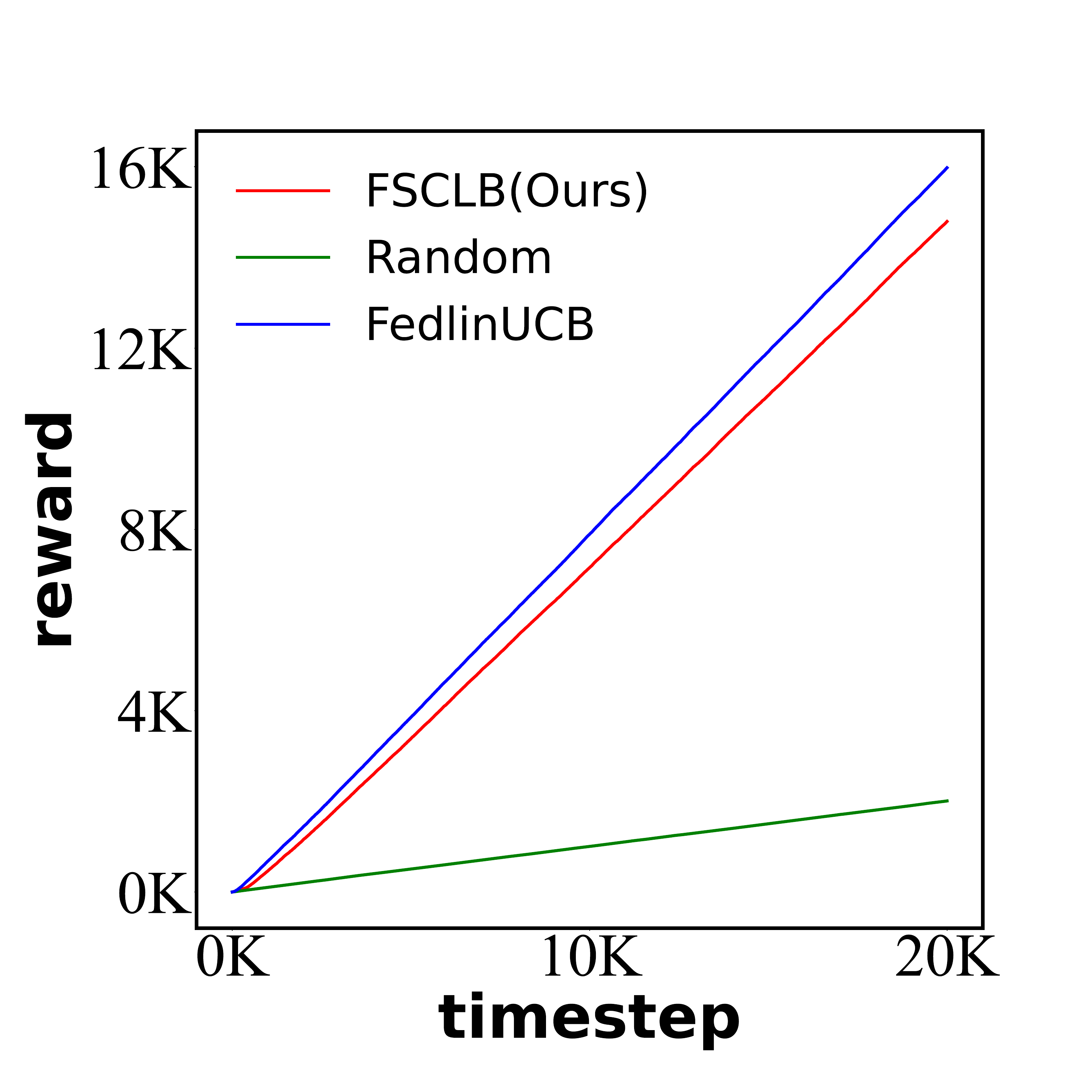}  
  \caption{mfeat}
\end{subfigure}
\caption{Reward in real dataset}
\label{fig:real_reward}
\end{figure}
\begin{figure}[!ht]
\centering
\begin{subfigure}{.23\textwidth}
  \centering
  \includegraphics[width=\linewidth,height=1\textwidth]{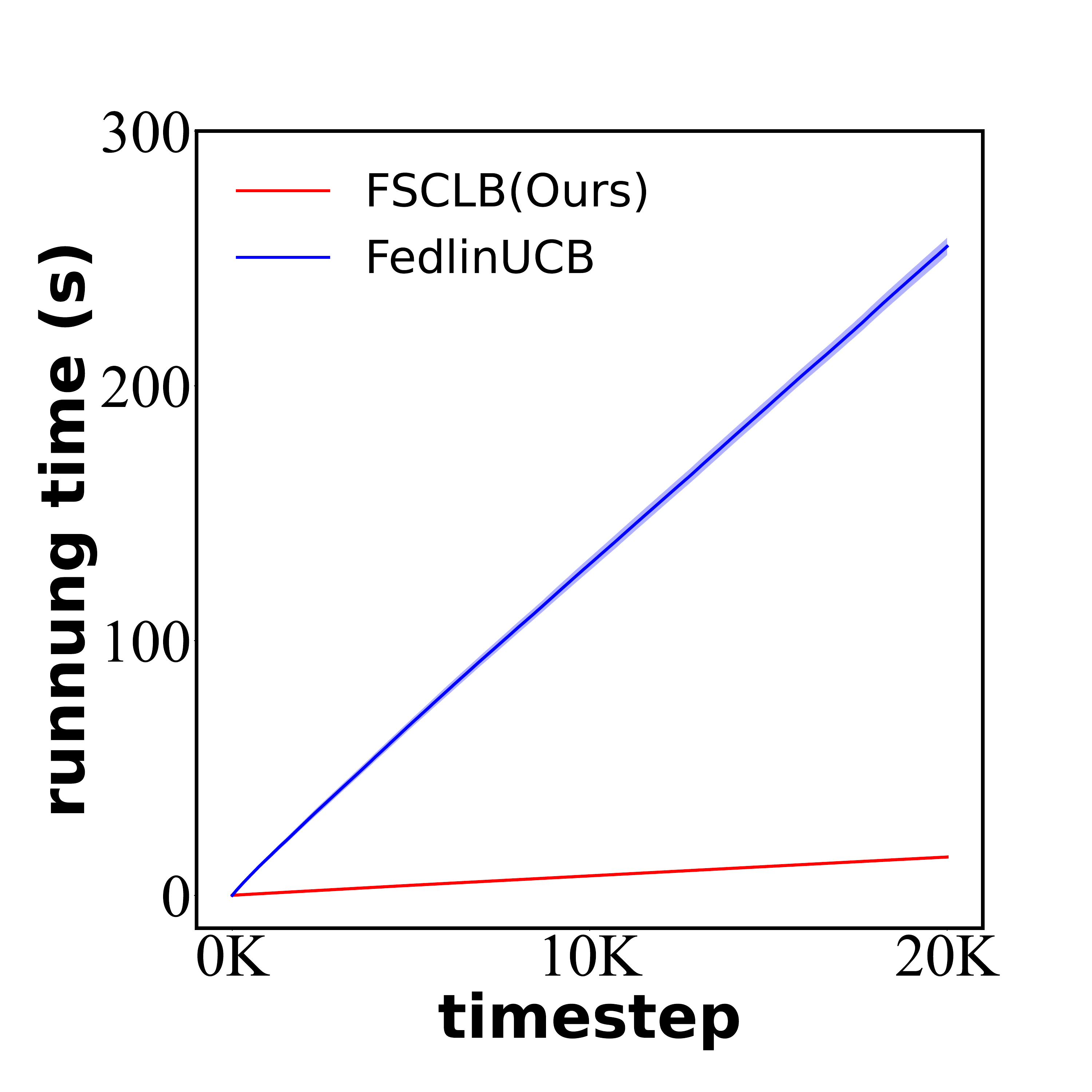}  
  \caption{satimage}
\end{subfigure}
\begin{subfigure}{.23\textwidth}
  \centering
  \includegraphics[width=\linewidth,height=1\textwidth]{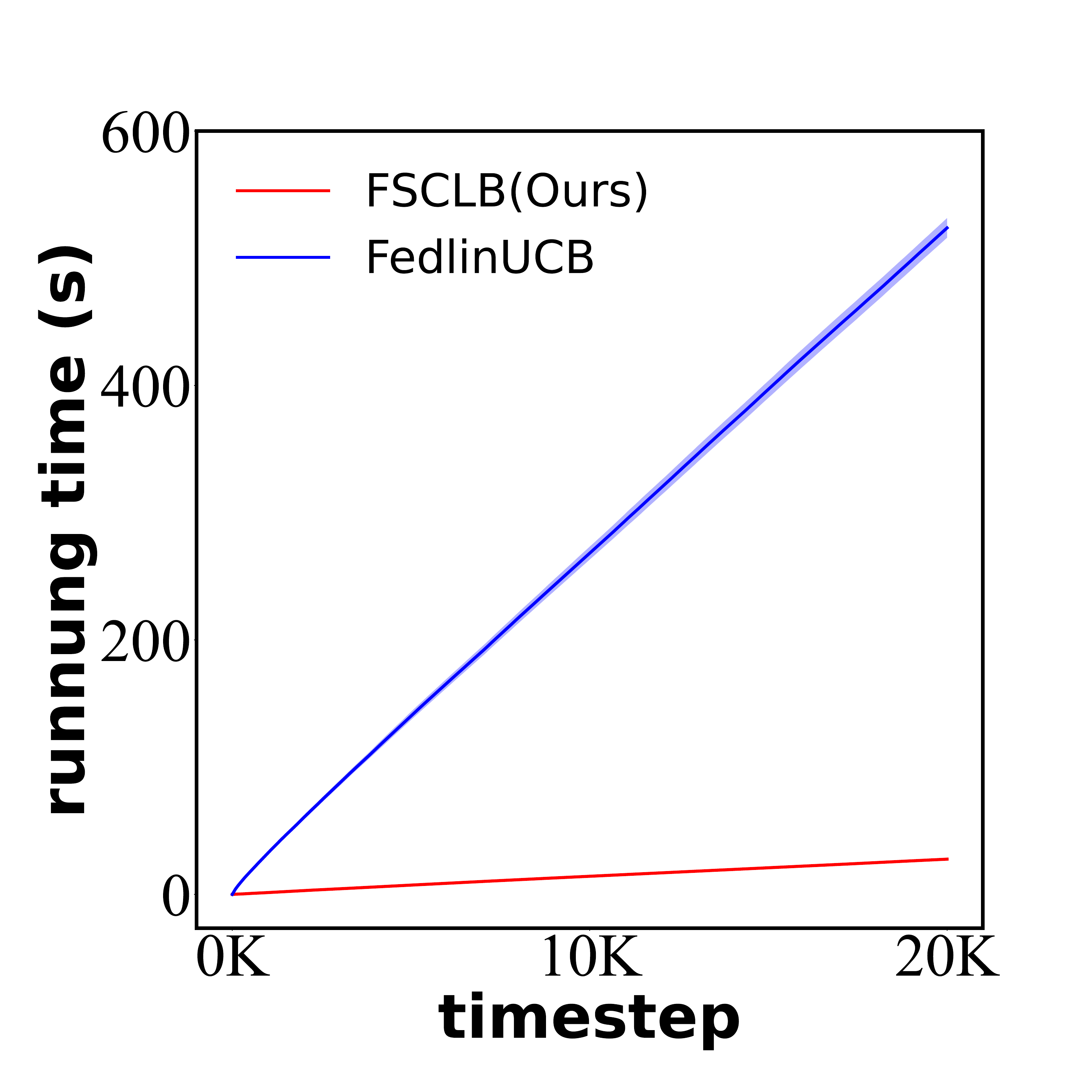}  
  \caption{mfeat}
\end{subfigure}
\caption{Computation cost in real dataset}
\label{fig:real_time}
\end{figure}
\begin{figure}[!ht]
\centering
\begin{subfigure}{.23\textwidth}
  \centering
  \includegraphics[width=\linewidth,height=1\textwidth]{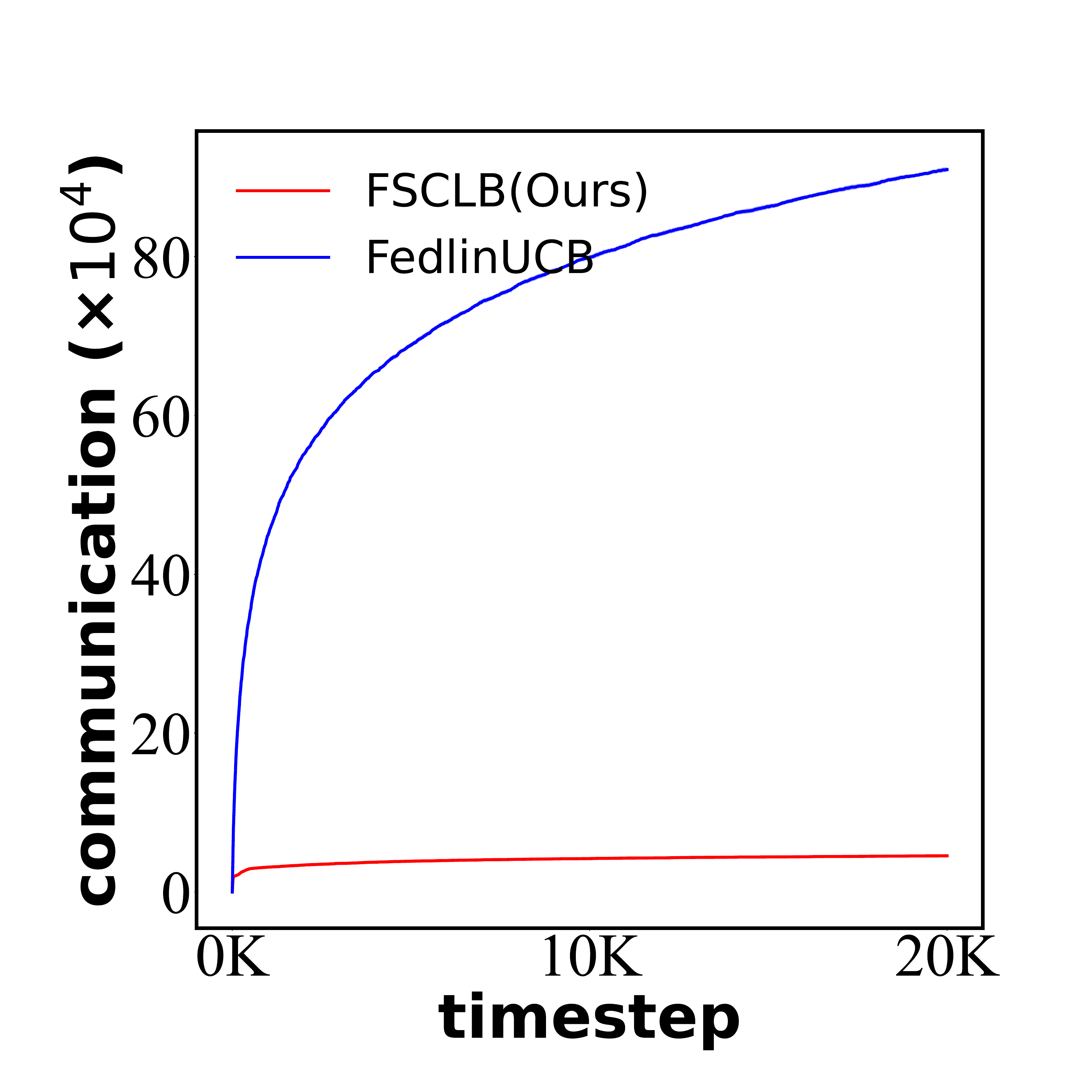}  
  \caption{satimage}
\end{subfigure}
\begin{subfigure}{.23\textwidth}
 \centering
  \includegraphics[width=\linewidth,height=1\textwidth]{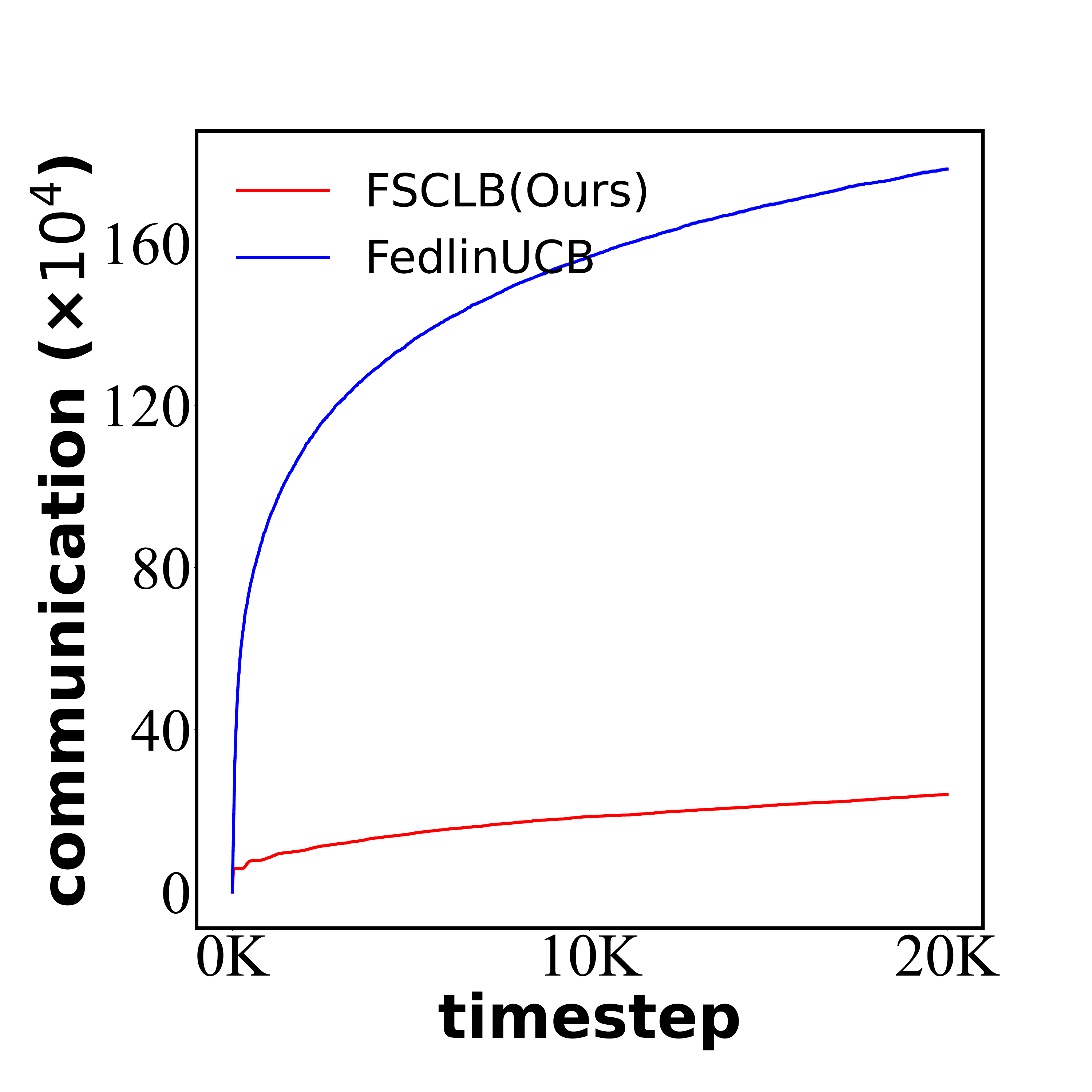}  
  \caption{mfeat}
\end{subfigure}
\caption{Communication in real dataset}
\label{fig:real_communication}
\end{figure}

\section{Conclusion}
In this work we consider the computational and communication cost in federated contextual linear bandits. We involve sketching into this setting and propose the FSCLB algorithm, which cuts local computation and communication. FSCLB employs a double-sketch strategy to guarantee low cost in both upload and download. By adopting SCFD \cite{SCFD} and building the communication trigger rule on the sketched matrix, we avoid the loss of asynchrony that classical FD can cause. We prove that FSCLB achieves a regret bound of $\widetilde{O}((\sqrt{d}+\sqrt{M\varepsilon_l})\sqrt{lT})$; once the sketch size $l$ exceeds the rank of the arm-selection sequence, the bound improves to $\widetilde{O}(\sqrt{ldT})$—matching the best known rate of non-sketched federated contextual linear bandits.  Moreover, FSCLB needs $O(l^2d)$ per-round computation and $O(ld)$ per-round communication ((when it occurs), improving on the $O(d^3)$ computation and $O(d^2)$ communication of prior work.
\newpage
\bibliographystyle{unsrt}  
\bibliography{references}

\newpage
\appendix
\onecolumn
\section{Appendix}
\subsection{Notation in Proof}

According to \cite{FedLinUCB}, for a squence $\bm{X}_t=[\bm{x}_1,...,\bm{x}_t]^{\top}$ in FSCLB, we define
\[
\bm{\Sigma}_t^{all}=\lambda \bm{I}+\sum_{i=1}^{t}\bm{x}_i\bm{x}_i^{\top}=\lambda\bm{I}+\bm{X}_T^{\top}\bm{X}_T.
\]
\[
\widetilde{\bm{V}}_t^{all}=\lambda \bm{I}+\widetilde{\bm{\Sigma}}_t^{ser}+\rho_t^{ser}\bm{I}+\sum_{m'=1}^{M}\left((\bm{S}_{m',t}^{loc})^{\top}\bm{S}_{m',t}^{loc}+\rho_{m',t}^{loc}\bm{I}\right).
\]
\[
\bm{b}_t^{all}=\sum_{i=1}^{t}r_i\bm{x}_i,\ \bm{y}_t^{all}=\sum_{i=1}^{t}\eta_i\bm{x}_i.
\]

We use  $N_m(t)$ to denote the set of moments when user $m$ communicated with the server before time $t$. And $N_{m,t}=\max\{N_m(t)\}$, $N_t=\max_{m'\in [M]}N_{m',t}$
\[
\widetilde{\bm{V}}_{m,t}^{up}=\sum_{i\in N_m(t)}\left((\bm{S}_{m,i}^{loc})^{\top}\bm{S}_{m,i}^{loc}+\rho_{m,i}^{loc}\bm{I}\right).
\] 
\[
\bm{\Sigma}_{m,t}^{up}=\sum_{j=1,m_j=m}^{N_{m,t}}\bm{x}_j\bm{x}_j^{\top},\ \bm{b}_{m,t}^{up}=\sum_{j=1,m_j=m}^{N_{m,t}}r_j\bm{x}_j,\ \bm{y}_{m,t}^{up}=\sum_{j=1,m_j=m}^{N_{m,t}}\eta_j\bm{x}_j.
\]

Then we can get
\[
\widetilde{\bm{\Sigma}}_t^{ser}+\rho_t^{ser}\bm{I}=\sum_{m=1}^{M}\widetilde{\bm{V}}_{m,t}^{up},\  \widetilde{\bm{\Sigma}}_t^{ser}\preceq \bm{X}_T^{\top}\bm{X}_T.
\]

Correspondingly, the data represented by the sketch on each user's local device is denoted as
\[
\bm{\Sigma}_{m,t}^{loc}=\sum_{j=N_{m,t}+1,m_j=m}^{t}\bm{x}_j\bm{x}_j^{\top},\ \bm{b}_{m,t}^{loc}=\sum_{j=N_{m,t}+1,m_j=m}^{t}r_j\bm{x}_j, \ \bm{y}_{m,t}^{loc}=\sum_{j=N_{m,t}+1,m_j=m}^{t}\eta_j\bm{x}_j.
\]

We use $\rho_t^{FD}$ to denote the minimum singular value of the same sequence at time  $t$ when running the sequence $X^{ser}_T$ under classical FD sketch. We define the sketch error $\bar{\rho}_t=\sum_{i=1}^{t}\rho_i^{FD}$ under classical FD sketch.

\subsection{Technical Lemmas}

This section collects the key, previously established properties and lemmas that we will use in our proofs.
\begin{lemma}
\label{Weyl inequality}
(Weyl inequality) For any positive semi-definite matrix $\bm{A},\bm{B}$, if $\bm{A}\succeq \bm{B}$ and $rank(\bm{A})=rank(\bm{B})$, then $\sigma_{min}(\bm{A})\geq \sigma_{min}(\bm{B})$
\end{lemma}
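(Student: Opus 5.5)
The claim to prove is Lemma \ref{Weyl inequality}: for positive semi-definite $\bm{A}\succeq\bm{B}$ with $rank(\bm{A})=rank(\bm{B})=r$, the smallest nonzero eigenvalue of $\bm{A}$ is at least that of $\bm{B}$. Since both matrices are PSD, singular values coincide with eigenvalues. I read $\sigma_{min}$ as $\sigma_r$, the smallest \emph{nonzero} singular value; otherwise, if $r<d$, both sides are zero and the claim is trivial. The plan is to combine the Courant--Fischer min-max characterization with the ordering $\bm{A}\succeq\bm{B}$.

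\textbf{Step 1 (monotonicity).} I first show that $\bm{A}\succeq\bm{B}$ implies $\lambda_i(\bm{A})\geq\lambda_i(\bm{B})$ for every $i$, with eigenvalues sorted in decreasing order. Courant--Fischer gives
\[
\lambda_i(\bm{A})=\max_{\dim \mathcal{U}=i}\ \min_{\bm{u}\in\mathcal{U},\|\bm{u}\|=1}\bm{u}^{\top}\bm{A}\bm{u}.
\]
For each unit $\bm{u}$ we have $\bm{u}^{\top}\bm{A}\bm{u}\geq\bm{u}^{\top}\bm{B}\bm{u}$. This inequality survives the inner minimum and then the outer maximum, which gives the claim. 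This is the standard Weyl monotonicity principle.

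\textbf{Step 2 (specialize to index $r$).} Because both matrices have rank $r$, $\lambda_r(\bm{A})$ and $\lambda_r(\bm{B})$ are exactly their smallest nonzero eigenvalues. Applying Step 1 with $i=r$ gives $\sigma_{min}(\bm{A})=\lambda_r(\bm{A})\geq\lambda_r(\bm{B})=\sigma_{min}(\bm{B})$.

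The only real subtlety, and the step I expect to need the most care, is the interpretation of $\sigma_{min}$ together with the role of the equal-rank hypothesis. Without equal rank the statement can fail. For example, if $\bm{A}$ has larger rank, its smallest nonzero eigenvalue may sit at a later index and be tiny: take $\bm{A}=\mathrm{diag}(1,\epsilon)$ and $\bm{B}=\mathrm{diag}(1,0)$. I will therefore state the nonzero-eigenvalue convention explicitly and note that equal rank is exactly what aligns both minima at index $r$. Everything else is routine.
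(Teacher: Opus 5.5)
Your proof is correct and is the standard one. The paper gives no proof here and simply invokes the classical Weyl inequality, whose usual derivation is your Courant--Fischer argument for $\lambda_i(\bm{A})\geq\lambda_i(\bm{B})$, specialized to $i=r$ with equal rank placing both smallest nonzero eigenvalues at index $r$. Keep your Step 1 as a standalone statement: the paper later relies on exactly that index-wise monotonicity, which needs no rank hypothesis, in Lemma~\ref{double sketch error}, where $\widetilde{\bm{\Sigma}}_T^{ser}$ typically has smaller rank than $\bm{X}_T^{\top}\bm{X}_T$ and the equal-rank version would not apply.
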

\begin{lemma}
\label{tech1}
For any positive semi-definite matrix $\bm{A}$ and any vector $\bm{x},\bm{y}$, we have $|\bm{x}^{\top}\bm{y}|\leq \left \|\bm{x}\right \|_{\bm{A}}\left \|\bm{y}\right \|_{\bm{A}^{-1}}$.
\end{lemma}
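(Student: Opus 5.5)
The statement to prove is the last one displayed, Lemma \ref{tech1}: for positive semi-definite $\bm{A}$ (invertible, since $\bm{A}^{-1}$ appears) and any $\bm{x},\bm{y}$, $|\bm{x}^{\top}\bm{y}|\leq \|\bm{x}\|_{\bm{A}}\|\bm{y}\|_{\bm{A}^{-1}}$. The plan is to reduce it to the ordinary Euclidean Cauchy--Schwarz inequality by a symmetric change of variables through the matrix square root of $\bm{A}$.

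\textbf{Step 1: square roots.} The meaningful setting is $\bm{A}\succ 0$; in the paper this is always some $\widetilde{\bm{V}}_{m,t}\succeq\lambda\bm{I}$. By the spectral theorem write $\bm{A}=\bm{Q}\bm{\Lambda}\bm{Q}^{\top}$ with $\bm{Q}$ orthogonal and $\bm{\Lambda}$ diagonal with positive entries. Define $\bm{A}^{1/2}=\bm{Q}\bm{\Lambda}^{1/2}\bm{Q}^{\top}$ and $\bm{A}^{-1/2}=\bm{Q}\bm{\Lambda}^{-1/2}\bm{Q}^{\top}$. Both are symmetric, $\bm{A}^{1/2}\bm{A}^{-1/2}=\bm{I}$, $(\bm{A}^{1/2})^2=\bm{A}$, and $(\bm{A}^{-1/2})^2=\bm{A}^{-1}$.

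\textbf{Step 2: rewrite the inner product.} Insert the identity:
\[
\bm{x}^{\top}\bm{y}=\bm{x}^{\top}\bm{A}^{1/2}\bm{A}^{-1/2}\bm{y}=(\bm{A}^{1/2}\bm{x})^{\top}(\bm{A}^{-1/2}\bm{y}).
\]
The second equality uses the symmetry of $\bm{A}^{1/2}$.

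\textbf{Step 3: Euclidean Cauchy--Schwarz.} Applying it to the two vectors gives
\[
|\bm{x}^{\top}\bm{y}|\leq\|\bm{A}^{1/2}\bm{x}\|_2\,\|\bm{A}^{-1/2}\bm{y}\|_2 .
\]
Now identify the norms:
\[
\|\bm{A}^{1/2}\bm{x}\|_2^2=\bm{x}^{\top}\bm{A}\bm{x}=\|\bm{x}\|_{\bm{A}}^2,\qquad \|\bm{A}^{-1/2}\bm{y}\|_2^2=\bm{y}^{\top}\bm{A}^{-1}\bm{y}=\|\bm{y}\|_{\bm{A}^{-1}}^2 .
\]
Substituting these yields the claim.

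\textbf{Where care is needed.} There is no real obstacle; the one subtle point is the hypothesis. If $\bm{A}$ were only semi-definite and singular, $\bm{A}^{-1}$ would not exist. One could replace it by the pseudo-inverse, but the inequality then fails for $\bm{y}\notin\mathrm{range}(\bm{A})$. I would therefore state explicitly that $\bm{A}$ is assumed positive definite, which is the only case where the lemma is used.
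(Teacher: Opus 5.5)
Your proof is correct and follows the paper's argument: insert $\bm{A}^{1/2}\bm{A}^{-1/2}$, apply Euclidean Cauchy--Schwarz, and identify $\|\bm{A}^{1/2}\bm{x}\|_2=\|\bm{x}\|_{\bm{A}}$ and $\|\bm{A}^{-1/2}\bm{y}\|_2=\|\bm{y}\|_{\bm{A}^{-1}}$. Your caveat is a correct and worthwhile sharpening: the paper's ``positive semi-definite'' hypothesis is too weak for $\bm{A}^{-1}$ to exist, so $\bm{A}$ must be positive definite, and the pseudo-inverse version indeed fails for $\bm{y}\notin\mathrm{range}(\bm{A})$.
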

\begin{proof}
\begin{align}
|\bm{x}^{\top}\bm{y}|
&=\bm{x}^{\top}\bm{A}^{\frac{1}{2}}\bm{A}^{-\frac{1}{2}}\bm{y}\\
&\leq \left \|\bm{A}^{\frac{1}{2}}\bm{x} \right \|_2\left \|\bm{A}^{-\frac{1}{2}}\bm{y} \right \|_2 \text{ (Cauchy-Schwartz inequality)}\\
&=\left \|\bm{x}\right \|_{\bm{A}}\left \|\bm{y}\right \|_{\bm{A}^{-1}}.
\end{align}
\end{proof}
\begin{lemma}
By \cite{OFUL} and \cite{SOFUL}, with sketch size $l$, we have
\begin{align}
\nonumber
\sum_{t=1}^{T}\min\{1,\left \|\bm{x}_t \right \|_{(\bm{\Sigma}_{t-1}^{all})^{-1}}^2\} 
&\leq 2\ln\frac{det(\bm{\Sigma}_T^{all})}{det(\lambda\bm{I})}\\
\nonumber
&\leq 2d\ln\left (1+\varepsilon_l \right )+2l\ln\left (1+\frac{TL^2}{l\lambda} \right ).
\end{align}
\end{lemma}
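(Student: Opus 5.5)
We need to prove the last stated lemma, the elliptical potential bound with sketch size $l$:
\[
\sum_{t=1}^{T}\min\{1,\|\bm{x}_t\|^2_{(\bm{\Sigma}_{t-1}^{all})^{-1}}\}\le 2\ln\frac{\det(\bm{\Sigma}_T^{all})}{\det(\lambda\bm{I})}\le 2d\ln(1+\varepsilon_l)+2l\ln\left(1+\frac{TL^2}{l\lambda}\right).
\]
The two inequalities are handled separately.

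\textbf{First inequality.} This is the standard elliptical potential argument of \cite{OFUL}. The plan has three steps:
\begin{itemize}
\item[(i)] By the matrix determinant lemma, since $\bm{\Sigma}_t^{all}=\bm{\Sigma}_{t-1}^{all}+\bm{x}_t\bm{x}_t^{\top}$, we have $\det(\bm{\Sigma}_t^{all})=\det(\bm{\Sigma}_{t-1}^{all})\bigl(1+\|\bm{x}_t\|^2_{(\bm{\Sigma}_{t-1}^{all})^{-1}}\bigr)$.
\item[(ii)] Telescoping over $t$ gives $\ln\frac{\det(\bm{\Sigma}_T^{all})}{\det(\lambda\bm{I})}=\sum_t\ln\bigl(1+\|\bm{x}_t\|^2_{(\bm{\Sigma}_{t-1}^{all})^{-1}}\bigr)$.
\item[(iii)] Use $\min\{1,u\}\le 2\ln(1+u)$ for $u\ge0$.
\end{itemize}
This part is routine.

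\textbf{Second inequality.} The idea is to split the spectrum of $\bm{X}_T^{\top}\bm{X}_T$ into a head of size $l$ and a tail. Let $\lambda_1\ge\dots\ge\lambda_d$ be its eigenvalues, so that
\[
\ln\frac{\det(\bm{\Sigma}_T^{all})}{\det(\lambda\bm{I})}=\sum_{i=1}^d\ln(1+\lambda_i/\lambda).
\]
Fix the index $k\in\{0,\dots,l-1\}$ attaining the minimum in $\varepsilon_l$, and split the sum at $l$.
\begin{itemize}
\item For the head $i\le l$: by concavity of $\ln$ (Jensen), $\sum_{i\le l}\ln(1+\lambda_i/\lambda)\le l\ln\bigl(1+\frac{\sum_{i\le l}\lambda_i}{l\lambda}\bigr)\le l\ln\bigl(1+\frac{TL^2}{l\lambda}\bigr)$, since $\mathrm{tr}(\bm{X}_T^{\top}\bm{X}_T)\le TL^2$.
\item For the tail $i>l$: use $\ln(1+u)\le u$ together with the tail control encoded in $\varepsilon_l$.
\end{itemize}

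\textbf{Main obstacle: the tail.} We want $\sum_{i>l}\ln(1+\lambda_i/\lambda)\le d\ln(1+\varepsilon_l)$. I would route this through the sketch rather than through the raw eigenvalues, as in \cite{SOFUL}:
\begin{itemize}
\item[(a)] Write $\bm{\Sigma}_T^{all}\preceq \lambda\bm{I}+\bm{S}_T^{\top}\bm{S}_T+\lambda\varepsilon_l\bm{I}$, using $\|\bm{S}_T^{\top}\bm{S}_T-\bm{X}_T^{\top}\bm{X}_T\|_2\le\lambda\varepsilon_l$.
\item[(b)] Determinant monotonicity then gives $\det(\bm{\Sigma}_T^{all})\le\det\bigl((1+\varepsilon_l)\lambda\bm{I}+\bm{S}_T^{\top}\bm{S}_T\bigr)$.
\item[(c)] Since $\bm{S}_T$ has rank at most $l$, this determinant equals $((1+\varepsilon_l)\lambda)^{d-l}\prod_{i\le l}\bigl((1+\varepsilon_l)\lambda+\sigma_i^2(\bm{S}_T)\bigr)$.
\item[(d)] Dividing by $\lambda^d$ gives at most $(1+\varepsilon_l)^d\prod_{i\le l}\bigl(1+\sigma_i^2(\bm{S}_T)/\lambda\bigr)$.
\item[(e)] Finish with AM--GM, using $\sum_i\sigma_i^2(\bm{S}_T)=\|\bm{S}_T\|_F^2\le\|\bm{X}_T\|_F^2\le TL^2$, which holds because FD only shrinks.
\end{itemize}
Taking logs and multiplying by $2$ gives exactly the claimed bound.

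The delicate point is step (a), the direction $\bm{X}_T^{\top}\bm{X}_T\preceq\bm{S}_T^{\top}\bm{S}_T+\lambda\varepsilon_l\bm{I}$. I would justify it by the FD guarantee $0\preceq\bm{X}_T^{\top}\bm{X}_T-\bm{S}_T^{\top}\bm{S}_T\preceq\bar\rho_T\bm{I}$ together with $\bar\rho_T\le\lambda\varepsilon_l$, as stated in the preliminaries via Proposition 1 of \cite{SOFUL}.
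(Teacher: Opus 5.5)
Your proof is correct and is essentially the argument the paper only cites. The first inequality is the determinant-lemma telescoping of \cite{OFUL}. The second is the SOFUL-style comparison $\bm{\Sigma}_T^{all}\preceq(\lambda+\bar\rho_T)\bm{I}+\bm{S}_T^{\top}\bm{S}_T$ with $\bar\rho_T\le\lambda\varepsilon_l$, followed by the rank-$l$ AM--GM step. This produces exactly the intermediate form $2d\ln(1+\bar\rho_T/\lambda)+2l\ln(1+\frac{TL^2}{l\lambda})$ that the paper uses in the proof of Theorem 5.1.

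Steps (a)--(e) already bound the entire log-determinant, so you can drop your preliminary head/tail split. As written, its tail step via $\ln(1+u)\le u$ would not give $d\ln(1+\varepsilon_l)$ when $\varepsilon_l$ is large. If you prefer the eigenvalue route, use $\lambda_i\le\lambda_l\le\lambda\varepsilon_l$ for all $i\ge l$, which is immediate from $\varepsilon_l=\min_{k<l}\sum_{i>k}\lambda_i/(\lambda(l-k))$. That gives the tail bound $(d-l)\ln(1+\varepsilon_l)$ directly.
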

\subsection{SCFD under Federated Contextual Linear Bandits}

The error term of SCFD is tied to the sequence it generates; this section will link the error produced by SCFD in federated learning to that in the traditional single-agent case.
\begin{lemma}
\label{Monotonicity in Federated Learning}
(Monotonicity in Federated Learning) Assume that the communication happened at $t_1,...,t_n$, then
\[
\widetilde{\bm{V}}_{t_j}^{ser}\succeq \widetilde{\bm{V}}_{t_i}^{ser}, \forall j\geq i.
\]
\end{lemma}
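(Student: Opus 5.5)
It suffices to show that between two consecutive communication times $t_i<t_{i+1}$ we have $\widetilde{\bm{V}}_{t_{i+1}}^{ser}\succeq \widetilde{\bm{V}}_{t_i}^{ser}$; the general claim for $j\geq i$ then follows by transitivity of the Loewner order. Between communications the server quantities are frozen (line 25 of Algorithm \ref{alg:FSCLB}), so only the single server update in Algorithm \ref{alg:communication} at time $t_{i+1}$ matters. Write $t=t_{i+1}$ and $\widetilde{\bm{S}}_{t-1}^{ser}=\widetilde{\bm{S}}_{t_i}^{ser}$. By line 9 of Algorithm \ref{alg:communication}, we must show
\[
(\widetilde{\bm{S}}_t^{ser})^{\top}\widetilde{\bm{S}}_t^{ser}+\Delta_t^{ser}\bm{I}\succeq (\widetilde{\bm{S}}_{t-1}^{ser})^{\top}\widetilde{\bm{S}}_{t-1}^{ser}+\Delta_{t-1}^{ser}\bm{I},
\]
where $\Delta_t^{ser}=\Delta_{t-1}^{ser}+\rho_{m_t,t}^{loc}+\delta_t^{ser}$.

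\textbf{Step 1 (exact SVD bookkeeping).} Let $\bm{C}=[(\widetilde{\bm{S}}_{t-1}^{ser})^{\top};(\bm{S}_{m_t,t}^{loc})^{\top}]^{\top}$ with SVD $\bm{U}\bm{\Sigma}\bm{V}^{\top}$. Then $\bm{C}^{\top}\bm{C}=\bm{V}\bm{\Sigma}^2\bm{V}^{\top}=(\widetilde{\bm{S}}_{t-1}^{ser})^{\top}\widetilde{\bm{S}}_{t-1}^{ser}+(\bm{S}_{m_t,t}^{loc})^{\top}\bm{S}_{m_t,t}^{loc}$. The truncated sketch satisfies $(\widetilde{\bm{S}}_t^{ser})^{\top}\widetilde{\bm{S}}_t^{ser}=\bm{V}\max\{\bm{\Sigma}^2-\delta_t^{ser}\bm{I},0\}\bm{V}^{\top}$. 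Keeping only the first $l$ rows of $\hat{\bm{\Sigma}}$ loses nothing, because every singular value with index $\ge l$ is at most $\sigma_l$ and so becomes zero after the shift.

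\textbf{Step 2 (the SCFD compensation).} Entrywise on the diagonal, $\max\{\sigma_i^2-\delta,0\}+\delta\geq\sigma_i^2$. Hence
\[
(\widetilde{\bm{S}}_t^{ser})^{\top}\widetilde{\bm{S}}_t^{ser}+\delta_t^{ser}\bm{I}\succeq \bm{V}\bm{\Sigma}^2\bm{V}^{\top}
\]
when $\bm{V}$ is a full $d\times d$ orthogonal basis. On the orthogonal complement of the row space of $\bm{C}$, the left side equals $\delta_t^{ser}\geq 0$ and the right side equals $0$, so the inequality still holds. This is exactly Property 1 of \cite{SCFD}, lifted from appending a single row to appending a block.

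\textbf{Step 3 (conclude).} Combining Steps 1 and 2 gives
\[
\widetilde{\bm{V}}_t^{ser}\succeq \lambda\bm{I}+\Delta_{t-1}^{ser}\bm{I}+\rho_{m_t,t}^{loc}\bm{I}+(\widetilde{\bm{S}}_{t-1}^{ser})^{\top}\widetilde{\bm{S}}_{t-1}^{ser}+(\bm{S}_{m_t,t}^{loc})^{\top}\bm{S}_{m_t,t}^{loc}\succeq \widetilde{\bm{V}}_{t-1}^{ser},
\]
where the last step drops the PSD terms $\rho_{m_t,t}^{loc}\bm{I}\succeq 0$ and $(\bm{S}^{loc})^{\top}\bm{S}^{loc}\succeq 0$.

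\textbf{Main obstacle.} The main care point is Step 2. The SVD in the paper's notation is thin, so one must argue explicitly about the complement of the row space and about rows beyond index $l$, as done above. One must also check that the $\max\{\cdot,0\}$ clipping does not break the inequality, which it does not because clipping only increases the left-hand side.
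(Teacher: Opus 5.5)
Your proof is correct and follows the same route as the paper: a one-step SCFD compensation inequality at each communication, absorbed into $\Delta^{ser}$ via $\widetilde{\rho}_t^{ser}=\widetilde{\rho}_{t-1}^{ser}+\delta_t^{ser}$ and $\rho_t^{ser}\geq\rho_{t-1}^{ser}$ (your $\rho_{m_t,t}^{loc}\geq 0$), then chained across communication times by transitivity. The only difference is that the paper cites \cite{SCFD} for the compensation inequality, which is stated there for single-row updates, whereas you verify it directly for the block append in your Steps 1--2, so your version is self-contained.
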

\begin{proof}
According to \cite{SCFD}, at $t_j$, we have
\[
(\widetilde{\bm{S}}_t^{ser})^{\top}\widetilde{\bm{S}}_t^{ser}+\widetilde{\sigma}_t^{ser}\bm{I}\succeq (\widetilde{\bm{S}}_{t-1}^{ser})^{\top}\widetilde{\bm{S}}_{t-1}^{ser}.
\]

Because $\widetilde{\rho}^{ser}_t=\sum_{i=1}^{t}\widetilde{\sigma}_i^{ser}$, we get
\[
(\widetilde{\bm{S}}_t^{ser})^{\top}\widetilde{\bm{S}}_t^{ser}+\widetilde{\rho}^{ser}_t\bm{I}\succeq (\widetilde{\bm{S}}_{t-1}^{ser})^{\top}\widetilde{\bm{S}}_{t-1}^{ser}+\widetilde{\rho}^{ser}_{t-1}\bm{I}.
\]

On the other hand, $\rho^{ser}_t \geq \rho^{ser}_{t-1}$, therefore
\[
(\widetilde{\bm{S}}_t^{ser})^{\top}\widetilde{\bm{S}}_t^{ser}+\Delta^{ser}_t\bm{I}\succeq (\widetilde{\bm{S}}_{t-1}^{ser})^{\top}\widetilde{\bm{S}}_{t-1}^{ser}+\Delta^{ser}_{t-1}\bm{I},
\]

this implies
\[
\widetilde{\bm{V}}_{t}^{ser}\succeq \widetilde{\bm{V}}_{t-1}^{ser}.
\]
\end{proof}
\begin{lemma}
\label{FL SCFD inequality}
For any $t\in [T]$, we can get
\[
(\lambda+\Delta_t^{ser})\bm{I}+\bm{X}_t^{\top}\bm{X}_t\succeq\widetilde{\bm{V}}_{t}^{ser}=\lambda\bm{I}+\sum_{m=1}^{M}\widetilde{\bm{V}}_{m,t}^{up}\succeq \lambda\bm{I}+\sum_{m=1}^{M}\bm{\Sigma}_{m,t}^{up}.
\]
\end{lemma}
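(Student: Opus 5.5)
We prove the three relations in the chain separately. All of them rest on one fact about a single sketch step, which we first isolate. Write $\bm{A}=[(\bm{S}_{\rm old})^{\top};\bm{Y}^{\top}]^{\top}$ for the concatenated matrix fed to an SVD, with truncation level $\delta=\sigma_l^2(\bm{\Sigma})$, and let $\bm{S}_{\rm new}$ be the resulting sketch. Then
\[
\bm{A}^{\top}\bm{A}-\delta\bm{I}\preceq\bm{S}_{\rm new}^{\top}\bm{S}_{\rm new}\preceq\bm{A}^{\top}\bm{A}\preceq \bm{S}_{\rm new}^{\top}\bm{S}_{\rm new}+\delta\bm{I}.
\]
Indeed, $\bm{S}_{\rm new}^{\top}\bm{S}_{\rm new}=\bm{V}\max\{\bm{\Sigma}^2-\delta\bm{I},0\}\bm{V}^{\top}$. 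In the eigenbasis $\bm{V}$, every eigenvalue of $\bm{A}^{\top}\bm{A}$ is lowered by at most $\delta$. The singular values beyond index $l$ are at most $\sqrt{\delta}$, so zeroing them also costs at most $\delta$. The same step also gives the identity
\[
\bm{S}_{\rm new}^{\top}\bm{S}_{\rm new}+\delta\bm{I}\succeq (\bm{S}_{\rm old})^{\top}\bm{S}_{\rm old}+\bm{Y}^{\top}\bm{Y}=\bm{A}^{\top}\bm{A}.
\]
We apply this step locally, with $\bm{Y}=\bm{x}_t^{\top}$ and $\delta=\delta^{FL}_{m,t}$, and on the server, with $\bm{Y}=\bm{S}^{loc}_{m_t,t}$ and $\delta=\delta^{ser}_t$.

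\textbf{Middle equality.} This follows from the definitions. The server keeps $\widetilde{\bm{V}}^{ser}_t=(\lambda+\Delta^{ser}_t)\bm{I}+(\widetilde{\bm{S}}^{ser}_t)^{\top}\widetilde{\bm{S}}^{ser}_t$ with $\Delta^{ser}_t=\rho^{ser}_t+\widetilde{\rho}^{ser}_t$, and the notation section records $\widetilde{\bm{\Sigma}}^{ser}_t+\rho^{ser}_t\bm{I}=\sum_m\widetilde{\bm{V}}^{up}_{m,t}$. We read $\widetilde{\bm{\Sigma}}^{ser}_t$ as the compensated server sketch $(\widetilde{\bm{S}}^{ser}_t)^{\top}\widetilde{\bm{S}}^{ser}_t+\widetilde{\rho}^{ser}_t\bm{I}$. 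We check this by induction over communication times: the sketch matrix plus accumulated truncation, summed over merges, equals the sum of uploaded compensated local sketches, up to the monotone slack given by the identity above. Where equality only holds as $\succeq$, that direction is the one the rightmost inequality needs.

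\textbf{Right inequality.} It suffices to show $\widetilde{\bm{V}}^{up}_{m,t}\succeq\bm{\Sigma}^{up}_{m,t}$ for each $m$. Since the local sketch is reset after every upload, it is enough to treat one inter-communication epoch. Within an epoch we show by induction over the local updates that
\[
(\bm{S}^{loc}_{m,i})^{\top}\bm{S}^{loc}_{m,i}+\rho^{loc}_{m,i}\bm{I}\succeq\sum_{j\text{ in epoch},\,j\le i}\bm{x}_j\bm{x}_j^{\top}.
\]
The inductive step is the SCFD identity with $\bm{Y}=\bm{x}_i^{\top}$: the new compensated sketch dominates the old compensated sketch plus $\bm{x}_i\bm{x}_i^{\top}$. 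Summing over epochs $i\in N_m(t)$ gives the claim. The server merge then preserves domination by the same identity, now with $\bm{Y}=\bm{S}^{loc}$, and this is exactly Lemma \ref{Monotonicity in Federated Learning} extended by the uploaded increment.

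\textbf{Left inequality.} Here we use the upper half of the single-step fact, $\bm{S}_{\rm new}^{\top}\bm{S}_{\rm new}\preceq\bm{A}^{\top}\bm{A}$, applied recursively. Locally, $(\bm{S}^{loc})^{\top}\bm{S}^{loc}\preceq\sum\bm{x}_j\bm{x}_j^{\top}$ over the epoch. On the server, $(\widetilde{\bm{S}}^{ser}_t)^{\top}\widetilde{\bm{S}}^{ser}_t\preceq(\widetilde{\bm{S}}^{ser}_{t-1})^{\top}\widetilde{\bm{S}}^{ser}_{t-1}+(\bm{S}^{loc})^{\top}\bm{S}^{loc}$. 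By induction, $(\widetilde{\bm{S}}^{ser}_t)^{\top}\widetilde{\bm{S}}^{ser}_t\preceq\sum_m\bm{\Sigma}^{up}_{m,t}\preceq\bm{X}_t^{\top}\bm{X}_t$. Adding $(\lambda+\Delta^{ser}_t)\bm{I}$ to both sides gives the left relation.

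\textbf{Main obstacle.} The delicate step is the middle equality. We have to match the bookkeeping of $\rho^{ser}_t$, which accumulates uploaded $\rho^{loc}$ values that are reset after each upload, and $\widetilde{\rho}^{ser}_t$ against the definition of $\widetilde{\bm{V}}^{up}_{m,t}$, so that no truncation mass is counted twice. If exact equality fails, we will prove only the two outer inequalities, $\succeq$ and $\preceq$, with $\widetilde{\bm{V}}^{ser}_t$ in the middle. That weaker statement is all the later regret argument uses.
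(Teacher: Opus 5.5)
Your proposal is correct and follows the paper's own route. The paper's proof is just the chain $\widetilde{\bm{V}}_t^{ser}=(\lambda+\rho_t^{ser}+\widetilde{\rho}_t^{ser})\bm{I}+(\widetilde{\bm{S}}_t^{ser})^{\top}\widetilde{\bm{S}}_t^{ser}\succeq\lambda\bm{I}+\rho_t^{ser}\bm{I}+\widetilde{\bm{\Sigma}}_t^{ser}=\lambda\bm{I}+\sum_{m}\widetilde{\bm{V}}_{m,t}^{up}\succeq\lambda\bm{I}+\sum_{m}\bm{\Sigma}_{m,t}^{up}$ together with $(\widetilde{\bm{S}}_t^{ser})^{\top}\widetilde{\bm{S}}_t^{ser}\preceq\bm{X}_t^{\top}\bm{X}_t$, citing SCFD/FD for exactly the single-step facts and inductions you write out. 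Like your fallback, the paper only establishes $\succeq$ in the middle; exact equality fails in general, because $\widetilde{\rho}_t^{ser}\bm{I}$ adds mass in all $d$ directions while each server truncation removes mass of rank at most $2l$. Also, $\widetilde{\bm{\Sigma}}_t^{ser}$ there is the raw sum of the uploaded local sketch Gram matrices, not the compensated server sketch you tentatively read it as.
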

\begin{proof}
According to \cite{SCFD}, we have
\begin{align}
\widetilde{\bm{V}}_{t}^{ser}\nonumber
&=(\lambda+\Delta_t^{ser})\bm{I}+(\widetilde{\bm{S}}_t^{ser})^{\top}\widetilde{\bm{S}}_t^{ser}\\ \nonumber
&=\lambda \bm{I}+\rho^{ser}_t\bm{I}+\widetilde{\rho}^{ser}_t\bm{I}+(\widetilde{\bm{S}}_t^{ser})^{\top}\widetilde{\bm{S}}_t^{ser}\\
&\succeq \lambda\bm{I}+\rho^{ser}_t\bm{I}+\widetilde{\bm{\Sigma}}_t^{ser}\\
&=\lambda\bm{I}+\sum_{m=1}^{M}\widetilde{\bm{V}}_{m,t}^{up}\\
&\succeq \lambda\bm{I}+\sum_{m=1}^{M}\bm{\Sigma}_{m,t}^{up},
\end{align}

and
\begin{align}
\widetilde{\bm{V}}_{t}^{ser}\nonumber
&=(\lambda+\Delta_t^{ser})\bm{I}+(\widetilde{\bm{S}}_t^{ser})^{\top}\widetilde{\bm{S}}_t^{ser}\\
&\preceq (\lambda+\Delta_t^{ser})\bm{I}+\bm{X}_t^{\top}\bm{X}_t.
\end{align}
\end{proof}
\begin{lemma}
\label{double sketch error}
According to \cite{SOFUL}, we define $\varepsilon_l=\min_{k\leq l-1}\frac{\lambda_{d-k}+\lambda_{d-k+1}+...+\lambda_d}{\lambda(l-k)} $, where $\lambda_1>...>\lambda_d$ are the eigenvalues of the correlation matrix $\bm{X}_T^{\top}\bm{X}$. Let $B_c$ be the number of communication. Then, 
\[
\frac{\rho_T^{ser}}{\lambda}<B_c\varepsilon_l,\ \frac{\widetilde{\rho}_T^{ser}}{\lambda}<\varepsilon_l.
\]
\end{lemma}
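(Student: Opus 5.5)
The plan is to reduce both inequalities to the single-stream Frequent Directions error bound of \cite{FD} and its refinement in \cite{SOFUL}. That bound says: if a matrix $\bm{A}$ is streamed through FD (or SCFD, whose truncation amounts $\delta_t$ coincide with FD's), the total truncated mass satisfies
\[
\sum_t \delta_t \leq \min_{k<l}\frac{\sum_{i>k}\lambda_i(\bm{A}^{\top}\bm{A})}{l-k} = \lambda\varepsilon_l(\bm{A}).
\]
The key structural fact is monotonicity: if $\bm{A}^{\top}\bm{A}\preceq \bm{X}_T^{\top}\bm{X}_T$, then by Weyl (Lemma \ref{Weyl inequality}) every eigenvalue of $\bm{A}^{\top}\bm{A}$ is dominated by the corresponding one of $\bm{X}_T^{\top}\bm{X}_T$. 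Hence the tail sums, and so $\varepsilon_l(\bm{A})$, are at most $\varepsilon_l$.

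\textbf{Local part ($\rho_T^{ser}$).} Between two consecutive communications of user $m$, the local sketch is reset to $\bm{0}$ and $\rho^{loc}$ is reset to $0$ (line 12 of Algorithm \ref{alg:communication}). It then runs SCFD on the batch of rows $\bm{X}_{m}^{(j)}$ that the user selected in that epoch. Each uploaded value $\rho^{loc}_{m,t}$ is therefore the total truncation of one FD run on a submatrix whose Gram matrix satisfies $(\bm{X}_m^{(j)})^{\top}\bm{X}_m^{(j)}\preceq \bm{X}_T^{\top}\bm{X}_T$. By the FD bound and the eigenvalue monotonicity, each upload is at most $\lambda\varepsilon_l$. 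Since $\rho_T^{ser}$ sums one such upload per communication, summing over the $B_c$ communications gives $\rho_T^{ser}/\lambda\leq B_c\varepsilon_l$. For the strict inequality I would either observe that the last partial epoch is never uploaded, or simply accept the non-strict version.

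\textbf{Server part ($\widetilde{\rho}_T^{ser}$).} The server runs the merge step $[\widetilde{\bm S}^{ser}_{t-1};\bm S^{loc}_{m_t,t}]\mapsto$ truncate at $\sigma_l^2$. This is the mergeable-FD update of the concatenated stream of uploaded sketches. Let $\bm{A}$ be the stacked matrix of all uploaded sketches. The standard FD potential argument applies to block insertions exactly as to row insertions, so I would reproduce it. At each merge the Frobenius mass decreases by at least $l\,\delta_t^{ser}$, and the sketch never overshoots: $\widetilde{\bm{S}}^{\top}\widetilde{\bm{S}}\preceq \bm{A}^{\top}\bm{A}$. Projecting onto the bottom $d-k$ eigendirections then yields $(l-k)\sum_t\delta^{ser}_t\leq \sum_{i>k}\lambda_i(\bm{A}^{\top}\bm{A})$. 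It remains to show that $\bm{A}^{\top}\bm{A}=\sum_{\text{uploads}}(\bm S^{loc})^{\top}\bm S^{loc}\preceq\sum_m\sum_j(\bm X_m^{(j)})^{\top}\bm X_m^{(j)}=\bm X_T^{\top}\bm X_T$. This holds because each local FD sketch is dominated by its input, as in Lemma \ref{FL SCFD inequality}. Weyl then gives $\widetilde{\rho}_T^{ser}\leq\lambda\varepsilon_l$.

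\textbf{Main obstacle.} The delicate step is the server-side FD guarantee under block merges, in particular checking that the $k$-dependent tail bound survives when whole $l$-row blocks are inserted at once. The Frobenius accounting changes because up to $2l$ directions are present before truncation. I would first try the Ghashami--Liberty mergeable-summaries argument, which gives exactly $\sum\delta\leq \|\bm A-\bm A_k\|_F^2/(l-k)$. Only if the indexing fails would I fall back to a slightly weaker constant.
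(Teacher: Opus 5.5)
Your proposal is correct and follows essentially the same route as the paper. Both proofs bound each uploaded $\rho^{loc}$, and the server's $\widetilde{\rho}^{ser}_T$, by the SOFUL/FD spectral-tail bound for the corresponding input stream. Both then use domination by $\bm{X}_T^{\top}\bm{X}_T$ (for the server, via $\widetilde{\bm{\Sigma}}^{ser}_T\preceq\bm{X}_T^{\top}\bm{X}_T$) together with eigenvalue monotonicity to replace that tail by $\varepsilon_l$, and for $\rho^{ser}_T$ they sum over the $B_c$ uploads.

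Your version fills in steps the paper only cites or states loosely. You check explicitly that the FD potential argument survives $l$-row block merges, and you partition the local runs per user; the paper instead asserts that only one user is active between consecutive global communications, which is not needed. Like the paper's proof, yours yields only the non-strict inequalities. Those are the correct ones, since the strict versions fail whenever $\varepsilon_l=0$.
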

\begin{proof}
We define $\varepsilon_l^S=\min_{k\leq l-1}\frac{\lambda_{d-k}^S+\lambda_{d-k+1}^S+...+\lambda_d^S}{\lambda(l-k)}$, where $\lambda_1^S>...>\lambda_d^S$ are the eigenvalues of the sketch matrix $\widetilde{\bm{\Sigma}}_T^{ser}$. According to \cite{SOFUL}, $\frac{\widetilde{\rho}_T^{ser}}{\lambda}\leq \varepsilon_l^S$. Because $\widetilde{\bm{\Sigma}}_T^{ser}\preceq \bm{X}_T^{\top}\bm{X}$, according to Lemma \ref{Weyl inequality}, we known that $\varepsilon_l^S\leq \varepsilon_l$. Then $\frac{\widetilde{\rho}_T^{ser}}{\lambda}\leq \varepsilon_l$.
\begin{align}
\rho_t^{ser}=\rho_{N_t}^{ser}=\sum_{i\in \cup_{m=1}^{M}N_m(t)}\rho_{m_i,i}^{loc}.
\end{align}

We partition the sequence $X_T^{ser}$ according to the order of timestamps sorted in ascending order within $\cup_{m=1}^{M}N_m(t)$. Then, within each epoch, it is equivalent to a single user running the classical FD algorithm, and each epoch starts anew. For any $t_k\in \cup_{m=1}^{M}N_m(t)$, assume $t_k<t<t_{k+1}$, and there is only one user activated in this epoch. We denote by $\varepsilon_l^{(k)}$ the value of $\varepsilon_l$ that would be obtained by running the FD sketch on the sequence from round $t_k$ to $t_{k+1}$ with single-agent.

Because $\bm{X}_{[t_k,...,t_k-1]}^{\top} \bm{X}_{[t_k,...,t_k-1]}\preceq \bm{X}_{T}^{\top} \bm{X}_{T}$, Lemma \ref{Weyl inequality} implies that when the singular values are arranged in descending order, each singular value of the former matrix is at most the corresponding one of the latter; then we get $\varepsilon_l^{(k)}\leq \varepsilon_l$.

Note that in FSCLB the local sketch matrix is reset to zero whenever a communication round ends, according to \cite{SOFUL}, we have $\rho_{m,t_k}^{loc}<\lambda \varepsilon_l^{(k)}$. We known that $B_c=|\cup_{m=1}^{M}N_m(t)|$, therefore
\[
\rho_t^{ser}=\sum_{k=1}^{B_c}\rho_{m_{t_k},t_k}^{loc}\leq B_c\lambda\varepsilon_l^{(k)}\leq \lambda B_c\varepsilon_l.
\]
\end{proof}
\subsection{Analysis fo Communication and Computation}

\subsubsection{Proof for Communication}

Our proof of the communication bound is a sketch version of Appendix B.1 in \cite{FedLinUCB}. Firstly, we define
\[
\tau_i=min\{t|det(\bm{\widetilde{V}}_{t}^{ser})\geq 2^{i}\lambda^{d} \}.
\]

For each agent $m$, let $n_m$ be the number of communications triggered by the determinant of agent $m$ during this epoch $\tau_i\sim\tau_{i+1}-1$, and we denote the communication rounds as $t_1,...,t_{n_m}$ for simplicity.

We can get
\[
det(\bm{\widetilde{V}}_{m,t_j}+(\bm{S}_{m,t_j}^{loc})^{\top}\bm{S}_{m,t_j}^{loc}+\rho_{m,t_j}^{loc}\bm{I})-det(\bm{\widetilde{V}}_{m,t_j})>\alpha det(\bm{\widetilde{V}}_{m,t_j}).
\]

Since $\bm{\widetilde{V}}_{m,t_j}\succeq \bm{\widetilde{V}}_{\tau_i}^{ser}$, which implies that
\[
det(\bm{\widetilde{V}}_{\tau_i}^{ser}+(\bm{S}_{m,t_j}^{loc})^{\top}\bm{S}_{m,t_j}^{loc}+\rho_{m,t_j}^{loc}\bm{I})-det(\bm{\widetilde{V}}_{\tau_i}^{ser})>\alpha det(\bm{\widetilde{V}}_{\tau_i}^{ser}).
\]

According to $\bm{\widetilde{V}}_{m,t_j}\succeq \bm{\widetilde{V}}_{\tau_i}^{ser}$ and Lemma D.1 in \cite{FedLinUCB}, we have
\[
det(\bm{\widetilde{V}}_{m,t_j}+(\bm{S}_{m,t_j}^{loc})^{\top}\bm{S}_{m,t_j}^{loc}+\rho_{m,t_j}^{loc}\bm{I})-det(\bm{\widetilde{V}}_{m,t_j})>det(\bm{\widetilde{V}}_{\tau_i}^{ser}+(\bm{S}_{m,t_j}^{loc})^{\top}\bm{S}_{m,t_j}^{loc}+\rho_{m,t_j}^{loc}\bm{I})-det(\bm{\widetilde{V}}_{\tau_i}^{ser}).
\]

In addition, we define the sequence of all communications from $\tau_i$ to $\tau_{i+1}-1$ as $t'_1,...,t'_L$. We have
\begin{align}
det(\bm{\widetilde{V}}_{t'_j}^{ser})-det(\bm{\widetilde{V}}_{t'_{j-1}}^{ser})
&=det(\bm{\widetilde{V}}_{t'_{j-1}}^{ser}+(\bm{S}_{m_{t'_j},t'_j}^{loc})^{\top}\bm{S}_{m_{t'_j},t'_j}^{loc}+\rho_{m_{t'_j},t'_j}^{loc}\bm{I})-det(\bm{\widetilde{V}}_{t'_{j-1}}^{ser})\\
&\geq det(\bm{\widetilde{V}}_{\tau_i}^{ser}+(\bm{S}_{m_{t'_j},t'_j}^{loc})^{\top}\bm{S}_{m_{t'_j},t'_j}^{loc}+\rho_{m_{t'_j},t'_j}^{loc}\bm{I})-det(\bm{\widetilde{V}}_{\tau_i}^{ser})\\
&>\alpha det(\bm{\widetilde{V}}_{\tau_i}^{ser}).
\end{align}

Then 
\begin{align}
det(\bm{\widetilde{V}}_{\tau_{i+1}-1}^{ser})-det(\bm{\widetilde{V}}_{\tau_i}^{ser})=\sum_{1\leq j\leq L}det(\bm{\widetilde{V}}_{t'_j}^{ser})-det(\bm{\widetilde{V}}_{t'_{j-1}}^{ser})\geq \sum_{m=1}^{M}(n_m-1)\alpha det(\bm{\widetilde{V}}_{\tau_i}^{ser}).
\end{align}

Since $det(\bm{\widetilde{V}}_{\tau_{i+1}-1}^{ser})<2det(\bm{\widetilde{V}}_{\tau_i}^{ser})$, we have
\[
\sum_{m=1}^{M}n_m \leq M+2/\alpha.
\]

Then the number of communications within one epoch is bounded by $2(M+1/\alpha)$.
Since $\bm{\widetilde{V}}_{T}^{ser}$ is positive definite, and $\bm{\widetilde{V}}_{T}^{ser}\preceq (\lambda+\Delta_t^{ser})\bm{I}+\bm{X}_t^{\top}\bm{X}_t$, by the arithmetic-geometric inequality, we have
\begin{align}
det(\bm{\widetilde{V}}_{T}^{ser})
&\leq \left(\frac{tr(\bm{\widetilde{V}}_{T}^{ser})}{d} \right)^d \\
&\leq \left(\frac{1}{d}tr((\lambda+\Delta_T^{ser})\bm{I}+\sum_{t=1}^{T}\bm{x}\bm{x}^{\top}) \right)^d \\
&\leq (\lambda+\Delta_T^{ser})^d\left(1+\frac{TL^2}{\lambda d} \right)^d.
\end{align}

Then we have
\[
max\{i\geq 0| \tau_i\ne \phi\}=\log_2\frac{det(\bm{\widetilde{V}}_{T}^{ser})}{\lambda^d}\leq d \log_2(1+\frac{\Delta_T^{ser}}{\lambda})\left(1+\frac{TL^2}{\lambda d} \right).
\]

We have $\Delta_T^{ser}<(T+1)\varepsilon_l$, then the number of communications triggered by the determinant is bounded by
\[
2(M+2/\alpha)d\log(1+\varepsilon_l)\left(1+\frac{ TL^2}{\lambda d}\right).
\]

Therefore, the total communication times is bounded by $2d(M+1/\alpha)\log (1+\varepsilon_l)\left(1+\frac{ TL^2}{\lambda d}\right)$.

The total computation cost is bounded by $O(\min\{2l,d\}^2dT+(M+1/\alpha)d^4\log (1+\varepsilon_l)T)$.

The total communication cost is bounded by $O(ld^2(M+1/\alpha)\log (1+\varepsilon_l)T)$.

\subsection{Proof for Covariance Comparison}
\begin{lemma}
\label{sketch covariance comparison}
(sketch covariance comparison) For any time $t\in [T]$ and each $m \in [M]$
\[
\bm{\widetilde{V}}_{t}^{ser}\succeq \frac{1}{\alpha}\left((\bm{S}_{m,t}^{loc})^{\top}\bm{S}_{m,t}^{loc}+\rho_{m,t}^{loc}\bm{I}\right).
\]

If user $m$ is the only active agent from round $t_1$ to $t_2-1$ and user $m$ only communicated with the server at round $t_1$, then for all $t_1+1\leq t \leq t_2$, it further holds that
\[
\bm{\widetilde{V}}_{m,t} \succeq \frac{1}{1+M\alpha}\bm{\Sigma}_t^{all}
\]
\end{lemma}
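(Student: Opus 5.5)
Both claims are sketch analogues of the covariance-comparison lemma in \cite{FedLinUCB}, and the plan is to follow that proof, using the communication trigger (line 19 of Algorithm \ref{alg:FSCLB}) together with the monotonicity and sandwich properties already established (Lemmas \ref{Monotonicity in Federated Learning} and \ref{FL SCFD inequality}).

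\textbf{First claim.} I will use that user $m$ did not communicate at the last round it was active. Let $s\le t$ be the last round at which $m$ was active. Then $\bm{S}_{m,t}^{loc}=\bm{S}_{m,s}^{loc}$ and $\rho_{m,t}^{loc}=\rho_{m,s}^{loc}$. If $m$ communicated at $s$, these local quantities were reset to zero and the claim is trivial. Otherwise the trigger failed, so
\[
det\bigl(\bm{\widetilde{V}}_{m,s}+\bm{A}\bigr)\le(1+\alpha)\,det(\bm{\widetilde{V}}_{m,s}), \qquad \bm{A}=(\bm{S}_{m,s}^{loc})^{\top}\bm{S}_{m,s}^{loc}+\rho_{m,s}^{loc}\bm{I}.
\]
This gives $det(\bm{I}+\bm{\widetilde{V}}_{m,s}^{-1/2}\bm{A}\bm{\widetilde{V}}_{m,s}^{-1/2})\le 1+\alpha$. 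Since each eigenvalue $\mu_i\ge0$ satisfies $1+\mu_i\le\prod_j(1+\mu_j)$, we get $\mu_{\max}\le\alpha$, i.e. $\bm{A}\preceq\alpha\bm{\widetilde{V}}_{m,s}$.

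Next I need $\bm{\widetilde{V}}_{m,s}\preceq\bm{\widetilde{V}}_t^{ser}$. The matrix $\bm{\widetilde{V}}_{m,s}$ equals $\bm{\widetilde{V}}^{ser}$ at $m$'s last communication time, which is no later than $t$. Lemma \ref{Monotonicity in Federated Learning} gives the monotonicity of the server matrix across communication times, and the server matrix is frozen between communications, so the inequality holds. Combining, $\bm{A}\preceq\alpha\bm{\widetilde{V}}_t^{ser}$, which is the first claim.

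\textbf{Second claim.} Fix $t_1+1\le t\le t_2$. Then $\bm{\widetilde{V}}_{m,t}=\bm{\widetilde{V}}_{t_1}^{ser}=\bm{\widetilde{V}}_{t}^{ser}$, because $m$ is the only agent active after $t_1$ and it does not communicate again. I decompose
\[
\bm{\Sigma}_t^{all}=\lambda\bm{I}+\sum_{m'}\bm{\Sigma}_{m',t}^{up}+\sum_{m'}\bm{\Sigma}_{m',t}^{loc}.
\]
Lemma \ref{FL SCFD inequality} gives $\lambda\bm{I}+\sum_{m'}\bm{\Sigma}_{m',t}^{up}\preceq\bm{\widetilde{V}}_t^{ser}$. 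For the local parts I need $\bm{\Sigma}_{m',t}^{loc}\preceq(\bm{S}_{m',t}^{loc})^{\top}\bm{S}_{m',t}^{loc}+\rho_{m',t}^{loc}\bm{I}$, which is the SCFD dominance property (Property 1 / the upper-side inequality in \cite{SCFD}, already used implicitly in Lemma \ref{FL SCFD inequality}). Applying the first claim then bounds each local term by $\alpha\bm{\widetilde{V}}_t^{ser}$. Summing over $m'$ gives
\[
\bm{\Sigma}_t^{all}\preceq(1+M\alpha)\bm{\widetilde{V}}_t^{ser}=(1+M\alpha)\bm{\widetilde{V}}_{m,t}.
\]

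\textbf{Main obstacle.} The delicate step is the SCFD dominance $\bm{\Sigma}^{loc}\preceq\bm{S}^{\top}\bm{S}+\rho\bm{I}$ after local resets, together with the analogous dominance at the server needed for $\bm{\Sigma}^{up}$ in the double-sketch setting. I would verify it by induction on sketch updates. At each update, $[\bm{S};\bm{x}]^{\top}[\bm{S};\bm{x}]=\bm{V}\bm{\Sigma}^2\bm{V}^{\top}\preceq\bm{V}(\hat{\bm{\Sigma}}^2+\delta\bm{I})\bm{V}^{\top}$, since truncation removes at most $\delta$ from each squared singular value. Adding $\delta\bm{I}$ therefore restores the lost mass, and the same argument applies verbatim to the server merge step.

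A secondary subtlety arises for agents $m'\neq m$ in the second claim: their last active time $s'$ precedes $t_1$. The first claim is still applied at time $t$ using monotonicity, and is valid because their local sketches are frozen after $s'$.
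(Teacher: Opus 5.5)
Your proposal is correct and follows the same route as the paper. The failed trigger at $m$'s last active round bounds $(\bm{S}_{m,t}^{loc})^{\top}\bm{S}_{m,t}^{loc}+\rho_{m,t}^{loc}\bm{I}$ by $\alpha$ times the downloaded matrix, which monotonicity of the server matrix places below $\alpha\bm{\widetilde{V}}_{t}^{ser}$; the second claim then sums this over agents on top of $\lambda\bm{I}+\sum_{m'}\bm{\Sigma}_{m',t}^{up}\preceq\bm{\widetilde{V}}_{t}^{ser}$ and the SCFD dominance of local data. The only difference is that you prove the determinant-to-Loewner step (via the eigenvalues of $\bm{\widetilde{V}}^{-1/2}\bm{A}\bm{\widetilde{V}}^{-1/2}$) and the SCFD dominance (by induction over truncations) yourself, whereas the paper just cites Section B.2 of \cite{FedLinUCB} and Proposition 2 of \cite{SOFUL}.
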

\begin{proof}
According to Section B.2 in \cite{FedLinUCB}, it is obciously that
\[
\bm{\widetilde{V}}_{m,t} \succeq \frac{1}{\alpha}\left((\bm{S}_{m,t}^{loc})^{\top}\bm{S}_{m,t}^{loc}+\rho_{m,t}^{loc}\bm{I}\right),
\]
\[ 
\bm{\widetilde{V}}_{t}^{ser} \succeq \bm{\widetilde{V}}_{m,t} \succeq \frac{1}{\alpha}\left((\bm{S}_{m,t}^{loc})^{\top}\bm{S}_{m,t}^{loc}+\rho_{m,t}^{loc}\bm{I}\right).
\]

For the second inequality in lemma
\[
\bm{\widetilde{V}}_{t}^{ser}\succeq \frac{1}{M\alpha}\sum_{m'=1}^{M}\left((\bm{S}_{m',t}^{loc})^{\top}\bm{S}_{m',t}^{loc}+\rho_{m',t}^{loc}\bm{I}\right),
\]

and
\begin{align}
\bm{\widetilde{V}}_{m,t}
&=\bm{\widetilde{V}}_{t}^{ser}\\
&\succeq \frac{1}{1+M\alpha}\left(\bm{\widetilde{V}}_{t}^{ser}+ \sum_{m'=1}^{M}\left((\bm{S}_{m',t}^{loc})^{\top}\bm{S}_{m',t}^{loc}+\rho_{m',t}^{loc}\bm{I}\right)\right)\\
&\succeq \frac{1}{1+M\alpha}\left(\lambda\bm{I}+\sum_{m=1}^{M}\bm{\Sigma}_{m,t}^{up}+ \sum_{m'=1}^{M}\left((\bm{S}_{m',t}^{loc})^{\top}\bm{S}_{m',t}^{loc}+\rho_{m',t}^{loc}\bm{I}\right)\right)\\
&\succeq \frac{1}{1+M\alpha}\left(\lambda\bm{I}+\sum_{m=1}^{M}\bm{\Sigma}_{m,t}^{up}+ \sum_{m'=1}^{M}\bm{\Sigma}_{m,t}^{loc}\right)\\
&= \frac{1}{1+M\alpha}\bm{\Sigma}_t^{all}.
\end{align}

The last inequality is because Proposition 2 in \cite{SOFUL}.
\end{proof}

\subsection{Proof of Lemma 4.1}
\begin{lemma}
\label{federated sketch confidence bound}
(Lemma 4.1 and Lemma 5.1 in Section 4,5)
With probability at least $1-\delta$, for each $t$, the estimate satisfies that 
\[
\left \|\hat{\bm{\theta}}_{m,t+1}-\bm{\theta}^*  \right \|_{\bm{\widetilde{V}}_{m,t+1}}\leq \beta_{m,t}(\delta) \leq \beta.
\]
\end{lemma}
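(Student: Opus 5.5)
We split the error $\hat{\bm{\theta}}_{m,t+1}-\bm{\theta}^*$ into a noise part and a bias part created by the sketch and the regularization. Since $\hat{\bm{\theta}}_{m,t+1}=(\bm{\widetilde{V}}_{t}^{ser})^{-1}\bm{b}_{t}^{ser}$ at the last communication time (and is frozen otherwise), we write $\bm{b}^{ser}=\sum_{m'}\bm{b}_{m',t}^{up}=\sum_{m'}\bm{\Sigma}_{m',t}^{up}\bm{\theta}^*+\sum_{m'}\bm{y}_{m',t}^{up}$. Setting $\bm{G}=\lambda\bm{I}+\sum_{m'}\bm{\Sigma}^{up}_{m',t}$, we get
\[
\hat{\bm{\theta}}-\bm{\theta}^*=\bm{\widetilde{V}}^{-1}\sum_{m'}\bm{y}^{up}_{m',t}-\bm{\widetilde{V}}^{-1}\left(\bm{\widetilde{V}}-\bm{G}\right)\bm{\theta}^*-\lambda\bm{\widetilde{V}}^{-1}\bm{\theta}^* .
\]
By Lemma \ref{FL SCFD inequality}, $\bm{\widetilde{V}}\succeq \bm{G}$, and the difference $\bm{\widetilde{V}}-\bm{G}$ is dominated by the sketch surplus $(\widetilde{\bm{S}}^{ser})^{\top}\widetilde{\bm{S}}^{ser}+\Delta\bm{I}-\sum\bm{\Sigma}^{up}$. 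We bound the $\bm{\widetilde{V}}$-norm of each of the three terms separately.

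\textbf{Bias terms.} The $\lambda$-term gives $\sqrt{\lambda}S$, since $\|\lambda\bm{\widetilde{V}}^{-1}\bm{\theta}^*\|_{\bm{\widetilde{V}}}\le\lambda\|\bm{\theta}^*\|_{\bm{\widetilde{V}}^{-1}}\le\sqrt{\lambda}S$. For the sketch bias, $\|\bm{\widetilde{V}}^{-1}\bm{E}\bm{\theta}^*\|_{\bm{\widetilde{V}}}=\|\bm{E}\bm{\theta}^*\|_{\bm{\widetilde{V}}^{-1}}\le \|\bm{E}\|_2 S/\sqrt{\lambda}$ with $\bm{E}=\bm{\widetilde{V}}-\bm{G}$. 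By Lemma \ref{FL SCFD inequality} and the FD/SCFD spectral guarantee, $\|\bm{E}\|_2\le \Delta_{m,t}$: SCFD over-approximates by at most the accumulated truncation, which is exactly $\rho^{ser}+\widetilde{\rho}^{ser}=\Delta$. This gives the term $\sqrt{\Delta_{m,t}/\lambda}\,S$, after using $\bm{\widetilde{V}}\succeq(\lambda+\Delta)\bm{I}$ to sharpen $\Delta/\sqrt{\lambda}$ into $\sqrt{\Delta}\cdot\sqrt{\Delta/(\lambda+\Delta)}$-type bounds. This explains the form $(\sqrt{\lambda}+\sqrt{\Delta_{m,t}/\lambda})S$ in (\ref{eq:confidence bound}).

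\textbf{Noise term.} We follow FedLinUCB's Section C. We need $\|\sum_{m'}\bm{y}^{up}_{m',t}\|_{\bm{\widetilde{V}}^{-1}}$, but the self-normalized martingale bound of \cite{OFUL} controls only $\|\bm{y}^{all}_s\|_{(\bm{\Sigma}^{all}_s)^{-1}}$ for the full sequence. We decompose $\sum_{m'}\bm{y}^{up}_{m'}=\bm{y}^{all}_t-\sum_{m'}\bm{y}^{loc}_{m',t}$. For the first piece, we use Lemma \ref{sketch covariance comparison}, in the form $\bm{\widetilde{V}}^{ser}\succeq \frac{1}{1+M\alpha}\bm{\Sigma}^{all}$ at communication times, to get $\|\bm{y}^{all}\|_{\bm{\widetilde{V}}^{-1}}\le\sqrt{1+M\alpha}\,\|\bm{y}^{all}\|_{(\bm{\Sigma}^{all})^{-1}}$. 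For each local piece, $\bm{y}^{loc}_{m'}=\bm{y}^{all}_{t_2}-\bm{y}^{all}_{t_1}$ restricted to $m'$'s rounds; here we use $\bm{\widetilde{V}}\succeq\frac1\alpha(\bm{S}^{loc\top}\bm{S}^{loc}+\rho^{loc}\bm{I})\succeq\frac1\alpha\bm{\Sigma}^{loc}_{m'}$ (SOFUL Prop.~2) together with a per-agent self-normalized bound. This yields $\sqrt{2\alpha}$ per agent, hence $M\sqrt{2\alpha}$ in total. Combining with the \cite{OFUL} bound $R\sqrt{d\log((1+TL^2/(\widetilde{\alpha}\lambda))/\delta)}+\sqrt\lambda S$ and a union bound gives the $\widetilde{M}$ factor. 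For agents that have not communicated since, we use Lemma \ref{Monotonicity in Federated Learning} and the fact that $\hat{\bm{\theta}}_{m,t+1},\bm{\widetilde{V}}_{m,t+1}$ are copies of server quantities at $N_{m,t}$, so the bound at that time transfers.

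\textbf{Bound by $\beta$.} It remains to show $\beta_{m,t}\le\beta$. By Lemma \ref{double sketch error}, $\Delta_{m,t}\le\lambda(B_c+1)\varepsilon_l$. By the communication bound, $B_c\lesssim d(M+1/\alpha)\log(\cdot)$, so $\sqrt{\Delta/\lambda}\le\widetilde{O}(\sqrt{(M+1/\alpha)d\,\varepsilon_l})$. Also $\widetilde{M}=O(1+M\sqrt\alpha)$, which gives the stated $\beta$.

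\textbf{Main obstacle.} The hardest step is the noise decomposition. The per-agent local self-normalized terms must be compared against sketch-based rather than exact matrices, and the comparison $\bm{\widetilde{V}}\succeq\frac1\alpha\bm{\Sigma}^{loc}$ only holds through the SCFD over-approximation property. I would first establish $\bm{S}^{loc\top}\bm{S}^{loc}+\rho^{loc}\bm{I}\succeq\bm{\Sigma}^{loc}$ carefully, then replay the FedLinUCB argument verbatim with sketched matrices.
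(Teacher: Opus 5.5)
Your argument follows the paper's route. It uses the same split of $\hat{\bm{\theta}}_{m,t+1}-\bm{\theta}^*$ into a bias part and the noise part $\sum_{m'}\bm{y}^{up}_{m',t}=\bm{y}^{all}_t-\sum_{m'}\bm{y}^{loc}_{m',t}$. It uses the same comparisons $\bm{\widetilde{V}}_{m,t+1}\succeq\frac{1}{1+M\alpha}\bm{\Sigma}^{all}_t$ and $2\alpha\bm{\widetilde{V}}_{m,t+1}\succeq\alpha\lambda\bm{I}+\bm{\Sigma}^{loc}_{m',t}$, which give the factors $\sqrt{1+M\alpha}$ and $\sqrt{2\alpha}$. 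And it uses the same control $\Delta_{m,t+1}\le\lambda(B_c+1)\varepsilon_l$ through Lemma~\ref{double sketch error} and the communication count. The noise part is fine. One correction: the union bound (over agents and over the random start times $N_{m',t}+1$ of the local blocks) affects only the logarithm; $\widetilde{M}$ comes simply from adding the $1+M$ pieces.

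The step that does not go through is the sketch-bias term. Your estimate actually yields $\|\bm{E}\bm{\theta}^*\|_{\bm{\widetilde{V}}^{-1}}\le\Delta S/\sqrt{\lambda+\Delta}\le\sqrt{\Delta}\,S$. (The bound $\bm{E}\preceq\Delta\bm{I}$ itself needs $(\widetilde{\bm{S}}^{ser}_t)^{\top}\widetilde{\bm{S}}^{ser}_t\preceq\sum_{m'}\bm{\Sigma}^{up}_{m',t}$, which holds by induction since each truncation only shrinks a sketch.)

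More cleanly, your two bias pieces combine into $\bm{\widetilde{V}}^{-1}\bm{F}\bm{\theta}^*$ with $\bm{F}=\bm{\widetilde{V}}-\sum_{m'}\bm{\Sigma}^{up}_{m',t}$. Here $\lambda\bm{I}\preceq\bm{F}\preceq(\lambda+\Delta)\bm{I}$ and $\bm{F}\preceq\bm{\widetilde{V}}$, so $\bm{F}\bm{\widetilde{V}}^{-1}\bm{F}\preceq\bm{F}$ and the bias is at most $\sqrt{\lambda+\Delta}\,S$.

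No sharpening turns this into $(\sqrt{\lambda}+\sqrt{\Delta/\lambda})S$ when $\lambda>1$. Suppose $\bm{\theta}^*$, with $\|\bm{\theta}^*\|_2=S$, points in a direction that was never played, so neither $\sum_{m'}\bm{\Sigma}^{up}_{m',t}$ nor any sketch sees it. Then $\bm{F}\bm{\theta}^*=\bm{\widetilde{V}}\bm{\theta}^*=(\lambda+\Delta)\bm{\theta}^*$, the bias equals exactly $\sqrt{\lambda+\Delta}\,S$, and this exceeds $(\sqrt{\lambda}+\sqrt{\Delta/\lambda})S$ as soon as $\Delta>4\lambda^2/(\lambda-1)^2$.

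So your proof gives the lemma only for $\lambda\le 1$, where $\sqrt{\Delta}\le\sqrt{\Delta/\lambda}$. Alternatively, it gives the lemma with $\sqrt{\Delta_{m,t}}$ replacing $\sqrt{\Delta_{m,t}/\lambda}$ in $\beta_{m,t}(\delta)$. Either repair leaves the order of $\beta$ unchanged for constant $\lambda$.

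The paper's proof has the same defect at this point. Its two-sided comparison only yields $\|\sum_{m'}\bm{\Sigma}^{up}_{m',t}-(\bm{\widetilde{V}}_{m,t+1}-\lambda\bm{I})\|_2\le\Delta_{m,t+1}$, yet it then writes $(\sqrt{\Delta_{m,t+1}}+\lambda)S/\sqrt{\lambda}$.
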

\begin{proof}
Because $\hat{\bm{\theta}}_{m,t},\bm{\widetilde{V}}_{m,t}$ update when $m$ communicated with server, then the proof is equal to proof that
\[
\left \|\hat{\bm{\theta}}_{m,N_{m_t,t}+1}-\bm{\theta}^*  \right \|_{\bm{\widetilde{V}}_{m,N_{m_t,t}+1}}\leq \beta_{m,t}(\delta) \leq \beta.
\]

Thus, we only need to consider for those round $t$ where user $m$ communicates with the server. 
\begin{align}
\left \|\hat{\bm{\theta}}_{m,t+1}-\bm{\theta}^*  \right \|_{\bm{\widetilde{V}}_{m,t+1}}^2
&=\left(\hat{\bm{\theta}}_{m,t+1}-\bm{\theta}^*\right)^{\top}\bm{\widetilde{V}}_{m,t+1}\left(\hat{\bm{\theta}}_{m,t+1}-\bm{\theta}^*\right)\\
&=\left(\hat{\bm{\theta}}_{m,t+1}-\bm{\theta}^*\right)^{\top}\bm{\widetilde{V}}_{m,t+1}\left(\bm{\widetilde{V}}_{m,t+1}^{-1}\sum_{m'=1}^{M}\bm{b}_{m',t}^{up}-\bm{\theta}^*\right)\\
&=\left(\hat{\bm{\theta}}_{m,t+1}-\bm{\theta}^*\right)^{\top}\bm{\widetilde{V}}_{m,t+1}\left(\bm{\widetilde{V}}_{m,t+1}^{-1}\sum_{m'=1}^{M}(\bm{\Sigma}_{m',t}^{up}\bm{\theta}^*+\bm{y}_{m',t}^{up})-\bm{\theta}^*\right)\\
&=\left(\hat{\bm{\theta}}_{m,t+1}-\bm{\theta}^*\right)^{\top}\bm{\widetilde{V}}_{m,t+1}\left(\bm{\widetilde{V}}_{m,t+1}^{-1}\sum_{m'=1}^{M}\bm{\Sigma}_{m',t}^{up}\bm{\theta}^*-\bm{\theta}^*\right)+\left(\hat{\bm{\theta}}_{m,t+1}-\bm{\theta}^*\right)^{\top}\sum_{m'=1}^{M}\bm{y}_{m',t}^{up}\\
&=\underset{(A)}{\underbrace{\left(\hat{\bm{\theta}}_{m,t+1}-\bm{\theta}^*\right)^{\top}\left(\sum_{m'=1}^{M}\bm{\Sigma}_{m',t}^{up}-\bm{\widetilde{V}}_{m,t+1}\right)\bm{\theta}^*}}+\underset{(B)}{\underbrace{\left(\hat{\bm{\theta}}_{m,t+1}-\bm{\theta}^*\right)^{\top}\sum_{m'=1}^{M}\bm{y}_{m',t}^{up}}}. 
\end{align}

For term (B)
\begin{align}
\left(\hat{\bm{\theta}}_{m,t+1}-\bm{\theta}^*\right)^{\top}\sum_{m'=1}^{M}\bm{y}_{m',t}^{up}
&=\left(\hat{\bm{\theta}}_{m,t+1}-\bm{\theta}^*\right)^{\top}\sum_{m'=1}^{M}(\bm{y}_{m',t}^{up}+\bm{y}_{m',t}^{loc})-\left(\hat{\bm{\theta}}_{m,t+1}-\bm{\theta}^*\right)^{\top}\sum_{m'=1}^{M}\bm{y}_{m',t}^{loc}\\
&=\left(\hat{\bm{\theta}}_{m,t+1}-\bm{\theta}^*\right)^{\top}\bm{y}_{t}^{all}-\left(\hat{\bm{\theta}}_{m,t+1}-\bm{\theta}^*\right)^{\top}\sum_{m'=1}^{M}\bm{y}_{m',t}^{loc}.
\end{align}

Then
\begin{align}
\left |\left(\hat{\bm{\theta}}_{m,t+1}-\bm{\theta}^*\right)^{\top}\sum_{m'=1}^{M}\bm{y}_{m',t}^{up}\right |
&\leq \left |\left(\hat{\bm{\theta}}_{m,t+1}-\bm{\theta}^*\right)^{\top}\bm{y}_{t}^{all}\right |+\left |\left(\hat{\bm{\theta}}_{m,t+1}-\bm{\theta}^*\right)^{\top}\sum_{m'=1}^{M}\bm{y}_{m',t}^{loc}\right |\\ \nonumber
&\leq \left \| \hat{\bm{\theta}}_{m,t+1}-\bm{\theta}^* \right \|_{\bm{\Sigma}_t^{all}} \left \| \bm{y}_{t}^{all} \right \|_{(\bm{\Sigma}_t^{all})^{-1}}\\ 
&+\sum_{m'=1}^{M}\left \| \hat{\bm{\theta}}_{m,t+1}-\bm{\theta}^* \right \|_{\alpha\lambda\bm{I}+\bm{\Sigma}_{m',t}^{loc}} \left \| \bm{y}_{m',t}^{loc} \right \|_{(\alpha\lambda\bm{I}+\bm{\Sigma}_{m',t}^{loc})^{-1}}.
\end{align}

The second inequality is because Lemma \ref{tech1}.

According to the Proposition 2 in \cite{SOFUL}, for any user $m$
\begin{align}
\bm{\Sigma}_{m,t}^{loc}\preceq (\bm{S}_{m,i}^{loc})^{\top}\bm{S}_{m,i}^{loc}+ \rho_{m,t}^{loc}\bm{I}.
\end{align}

Furthermore
\begin{align}
\Delta_{m,t+1}=\rho_t^{ser}+\widetilde{\rho}_t^{ser}\leq (B_c+1)\varepsilon_l\leq R_1\lambda(M+1/\alpha)d\varepsilon_l\log (1+\varepsilon_l)T,
\end{align}

where $R_1$ is a sufficient large constant. Then, combining with Lemma \ref{sketch covariance comparison}, we have
\begin{align}
\frac{\left \| \hat{\bm{\theta}}_{m,t+1}-\bm{\theta}^* \right \|_{\bm{\Sigma}_t^{all}}}{\left \| \hat{\bm{\theta}}_{m,t+1}-\bm{\theta}^* \right \|_{\bm{\widetilde{V}}_{m,t+1}}}\leq \sqrt{1+M\alpha},
\end{align}

and
\begin{align}
\frac{\left \| \hat{\bm{\theta}}_{m,t+1}-\bm{\theta}^* \right \|_{\alpha\lambda\bm{I}+\bm{\Sigma}_{m',t}^{loc}}}{\left \| \hat{\bm{\theta}}_{m,t+1}-\bm{\theta}^* \right \|_{\bm{\widetilde{V}}_{m,t+1}}}
&\leq \sqrt{\frac{\left \| \hat{\bm{\theta}}_{m,t+1}-\bm{\theta}^* \right \|_{(\bm{S}_{m,t+1}^{loc})^{\top}\bm{S}_{m,t+1}^{loc}+\rho_{m,t+1}^{loc}\bm{I}}^2+\alpha\lambda\left \| \hat{\bm{\theta}}_{m,t+1}-\bm{\theta}^* \right \|_2^2}{\left \| \hat{\bm{\theta}}_{m,t+1}-\bm{\theta}^* \right \|_{\bm{\widetilde{V}}_{m,t+1}}^2}}\\
&\leq \sqrt{\frac{\alpha\left \| \hat{\bm{\theta}}_{m,t+1}-\bm{\theta}^* \right \|_{\bm{\widetilde{V}}_{m,t+1}}^2+\alpha\lambda\left \| \hat{\bm{\theta}}_{m,t+1}-\bm{\theta}^* \right \|_2^2}{\left \| \hat{\bm{\theta}}_{m,t+1}-\bm{\theta}^* \right \|_{\bm{\widetilde{V}}_{m,t+1}}^2}}\\
&\leq \sqrt{\alpha+\frac{\alpha\lambda\left \| \hat{\bm{\theta}}_{m,t+1}-\bm{\theta}^* \right \|_2^2}{\left \| \hat{\bm{\theta}}_{m,t+1}-\bm{\theta}^* \right \|_{\lambda\bm{I}}^2}}\\
&=\sqrt{2\alpha}.
\end{align}

According to \cite{OFUL} and \cite{FedLinUCB}, with probability at least $1-\frac{\delta}{T^2M}$, we have
\begin{align}
\left \|\bm{y}_{t}^{all} \right \|_{(\bm{\Sigma}_t^{all})^{-1}}
&\leq 2R\sqrt{\ln\left (\frac{det(\bm{\Sigma}_t^{all})}{det(\lambda\bm{I})}\right )+\ln(\frac{1}{\delta})}\\
&\leq 2R\sqrt{d\log{\left (\frac{1+TL^2/(\alpha\lambda)}{\delta}\right )}}+\sqrt{\lambda}S,
\end{align}

and
\begin{align}
\left \| \bm{y}_{m',t}^{loc} \right \|_{(\alpha\lambda\bm{I}+\bm{\Sigma}_{m',t}^{loc})^{-1}}\leq 2R\sqrt{d\log{\left (\frac{1+TL^2/(\alpha\lambda)}{\delta}\right )}}+\sqrt{\lambda}S.
\end{align}

Define $\widetilde{\alpha}=\min\{\alpha,1\}$. Therefore, term (B) is bounded by
\begin{align}
&\left |\left(\hat{\bm{\theta}}_{m,t+1}-\bm{\theta}^*\right)^{\top}\sum_{m'=1}^{M}\bm{y}_{m',t}^{up}\right |\\
&\leq \left(\sqrt{1+M\alpha}+ M\sqrt{2\alpha}\right)\left ( 2R\sqrt{d\log{\left (\frac{1+TL^2/(\widetilde{\alpha}\lambda)}{\delta}\right )}}+\sqrt{\lambda}S  \right )\left \| \hat{\bm{\theta}}_{m,t+1}-\bm{\theta}^* \right \|_{\bm{\widetilde{V}}_{m,t+1}}.
\end{align}

For term (A), by Cauchy-Schwartz inequality and the triangle inequality, we have
\begin{align}
&\left |\left(\hat{\bm{\theta}}_{m,t+1}-\bm{\theta}^*\right)^{\top}\left(\sum_{m'=1}^{M}\bm{\Sigma}_{m',t}^{up}-\bm{\widetilde{V}}_{m,t+1}\right)\bm{\theta}^* \right |\\
&\leq \left \|\hat{\bm{\theta}}_{m,t+1}-\bm{\theta}^*\right \|_{\bm{\widetilde{V}}_{m,t+1}} \left \|\left(\sum_{m'=1}^{M}\bm{\Sigma}_{m',t}^{up}-\bm{\widetilde{V}}_{m,t+1}\right)\bm{\theta}^* \right \|_{\bm{\widetilde{V}}_{m_t,t}^{-1}}\\
&\leq \frac{1}{\sqrt{\lambda}}\left \|\hat{\bm{\theta}}_{m,t+1}-\bm{\theta}^*\right \|_{\bm{\widetilde{V}}_{m,t+1}} \left \|\left(\sum_{m'=1}^{M}\bm{\Sigma}_{m',t}^{up}-\bm{\widetilde{V}}_{m,t+1}\right)\right \|_{\bm{\widetilde{V}}_{m,t+1}^{-1}}\left \|\bm{\theta}^* \right \|_{\bm{\widetilde{V}}_{m,t+1}^{-1}}\\
&\leq \frac{1}{\sqrt{\lambda}} \left \|\bm{\theta}^* \right \|_{2}\left \|\hat{\bm{\theta}}_{m,t+1}-\bm{\theta}^*\right \|_{\bm{\widetilde{V}}_{m,t+1}} \left \|\sum_{m'=1}^{M}\bm{\Sigma}_{m',t}^{up}-\bm{\widetilde{V}}_{m,t+1} \right \|_{2}\\
&\leq \frac{S}{\sqrt{\lambda}}\left \|\hat{\bm{\theta}}_{m,t+1}-\bm{\theta}^*\right \|_{\bm{\widetilde{V}}_{m,t+1}} \left (\left \|\sum_{m'=1}^{M}\bm{\Sigma}_{m',t}^{up}-(\bm{\widetilde{V}}_{m,t+1}-\lambda \bm{I}) \right \|_{2}+\lambda\right ).
\end{align}

We have
\begin{align}
\bm{\widetilde{V}}_{m,t+1}-\lambda \bm{I} \preceq \sum_{m'=1}^{M}\bm{\Sigma}_{m',t}^{up}\preceq \bm{\widetilde{V}}_{m,t+1}-\lambda \bm{I}+\Delta_{m,t+1}\bm{I}, \forall m\in [M].
\end{align}

Then
\begin{align}
&\left |\left(\hat{\bm{\theta}}_{m,t+1}-\bm{\theta}^*\right)^{\top}\left(\sum_{m'=1}^{M}\bm{\Sigma}_{m',t}^{up}-\bm{\widetilde{V}}_{m,t+1}\right)\bm{\theta}^* \right |\\
&\leq \frac{(\sqrt{\Delta_{m,t+1}}+\lambda)S}{\sqrt{\lambda}}\left \|\hat{\bm{\theta}}_{m,t+1}-\bm{\theta}^*\right \|_{\bm{\widetilde{V}}_{m,t+1}}.
\end{align}

Combining term (A) and (B), we get
\begin{align}
\left \|\hat{\bm{\theta}}_{m,t}-\bm{\theta}^*  \right \|_{\bm{\widetilde{V}}_{m,t}}
&\leq \left (\sqrt{1+M\alpha}+M\sqrt{2\alpha}\right )\left ( R\sqrt{d\log{\left (\frac{1+TL^2/(\widetilde{\alpha}\lambda)}{\delta}\right )}}+\sqrt{\lambda}S \right )\\
&+(\sqrt{\lambda}+\sqrt{\frac{\Delta_{m,t+1}}{\lambda}})S\\ \nonumber
&=\beta_{m,t}(\delta).
\end{align}

Let
\begin{align}
\beta
&=\left (\sqrt{1+M\alpha}{+}M\sqrt{2\alpha}\right )\left ( R\sqrt{d\log{\left (\frac{1+TL^2/(\widetilde{\alpha}\lambda)}{\delta}\right )}}{+}\sqrt{\lambda}S \right ){+}(\sqrt{R_1(M+1/\alpha)d\varepsilon_l\log (1+\varepsilon_l)T}{+}\sqrt{\lambda})S.
\end{align}

Then
\[
\beta_{m,t}(\delta)\leq \beta=\widetilde{O}\left((1+M\sqrt{\alpha}+\sqrt{(1/\alpha+M)\varepsilon_l})\sqrt{d}\right).
\]
\end{proof}
\begin{lemma}
\label{per-round regret}
For per-round regret, we have
\[
\max_{\bm{x}\in D_t}\left \langle \bm{\theta}^*,\bm{x}\right \rangle-\left \langle \bm{\theta}^*,\bm{x}_t \right \rangle \leq 2\beta \left \|\bm{x}_t \right \|_{\bm{\widetilde{V}}_{m_t,t}^{-1}}.
\]
\end{lemma}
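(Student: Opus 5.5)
This is the standard optimism argument for UCB. We work on the event of probability at least $1-\delta$ from Lemma \ref{federated sketch confidence bound}. On that event, $\|\hat{\bm{\theta}}_{m,t}-\bm{\theta}^*\|_{\bm{\widetilde{V}}_{m,t}}\leq \beta_{m,t}(\delta)\leq\beta$ for every $m\in[M]$ and $t\in[T]$. The index $m_t$ at round $t$ is then covered, since the statement holds uniformly over users and rounds.

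Fix $t$ and write $m=m_t$. Let $\bm{x}_t^*=\arg\max_{\bm{x}\in D_t}\langle\bm{\theta}^*,\bm{x}\rangle$. For any $\bm{x}$, Lemma \ref{tech1} with $\bm{A}=\bm{\widetilde{V}}_{m,t}$ (positive definite because $\bm{\widetilde{V}}_{m,t}\succeq\lambda\bm{I}$) gives
\[
|\langle \hat{\bm{\theta}}_{m,t}-\bm{\theta}^*,\bm{x}\rangle|\leq \beta_{m,t}(\delta)\|\bm{x}\|_{\bm{\widetilde{V}}_{m,t}^{-1}}.
\]
Hence $\langle\bm{\theta}^*,\bm{x}\rangle\leq \mathrm{UCB}_t(\bm{x}):=\langle\hat{\bm{\theta}}_{m,t},\bm{x}\rangle+\beta_{m,t}(\delta)\|\bm{x}\|_{\bm{\widetilde{V}}_{m,t}^{-1}}$, and also $\langle\hat{\bm{\theta}}_{m,t},\bm{x}\rangle-\langle\bm{\theta}^*,\bm{x}\rangle\leq\beta_{m,t}(\delta)\|\bm{x}\|_{\bm{\widetilde{V}}_{m,t}^{-1}}$.

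Line 3 of Algorithm \ref{alg:FSCLB} selects $\bm{x}_t$ to maximize $\mathrm{UCB}_t$ over $D_t$, and it uses exactly $\beta_{m_t,t}(\delta)$ and $\bm{\widetilde{V}}_{m_t,t}$. This gives the chain
\[
\langle\bm{\theta}^*,\bm{x}_t^*\rangle\leq \mathrm{UCB}_t(\bm{x}_t^*)\leq\mathrm{UCB}_t(\bm{x}_t)=\langle\hat{\bm{\theta}}_{m,t},\bm{x}_t\rangle+\beta_{m,t}(\delta)\|\bm{x}_t\|_{\bm{\widetilde{V}}_{m,t}^{-1}}\leq\langle\bm{\theta}^*,\bm{x}_t\rangle+2\beta_{m,t}(\delta)\|\bm{x}_t\|_{\bm{\widetilde{V}}_{m,t}^{-1}}.
\]
Replacing $\beta_{m,t}(\delta)$ by the uniform bound $\beta$ gives the claim.

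The only delicate point is indexing. Lemma \ref{federated sketch confidence bound} is stated for $\hat{\bm{\theta}}_{m,t+1},\bm{\widetilde{V}}_{m,t+1}$ at communication rounds, whereas here we need it for the pair actually used at round $t$. Between communications, $\hat{\bm{\theta}}_{m,t}$, $\bm{\widetilde{V}}_{m,t}$ and $\beta_{m,t}$ are frozen (lines 25 and 30 of Algorithm \ref{alg:FSCLB}). So the pair used at round $t$ equals the pair produced at the last communication $N_{m,t}$, and the confidence bound carries over. Before any communication, $\hat{\bm{\theta}}_{m,1}=\bm 0$ and $\bm{\widetilde{V}}_{m,1}=\lambda\bm{I}$, so $\|\bm{\theta}^*\|_{\lambda\bm I}\leq\sqrt{\lambda}S\leq\beta_{m,1}(\delta)$. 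I expect this bookkeeping to be the only step that needs care; the rest is routine.
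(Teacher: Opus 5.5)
Your proof is correct and follows the same route as the paper, which only defers to Lemma B.2 of \cite{FedLinUCB}; that lemma is exactly this optimism chain: on the confidence event, bound $\langle\bm{\theta}^*,\bm{x}_t^*\rangle$ by the UCB index, use that $\bm{x}_t$ maximizes the index, apply the confidence bound again at $\bm{x}_t$, and replace $\beta_{m_t,t}(\delta)$ by the uniform bound $\beta$. Your bookkeeping remark (quantities frozen between communications, plus the initial round with $\hat{\bm{\theta}}_{m,1}=\bm{0}$) matches how the paper reduces its confidence lemma to communication rounds.
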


According to the proof of Lemma B.2 in \cite{FedLinUCB}, this lemma is obviously.
\begin{lemma}
\label{Sketched transform}
When the communication does not happened, it holds that
\[
\left \|\bm{x}_t \right \|_{\bm{\widetilde{V}}_{m_t,t}^{-1}} \leq \sqrt{1+M\alpha}\left \|\bm{x}_t \right \|_{(\bm{\Sigma}_{t}^{all})^{-1}}.
\]
\end{lemma}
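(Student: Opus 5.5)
We need to show that, at a round $t$ where the active user $m_t$ does not trigger communication, $\bm{\widetilde{V}}_{m_t,t}\succeq \frac{1}{1+M\alpha}\bm{\Sigma}_t^{all}$. The inequality in the statement then follows by inverting, since $\bm{A}\succeq c\bm{B}\succ 0$ implies $\bm{A}^{-1}\preceq c^{-1}\bm{B}^{-1}$, hence $\|\bm{x}_t\|_{\bm{\widetilde{V}}_{m_t,t}^{-1}}^2\leq (1+M\alpha)\|\bm{x}_t\|_{(\bm{\Sigma}_t^{all})^{-1}}^2$. (If the intended matrix is $\bm{\Sigma}_{t-1}^{all}$, the same argument applies verbatim with $t-1$ in place of $t$.) I would follow the FedLinUCB proof of their Lemma B.3, which is already mirrored in the second part of Lemma \ref{sketch covariance comparison}, but without the restriction that only one user is active.

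\emph{Step 1: lower-bound $\bm{\Sigma}_t^{all}$ by sketched quantities.} Decompose
\[
\bm{\Sigma}_t^{all}=\lambda\bm{I}+\sum_{m'=1}^{M}\bm{\Sigma}_{m',t}^{up}+\sum_{m'=1}^{M}\bm{\Sigma}_{m',t}^{loc}.
\]
Proposition 2 of \cite{SOFUL} gives $\bm{\Sigma}_{m',t}^{loc}\preceq(\bm{S}_{m',t}^{loc})^{\top}\bm{S}_{m',t}^{loc}+\rho_{m',t}^{loc}\bm{I}$, and Lemma \ref{FL SCFD inequality} handles the uploaded part. Together these give
\[
\bm{\Sigma}_t^{all}\preceq \bm{\widetilde{V}}_t^{ser}+\sum_{m'}\Big((\bm{S}_{m',t}^{loc})^{\top}\bm{S}_{m',t}^{loc}+\rho_{m',t}^{loc}\bm{I}\Big).
\]
Here I must be careful about direction. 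Lemma \ref{FL SCFD inequality} only gives $\bm{\widetilde V}^{ser}_t\succeq\lambda\bm I+\sum\bm\Sigma^{up}$, which is exactly the direction needed.

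\emph{Step 2: control each local term.} Since no communication occurred at round $t$, the trigger failed for $m_t$. For every other user $m'$, its last trigger check also failed, or it has just reset its local sketch. Following Section B.2 of \cite{FedLinUCB}, as already used in Lemma \ref{sketch covariance comparison}, this yields
\[
\bm{\widetilde{V}}_{m',t}\succeq\tfrac{1}{\alpha}\Big((\bm{S}_{m',t}^{loc})^{\top}\bm{S}_{m',t}^{loc}+\rho_{m',t}^{loc}\bm{I}\Big),
\]
and the monotonicity Lemma \ref{Monotonicity in Federated Learning} upgrades this to $\bm{\widetilde{V}}_t^{ser}\succeq\frac1\alpha(\cdot)$. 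Summing over $m'$ gives
\[
\sum_{m'}(\cdot)\preceq M\alpha\,\bm{\widetilde{V}}_t^{ser}.
\]

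\emph{Step 3: relate $\bm{\widetilde{V}}_t^{ser}$ to $\bm{\widetilde{V}}_{m_t,t}$. This is the main obstacle.} The server matrix may have grown through other users' uploads since $m_t$ last synchronized, so in general $\bm{\widetilde V}^{ser}_t\succeq\bm{\widetilde V}_{m_t,t}$, which is the wrong direction. My plan is to telescope as in FedLinUCB. The server's growth since $N_{m_t,t}$ consists of other users' uploaded sketches, each accumulated on top of a matrix dominating $\bm{\widetilde{V}}_{m_t,t}$. I would bound that growth by replacing $\bm{\widetilde V}^{ser}_t$ in Step 1 with $\bm{\widetilde V}_{m_t,t}$ plus all sketch increments (uploaded or local) gathered after $N_{m_t,t}$, grouped per user $m'$. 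For each $m'$, the increments since $m_t$'s last synchronization form the sketch of data collected after that time; because each check failed with respect to a $\bm{\widetilde V}_{m',\cdot}\succeq\bm{\widetilde V}_{m_t,t}$, the determinant/PSD form of the trigger should give that group $\preceq\alpha\bm{\widetilde V}_{m_t,t}$. Summing the $M$ groups then yields $\bm{\Sigma}_t^{all}\preceq(1+M\alpha)\bm{\widetilde V}_{m_t,t}$.

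The delicate point is converting the determinant trigger into a PSD domination. Here I would fall back on the same (somewhat loose) step used in Lemma \ref{sketch covariance comparison}, citing \cite{FedLinUCB}. SCFD monotonicity (Property 1 of \cite{SCFD}) guarantees that the sketched increments are nonnegative, so the grouping is valid.
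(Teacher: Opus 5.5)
Your Steps 1 and 2 are correct and match the paper's argument: together they give $\bm{\Sigma}_t^{all}\preceq(1+M\alpha)\widetilde{\bm{V}}_t^{ser}$ for every $t$. The gap is Step 3, and it cannot be closed, because without an extra hypothesis the statement is false.

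Your per-user grouping breaks in two places. First, the increments that $m'$ has \emph{uploaded} since $N_{m_t,t}$ are exactly those for which its trigger \emph{fired}, so nothing bounds them. Second, a failed check of $m'$ only gives an increment $\preceq\alpha\widetilde{\bm{V}}_{m',\cdot}$ with $\widetilde{\bm{V}}_{m',\cdot}\succeq\widetilde{\bm{V}}_{m_t,t}$, which is the wrong direction for concluding $\preceq\alpha\widetilde{\bm{V}}_{m_t,t}$.

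Here is a concrete counterexample. Take $M=2$, $\lambda=1$, $\alpha=1/2$, $d>l\geq 2$, and every decision set equal to $\{\bm{e}_1\}$. All data are rank one, so the sketches are exact and every truncation term vanishes.
\begin{itemize}
\item Round 1: user 1 plays and communicates ($2>1.5$), so $\widetilde{\bm{V}}_{1,2}=\bm{I}+\bm{e}_1\bm{e}_1^{\top}$.
\item Rounds $2,\dots,n+1$: user 2 plays, uploading repeatedly.
\item Round $t=n+2$: user 1 plays $\bm{e}_1$ again. Its check compares $\det(\bm{I}+2\bm{e}_1\bm{e}_1^{\top})=3$ with $(1+\alpha)\cdot 2=3$, so it does not communicate.
\end{itemize}
Yet $\|\bm{e}_1\|_{\widetilde{\bm{V}}_{1,t}^{-1}}=1/\sqrt{2}$, while $\sqrt{1+M\alpha}\,\|\bm{e}_1\|_{(\bm{\Sigma}_t^{all})^{-1}}=\sqrt{2/(n+3)}$, which is smaller as soon as $n\geq 2$.

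The paper relies on the hypothesis of the second part of Lemma~\ref{sketch covariance comparison}, which is built into the proof of Theorem 5.1: between consecutive communication rounds $t_i$ and $t_{i+1}$, only agent $m_{t_i}$ is active. At a non-communication round, $m_t$ is then the last agent to have talked to the server and nobody has communicated since. Hence $\widetilde{\bm{V}}_{m_t,t}=\widetilde{\bm{V}}_t^{ser}$, and your Steps 1 and 2 already give $\widetilde{\bm{V}}_{m_t,t}\succeq\frac{1}{1+M\alpha}\bm{\Sigma}_t^{all}$. Inverting finishes the proof, with no telescoping needed.

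Without that hypothesis, the most your argument gives is the comparison at $m_t$'s last synchronization: $\widetilde{\bm{V}}_{m_t,t}=\widetilde{\bm{V}}^{ser}_{N_{m_t,t}}\succeq\frac{1}{1+M\alpha}\bm{\Sigma}^{all}_{N_{m_t,t}}$. Passing from $\bm{\Sigma}^{all}_{N_{m_t,t}}$ to $\bm{\Sigma}^{all}_t$ then needs a determinant-ratio (epoch) argument inside the regret sum, not a PSD inequality.
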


This Lemma is because Lemma \ref{sketch covariance comparison}.
\subsection{Proof of Theorem 5.1}

We assume that the sequence of rounds that the active agent communicates with server is $0=t_0<t_1<...<t_N=T+1$, and from $t_i+1$ to $t_{i+1}-1$ there is only one agent active, i.e. $m_{t_i}=m_{t_i+1}=...=m_{t_{i+1}-1}$. According to Lemma \ref{per-round regret}, we can refined the regret as
\begin{align}
Regret(T)
&=\sum_{t=1}^{T}\left ( \max_{\bm{x}\in D_t} \left \langle \bm{\theta}^*,\bm{x} \right \rangle- \left \langle \bm{\theta}^*,\bm{x}_t \right \rangle \right )\\
&\leq \sum_{i=0}^{N-1}\sum_{t=t_i+1}^{t_{i+1}-1}2\beta \left \|\bm{x}_t \right \|_{\bm{\widetilde{V}}_{m_t,t}^{-1}}+\sum_{i=0}^{N-1}\min\{\max_{\bm{x}\in D_{t_i}} \left \langle \bm{\theta}^*,\bm{x} \right \rangle- \left \langle \bm{\theta}^*,\bm{x}_{t_i} \right \rangle,2\beta \left \|\bm{x}_{t_i} \right \|_{\bm{\widetilde{V}}_{m_{t_i},t_i}^{-1}}\}.
\end{align}

For a more refined analysis of the rounds $\{t_i\}_{i=1}^{N}$, we define
\[
T_i=min\{t|det(\bm{\Sigma}_{t}^{all})\geq 2^{i}\lambda^{d} \}.
\]

and let $N'$ be the largest integer such that $T_{N'}$ is not empty. For each time interval from $T_i$ to $T_{i+1}$
and each user $m\in [M]$, suppose $m$ communicates with the server more than once, where the communications occur at rounds $T_i\leq T_{i,1}<T_{i,2}<...<T_{i,j}<T_{i+1}$. Then
\begin{align}
2\beta \left \|\bm{x}_{T_{i,j}} \right \|_{\bm{\widetilde{V}}_{m,T_{i,j}}^{-1}}\leq 2\beta \left \|\bm{x}_{T_{i,j}} \right \|_{\bm{\widetilde{V}}_{m,T_{i,j-1}+1}^{-1}}\leq 2\beta\sqrt{1+M\alpha}\left \|\bm{x}_{T_{i,j}} \right \|_{(\bm{\Sigma}_{m,T_{i,j-1}+1}^{all})^{-1}}.
\end{align}

On the other hands, according to our define, we known that 
\[
det(\bm{\Sigma}_{m,T_{i+1}-1}^{all})/det(\bm{\Sigma}_{m,T_{i,j-1}+1}^{all})<2.
\]

Then, we have
\begin{align}
2\beta \left \|\bm{x}_{T_{i,j}} \right \|_{\bm{\widetilde{V}}_{m,T_{i,j}}^{-1}}
&\leq 2\beta\sqrt{1+M\alpha}\left \|\bm{x}_{T_{i,j}} \right \|_{(\bm{\Sigma}_{m,T_{i,j-1}+1}^{all})^{-1}}\\
&\leq 2\beta\sqrt{2(1+M\alpha)}\left \|\bm{x}_{T_{i,j}} \right \|_{(\bm{\Sigma}_{m,T_{i+1}-1}^{all})^{-1}}\\
&\leq 2\beta\sqrt{2(1+M\alpha)}\left \|\bm{x}_{T_{i,j}} \right \|_{(\bm{\Sigma}_{m,T_{i,j}}^{all})^{-1}}.
\end{align}

According to \cite{FedLinUCB} (Section 6.2), we known that
\[
\sum_{i=0}^{N-1}\min\{\max_{\bm{x}\in D_{t_i}} \left \langle \bm{\theta}^*,\bm{x} \right \rangle- \left \langle \bm{\theta}^*,\bm{x}_{t_i} \right \rangle,2\beta \left \|\bm{x}_{t_i} \right \|_{\bm{\widetilde{V}}_{m_{t_i},t_i}^{-1}}\}\leq 2SLMN'+\sum_{i=0}^{N-1}2\beta\sqrt{2(1+M\alpha)}\left \|\bm{x}_{t_i} \right \|_{(\bm{\Sigma}_{m_{t_i},t_i}^{all})^{-1}}.
\]

From Section 6.2 in \cite{FedLinUCB}, $N'\leq d\ln(1+TL^2/\lambda)$.

According to Lemma \ref{per-round regret} and Lemma \ref{Sketched transform}, we have
\[
\sum_{i=0}^{N-1}\sum_{t=t_i+1}^{t_{i+1}-1}2\beta \left \|\bm{x}_t \right \|_{\bm{\widetilde{V}}_{m_t,t}^{-1}}\leq \sum_{i=0}^{N-1}\sum_{t=t_i+1}^{t_{i+1}-1}2\beta \sqrt{2(1+M\alpha)} \left \|\bm{x}_t \right \|_{(\bm{\Sigma}_{m_t,t}^{all})^{-1}}.
\]

Then
\begin{align}
Regret(T)
&=\sum_{t=1}^{T}\left ( \max_{\bm{x}\in D_t} \left \langle \bm{\theta}^*,\bm{x} \right \rangle- \left \langle \bm{\theta}^*,\bm{x}_t \right \rangle \right )\\
&\leq 2dSLM\ln(1+TL^2/\lambda)+2\beta_t\sqrt{2(1+M\alpha)}\sum_{t=1}^{T}\left \|\bm{x}_t \right \|_{(\bm{\Sigma}_{m_t,t}^{all})^{-1}}\\
&\leq 2dSLM\ln(1+TL^2/\lambda)+2\beta\sqrt{2(1+M\alpha)}\sqrt{T\sum_{t=1}^{T}\left \|\bm{x}_t \right \|_{(\bm{\Sigma}_{m_t,t}^{all})^{-1}}^2}\\
&\leq 2dSLM\ln(1+TL^2/\lambda)+2\beta\sqrt{2(1+M\alpha)}\sqrt{T\left (2d\ln\left (1+\frac{\bar{\rho}_T}{\lambda} \right )+2l\ln\left (1+\frac{TL^2}{l\lambda} \right )\right )}\\
&\leq 2dSLM\ln(1+TL^2/\lambda)+2\beta\sqrt{2(1+M\alpha)}\sqrt{2lT\ln\left (1+\frac{TL^2}{l\lambda} \right )+2dT\ln\left (1+\varepsilon_l \right )},
\end{align}

where 
\begin{align}
\beta
&=\left (\sqrt{1{+}M\alpha}{+}M\sqrt{2\alpha}\right )\left ( R\sqrt{d\log{\left (\frac{1+TL^2/(\widetilde{\alpha}\lambda)}{\delta}\right )}}{+}\sqrt{\lambda}S \right ){+}(\sqrt{R_1(M+1/\alpha)d\varepsilon_l\log (1+\varepsilon_l)T}{+}\sqrt{\lambda})S\\
&=\widetilde{O}\left((1+M\sqrt{\alpha}+\sqrt{(1/\alpha+M)\varepsilon_l})\sqrt{d}\right).
\end{align}

Therefore
\[
Regret(T)\leq \widetilde{O}\left((1+M\sqrt{\alpha}+\sqrt{(1/\alpha+M)\varepsilon_l})(\sqrt{ld}+d\ln(1+\varepsilon_l))\sqrt{(1+M\alpha)T}\right).
\]

We define $r=rank(\bm{X}_T)$ is the rank of
covariance matrix (i.e. the selected-arm sequence), setting $l\geq r$ gives $\varepsilon_l=0$, and
\[
Regret(T)\leq 2dSLM\ln(1+TL^2/\lambda)+8\beta\sqrt{1+M\alpha}\sqrt{lT\ln\left (1+\frac{TL^2}{l\lambda} \right )}=\widetilde{O}\left ( (1+M\sqrt{\alpha}) \sqrt{(1+M\alpha)ldT}\right ).
\]

Furthermore, if we set $l= r$ with $\alpha=1/M^2$ additionally, we can get $Regret(T)=\widetilde{O}(\sqrt{rdT})$.

\end{document}